\documentclass[a4paper]{amsart}

\usepackage{array}
\usepackage{longtable}
\usepackage{booktabs}
\theoremstyle{plain}
\newtheorem{thm}{Theorem}[section]
\newtheorem{lem}{Lemma}
\newtheorem{cor}{Corollary}
\newtheorem{prop}{Proposition}
\newtheorem{con}{Conjecture}

\theoremstyle{definition}
\newtheorem{defn}{Definition}
\newtheorem{assum}{Assumption}

\theoremstyle{remark}
\newtheorem{rem}{Remark}

\usepackage[colorlinks=true,
            linkcolor=blue,
            citecolor=red,
            urlcolor=magenta]{hyperref}
            
\usepackage[msc-links]{amsrefs}   

\usepackage[nameinlink,noabbrev]{cleveref}  

\crefname{equation}{Eq.}{Eqs.}
\crefname{section}{Sec.}{Sections}
\crefname{subsection}{Sec.}{Sections}
\crefname{table}{Tab.}{Tables}
\crefname{figure}{Fig.}{Figures}
\crefname{thm}{Thm.}{Theorems}
\crefname{lem}{Lem.}{Lemmas}
\crefname{cor}{Cor.}{Corollaries}
\crefname{prop}{Prop.}{Propositions}
\crefname{con}{Conj.}{Conjectures}
\crefname{defn}{Def.}{Definitions}
\crefname{assum}{Assumption}{Assumptions}
\crefname{rem}{Rem.}{Remarks}
\crefname{ex}{Exa.}{Examples}
\crefname{enumi}{item}{items}

\title{Hierarchical Grading in Large Language Models}
\author{T. Shaska}
\address{Department of Computer Science and Engineering, \\Oakland University, Rochester, MI 48309}
\email{shaska@oakland.edu}

\subjclass[2020]{68T07, 68T50, 68Q32, 62C20, 16W50}

\keywords{graded vector spaces, graded transformers, large language models, hierarchical attention, sample complexity,   minimax lower bounds, inductive bias}

\begin{document}

\maketitle

\begin{abstract}
We introduce Graded Large Language Models (GLLMs), an algebraic framework that equips the representation space of a transformer with a grading and propagates the induced weighted scalar action through embeddings, self-attention, and the training objective. The construction extends the theory of graded neural networks and graded transformers to autoregressive language models while preserving expressive power, asymptotic computational complexity, and inference cost.

The governing geometric picture is that of geometric invariant theory. The benefit of a grading is expressed by a Kempf--Ness functional on the grading torus; the grades that improve upon the uniform architecture form an open convex cone whose membership is decided by a Hilbert--Mumford-type criterion pairing a grade direction against two measurable profiles of the target and the data; the optimal grades are the coincidence point of two moment maps, given in closed form; and the ordinary transformer appears as a semistable isotropic point on the boundary of the cone: one member of a larger graded family rather than a distinguished optimum.

Separately, for level-stratified targets we prove a minimax separation between the graded prior and its absence: over all estimators the risks of the graded and uniform target classes separate throughout an explicit window of sample sizes, by a factor that decays exponentially in the number of levels under geometric stratification. Both profiles are estimable offline, so the optimal grades solve a convex program certified before training begins. Because the grading is absorbed into the learned parameters after training, every GLLM compiles to a standard transformer of identical architecture and inference complexity.

We conclude with the grade-selection procedure for domains without canonical gradings, instantiated for natural language, and a pre-registered validation program whose empirical results will be reported in a companion manuscript.
\end{abstract}

\setcounter{tocdepth}{2}


\section{Introduction}
\label{sec-1}

Large Language Models (LLMs) have moved from statistical $n$-gram estimators to transformer systems of $O(10^{11})$ parameters trained on $O(10^{13})$ tokens, and they now generate relatively fluent text, perform multi-step tasks, and support code synthesis and scientific work; see \cite{radford2019language} among many other sources. This progress came almost entirely from scaling; see \cite{kaplan2020scaling}. The architecture behind it is uniform: every token occupies the same space, every coordinate of that space carries the same weight, and every attention head computes over a flat, isotropic geometry in which no direction is special.

Language, in contrast, is hierarchical. Subword units compose into words, words into phrases and clauses, clauses into sentences, sentences into discourse, and discourse into pragmatic inference. A model with no representation of this hierarchy must recover it from data. The cost of that recovery is the question that \cref{sec-2} is organised around.  The answer is given mainly  in two statements. 
The first is the classical conjecture that an architecture without a structural prior must discriminate among the $\Omega(n^L)$ parse trees a depth-$L$ grammar admits over inputs of length $n$, at a sample cost exponential in $L$ (\cref{con:compositional}). This conjecture motivated the graded program, but no result below depends on it. 
The second is weaker, can be measured on an annotated corpus without training a model, and is the hypothesis the framework rests on: in language, the features that decide hierarchical targets are not the features that carry the corpus variance (\cref{assum:profile-divergence}).

This work introduces Graded Large Language Models (GLLMs), which supply the missing prior through the algebra of the representation space rather than through new architectural components. The construction builds on a line of work on graded vector spaces in machine learning: artificial neural networks over graded spaces~\cite{2024-02}, graded neural networks with graded neurons, activations, and losses~\cite{sh-89}, and the Graded Transformer framework comprising the Linearly Graded Transformer (LGT) and the Exponentially Graded Transformer (EGT)~\cite{sh-95}. The object underlying all of them is a graded vector space $\mathbb{R}^n_{\mathbf{q}}$: a copy of $\mathbb{R}^n$ equipped with a grade tuple $\mathbf{q}=(q_0,\dots,q_{n-1})$ and the induced scalar action
\[
\lambda \star x = (\lambda^{q_0}x_0,\dots,\lambda^{q_{n-1}}x_{n-1}),\qquad\lambda>1.
\]
A grade tuple states which coordinates matter and by how much. Assigning grades to the dimensions of an embedding space and propagating the action through attention, embeddings, and the loss amplifies the subspaces that carry syntactic heads, semantic nuclei, and pragmatic signal, and reduces the weight of the rest. Nothing is removed: the grading changes the metric of the representation space and leaves the computation over it intact, so expressive power, parallelisability, and autoregressive decoding are all preserved. What changes is the geometry the model searches over.

The construction has three parts and a cost analysis. \Cref{sec-3} develops exponentially graded multi-head self-attention (EG-MHSA) within causal decoder-only stacks and establishes three facts that locate the advantage. Grading leaves the asymptotic cost of attention unchanged at $O(n^2d+nd^2)$ per layer (\cref{prop:complexity}). The grading matrix is absorbed into the projection weights by an invertible substitution (\cref{rem:absorption}), so a trained GLLM compiles to a standard transformer of identical architecture and carries its prior at zero deployment cost (\cref{cor:zero-overhead}); no variant in \cref{tab:arch} has this property, since recurrence and sparsity encode their bias in the computation and pay for it at every forward pass. At a fixed norm budget the graded constraint set is an ellipsoid whose axes are aligned with the grades and which strictly contains the uniform ball (\cref{prop:expressivity}), so the advantage cannot come from representability: a larger class at a common budget hurts generalization rather than helping it. Together these facts place the advantage in the norm required to represent a given target, and hence in the data required to learn it. \Cref{sec-4} introduces Multi-Level Graded Embeddings (MLGE), decomposing $\mathbb{R}^d=\bigoplus_{l=0}^{L-1}V_l$ into level-specific subspaces, and carries out that analysis. \Cref{sec-5} extends the grading to the training objective and characterizes exactly when it remains consistent for the corpus conditional (\cref{prop:properness}). \Cref{sec-6} states the program that will test the resulting predictions, and \cref{sec-7} assembles the results into the framework's main purpose: a procedure that selects, certifies, and prices grades in domains that supply none.

The analysis yields the paper's central quantity. Write $\alpha$ for the profile of the target's energy across the graded basis and $\tau$ for the profile of the data's variance (\cref{def:profiles}). Then the Rademacher-based sample complexity of the graded class stands to that of the uniform class in the ratio
\begin{align*}
\Lambda(g)
&=
\Bigl(\sum_{j=1}^d\alpha_j\,g_j^{-2}\Bigr)
\Bigl(\sum_{j=1}^d\tau_j\,g_j^{2}\Bigr),
\\
\min_{g\in\mathbb{R}_{>0}^d}\Lambda(g)
&=
\Bigl(\sum_{j=1}^d\sqrt{\alpha_j\tau_j}\Bigr)^2
=\mathrm{BC}(\alpha,\tau)^2\le 1,
\end{align*}
with equality exactly when $\alpha=\tau$ (\cref{prop:ml-expressivity}). The entire benefit of grading is therefore the Bhattacharyya affinity between where the target places its weight and where the data places its variance. Grading creates no information: it converts a correct prior about the target into reduced sample complexity at this rate, and an incorrect prior into increased sample complexity at the same rate. Under geometric stratification of the profiles the ratio decays exponentially in the number of levels (\cref{cor:stratified}). This is the only route by which any exponential-in-$L$ statement enters this work, and it enters as a hypothesis, not as a theorem.

\Cref{prop:admissible-grades} reads that quantity as a condition on the grades themselves, and it is the paper's main positive result. The grades at which the graded bound is strictly smaller form an open convex set, invariant under translation along $\mathbf{1}$, so restricting to non-negative rationals costs nothing. The set is entered, to first order, exactly when
\[
\langle\mathbf{q},\alpha-\tau\rangle>0,
\]
that is, high grades on coordinates where the target's energy exceeds the data's variance and low grades elsewhere. The set is nonempty precisely when $\alpha\neq\tau$, and the standard transformer, which sets $g=\mathbf{1}$, lies on its boundary: the gain there is exactly $1$ and the gradient is $2(\log\lambda)(\tau-\alpha)\neq 0$. Isotropy is therefore not a local optimum of the sample-complexity landscape. Its gauge orbit is the unique set of grades at which the gain equals $1$ for every corpus and every target---the choice that ignores the profiles---and a descending direction is available from it in the direction the two profiles determine. Both profiles can be estimated offline, so the sign of that inner product is checkable before any pretraining begins (\cref{rem:certified-init}). The optimal grades within the region are then the solution of a convex program rather than the result of a search (\cref{cor:grade-convexity}), projected to the clipped region admitted by \cref{prop:clipped-selection}. The gain attained at grades computed from estimated profiles degrades only to second order in the estimation error (\cref{lem:plugin-stability}): the gradient of the gain vanishes at the optimum, so the framework is protected exactly where protection is needed.

The geometry of this selection problem admits a natural interpretation in geometric invariant theory, developed in \cref{subsec:git}. The admissible set is an open convex cone; the criterion $\langle\mathbf{q},\alpha-\tau\rangle>0$ is a Hilbert--Mumford-type pairing that decides which directions descend from the uniform point; and the ordinary transformer sits on the boundary as a semistable point of the grading action. The detailed development is deferred to that subsection so that the statistical content can be stated first in elementary language.

Three facts fix the location of the claim. The advantage is not one of expressivity: by \cref{rem:absorption,rem:mlge-absorption} graded and ungraded models realise the same function class, and the content of grading lies in parameterization, initialisation, and implicit bias. It is also not one of optimisation: exponential rescaling degrades the smoothness constant by a factor $\lambda^{q_{\max}}\ge 1$, so the stationarity guarantee available for a graded model is weaker than for its ungraded counterpart (\cref{lem:graded-smoothness,thm:convergence}). This asymmetry is the substance of the claim, and both of its sides are priced by the same constant: at clip $\lambda^{q_{\max}}\le C$ the penalty in optimisation steps is at most $C$, uniformly in $L$ and in the profiles, while the saving in tokens is at most $C^2$ (\cref{rem:optimisation-cost}, \cref{prop:clipped-selection}). The price is linear in the clip and the purchase quadratic, so the trade is favourable at every setting, and the exponential regime of \cref{cor:stratified} is reached as the clip opens. Grading spends a cheap resource---compute per step---to save an expensive one---high-quality tokens. The advantage lives in the norm required to represent a given target, and hence in the data required to learn it, which is where \cref{prop:ml-expressivity,prop:admissible-grades} place it in closed form.

\Cref{prop:ml-expressivity} compares two upper bounds, for a graded embedding composed with a linear read-out. Since graded and ungraded models realise the same functions, the separation the framework claims is between the graded prior and its absence rather than between two hypothesis classes. What converts the comparison of bounds into that separation is a matching minimax lower bound over the uniform target class, with the estimator unrestricted. \Cref{thm:stratified-separation} proves it in the level-stratified regime of \cref{cor:stratified}: at squared loss and over all estimators, the ratio of minimax risks is $\Theta(\Lambda(g))$ throughout an explicit window of sample sizes, so in the regime the planned configurations occupy the separation is established. The general case is \cref{con:minimax-separation}, and \cref{subsec:settling} records what its proof requires; \cref{rem:sample-complexity-scope} records the scope. Two limitations are resolved in this manuscript---the adaptivity cost of estimated grades by \cref{lem:plugin-stability} and the lower bound in the stratified regime by \cref{thm:stratified-separation}---and each of the remaining ones names the tool its repair would require.

The costs are small and are computed rather than asserted. Grading preserves the asymptotic complexity of attention exactly (\cref{prop:complexity}). The training overhead is dominated by the level projections at $d^2$ parameters, giving $0.33\%$ at 345M and $0.24\%$ at 7B for the configurations of \cref{subsubsec:configs}; this follows from $d^2\ll N$ at those scales and must be recomputed at other scales. At deployment the overhead is zero: by \cref{cor:zero-overhead,rem:mlge-absorption} a trained GLLM compiles to a standard transformer of identical architecture and identical inference cost, so the training figure is an upper bound on the total lifetime cost of the prior.

The framework's governing hypothesis is settled before any model is trained, and the predictions are staged so that the cheapest test comes first. Both profiles can be estimated offline, so $\mathrm{BC}(\alpha,\tau)$ is reported per task in advance; a value near $1$ on the hierarchical tasks falsifies \cref{assum:profile-divergence}, and with it every conditional prediction, at the cost of two estimates. The same two estimates certify every candidate grading against the criterion above, one inner product apiece. Configurations, baselines, benchmarks, and the selection ladder of \cref{tab:ladder} follow, with arms of identical architecture and budget distinguished only by what their grades consulted and ordered in advance by their certified inner products; each prediction is marked as established or as conditional on named hypotheses. The ordering is fixed before training, which makes it a prediction rather than a fit, and \cref{sec-6} specifies the runs that test it.

The framework extends in a direction the present construction does not reach. The grading of \cref{sec-4} is block-diagonal with respect to $\bigoplus_l V_l$ and therefore transports nothing between levels; directed level-to-level transport requires block-triangular morphisms $\phi_{l'\leftarrow l}:V_l\to V_{l'}$ internal to the graded hidden space. The morphic structure of~\cite{sh-111} supplies them, and subsumes external-tool paradigms as a special case via functorial internalization; MLGE does not.


\section{Background: Language Models and the Uniform Transformer}
\label{sec-2}

In this section  we fix the notation and review the classical results of the graded constructions introduced in  \cite{2024-02, sh-89, sh-95, sh-111}. The central question, made precise in \cref{con:compositional} and \cref{assum:profile-divergence}, is what the transformer's uniform treatment of positions and feature dimensions costs and where that cost is located.

\subsection{Language Models and the Autoregressive Objective}
\label{subsec:lm-objective}

\begin{defn}
\label{def:lm}
Let $\mathcal{V}$ be a finite vocabulary. A \emph{language model} is a parameterized probability distribution $p_\theta$ on $\mathcal{V}^{*}$.
\end{defn}

For $w = (w_1, \dots, w_T) \in \mathcal{V}^T$ the chain rule gives the autoregressive factorization
\begin{equation}
\label{eq:ar-factorization}
p_\theta(w) = \prod_{t=1}^{T} p_\theta(w_t \mid w_{<t}), \qquad w_{<t} := (w_1, \dots, w_{t-1}),
\end{equation}
reducing the task to estimation of the conditionals \cite{shannon1951prediction, bengio2003neural}. Quality is reported as perplexity, the exponentiated per-token cross-entropy
\begin{equation}
\label{eq:perplexity}
\mathrm{PPL}(w) = \exp\!\left( -\tfrac{1}{T} \textstyle\sum_{t=1}^{T} \log p_\theta(w_t \mid w_{<t}) \right).
\end{equation}
Every architecture in this paper, graded or not, is a parameterization of \cref{eq:ar-factorization}; grading acts on the representation of $w_{<t}$ and on the weight each conditional carries in the loss, and leaves the factorization itself intact.

The direct empirical objective for \cref{eq:ar-factorization} is causal language modeling,
\begin{equation}
\label{eq:clm}
\mathcal{L}_{\mathrm{CLM}}(\theta) = -\frac{1}{n}\sum_{t=1}^{n} \log p_\theta(x_t \mid x_{<t}) .
\end{equation}

\begin{lem}
\label{lem:ce-decomposition}
Write $p_t := p(\cdot \mid x_{<t})$ for the data conditional and $\hat{p}_t := p_\theta(\cdot \mid x_{<t})$ for the model's. The population form of \cref{eq:clm} decomposes as
\begin{equation}
\label{eq:ce-decomposition}
\mathbb{E}\!\left[\mathcal{L}_{\mathrm{CLM}}(\theta)\right] = \frac{1}{n}\sum_{t=1}^{n}\Bigl( H(p_t) + D_{\mathrm{KL}}(p_t \,\|\, \hat{p}_t) \Bigr),
\end{equation}
so the objective is bounded below by the conditional entropy of the data, attained exactly when $\hat{p}_t = p_t$ for all $t$.
\end{lem}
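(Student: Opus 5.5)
The plan is to interpret the expectation in \cref{eq:clm} as being over a sequence $x=(x_1,\dots,x_n)$ drawn from the data distribution. By linearity of expectation it then suffices to treat each summand separately. Fix $t$. By the tower property, $\mathbb{E}[-\log p_\theta(x_t\mid x_{<t})]=\mathbb{E}_{x_{<t}}\bigl[\mathbb{E}[-\log \hat p_t(x_t)\mid x_{<t}]\bigr]$, and conditionally on $x_{<t}$ the token $x_t$ is distributed according to $p_t$. The inner conditional expectation is therefore the cross-entropy
\[
H(p_t,\hat p_t) = -\sum_{v\in\mathcal{V}} p_t(v)\log \hat p_t(v).
\]
Because $\mathcal{V}$ is finite, this sum has finitely many terms.

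The central identity is obtained by adding and subtracting $\sum_v p_t(v)\log p_t(v)$:
\[
-\sum_{v} p_t(v)\log \hat p_t(v) = -\sum_v p_t(v)\log p_t(v) + \sum_v p_t(v)\log\frac{p_t(v)}{\hat p_t(v)} = H(p_t)+D_{\mathrm{KL}}(p_t\,\|\,\hat p_t).
\]
We adopt the conventions $0\log 0=0$, so terms with $p_t(v)=0$ are dropped. If $\hat p_t(v)=0$ while $p_t(v)>0$, both sides equal $+\infty$. Summing over $t$ and dividing by $n$ then gives \cref{eq:ce-decomposition}. Here $H(p_t)$ and $D_{\mathrm{KL}}$ are understood as conditional quantities, averaged over the prefix $x_{<t}$. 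This is how I read the statement's notation, since $p_t$ depends on the random prefix.

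For the lower bound I would invoke Gibbs' inequality. Jensen applied to the concave function $\log$ gives $-D_{\mathrm{KL}}(p\|q)=\sum_v p(v)\log\frac{q(v)}{p(v)}\le\log\sum_{v:p(v)>0}q(v)\le 0$. Since $\log$ is strictly concave, equality holds if and only if $q/p$ is constant on the support of $p$ and $q$ places full mass there, which means $q=p$. Consequently each conditional term is at least $H(p_t)$, and the objective is at least $\frac1n\sum_t \mathbb{E}\,H(p_t)$, the conditional entropy of the data. Equality requires the averaged KL to vanish, so $\hat p_t=p_t$ must hold for almost every prefix. This is the precise meaning of ``for all $t$'' in the statement, and it is a caveat the proof should state explicitly.

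The algebra is routine, so the main obstacle is bookkeeping. The first issue is making precise that $p_t$ depends on a random prefix, so that the entropy and KL terms in \cref{eq:ce-decomposition} are expectations over prefixes. The second is handling the infinite-KL and zero-probability edge cases so that the identity holds in $[0,\infty]$ without any invalid subtraction. I would settle both by stating the identity pointwise in the prefix, with nonnegative summands, before averaging.
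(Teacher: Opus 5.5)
Your proposal is correct and follows the paper's own route. The paper likewise takes expectations of each summand over $x_t \sim p_t$ given the prefix and adds and subtracts $H(p_t)$; your Gibbs-inequality argument for the lower bound and equality case, and your note that equality holds for every positive-probability prefix with the terms read as prefix-averaged, make explicit what the paper's one-line proof leaves implicit.
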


\begin{proof}
Take expectations in \cref{eq:clm} over $x_t \sim p_t$ and add and subtract $H(p_t)$ term by term.
\end{proof}

\Cref{eq:ce-decomposition} is the object the graded loss of \cref{sec-5} modifies: per-token weights $\lambda^{q_t}$ re-weight its summands, leaving the form of the decomposition intact while changing which positions dominate the gradient. Which re-weightings preserve the population minimizer is exactly the consistency question answered by \cref{prop:properness}.

\Cref{eq:clm} is non-convex in $\theta$, and no convergence guarantee to a global minimizer is known for transformer parameterizations; standard non-convex SGD analysis controls stationarity, 
\[
\min_{t \le T} \mathbb{E}\|\nabla \mathcal{L}\|^2 = O(1/\sqrt{T}),
\]
 not the value of \cref{eq:ce-decomposition}. This gap is inherited, not repaired, by grading, and \cref{thm:convergence} quantifies the direction in which grading moves it.

\subsection{The Transformer}
\label{sec:2.1}

Let  $n$ be the  sequence length, $d$ the model dimension, $H$ the number of heads, $d_k = d_v = d/H$ the per-head dimension, and $X \in \mathbb{R}^{n \times d}$ a layer input; see  \cite{vaswani2017attention} for the standard transformer and \cite{sh-95} for the graded transformer.

Tokens enter through an embedding matrix $W_e \in \mathbb{R}^{|\mathcal{V}| \times d}$; subword vocabularies are built by byte-pair encoding or unigram segmentation \cite{sennrich2016neural, kudo2018sentencepiece}, and tying the output projection to $W_e^\top$ couples the generation and representation geometries; the graded softmax of \cref{sec-5} acts on precisely this tied projection. Segmentation is chosen by corpus frequency and carries no record of morphological or syntactic role. Tokenisation is thus the first place hierarchy is discarded, and the level-$0$ subspace $V_0$ of \cref{sec-4} is the first point at which it is reintroduced. Since attention is permutation-equivariant in the rows of $X$, position is supplied explicitly, classically by the sinusoidal encoding

\begin{equation}
\label{eq:pe}
\mathrm{PE}(t, j) =
\begin{cases}
\sin\!\left( t \, / \, 10000^{\,2\lfloor j/2 \rfloor / d} \right), & j \text{ even}, \\[1ex]
\cos\!\left( t \, / \, 10000^{\,2\lfloor j/2 \rfloor / d} \right), & j \text{ odd},
\end{cases}
\end{equation}
added to the embeddings; \cref{eq:graded-pe-full} is its graded refinement.

With projections $W_{Q_h}, W_{K_h} \in \mathbb{R}^{d \times d_k}$, $W_{V_h} \in \mathbb{R}^{d \times d_v}$, and $W_O \in \mathbb{R}^{Hd_v \times d}$, multi-head self-attention is

\begin{equation}
\label{eq:mhsa}
\begin{split}
\mathrm{MHSA}(X) 	&	= \mathrm{Concat}(\mathrm{Head}_1, \dots, \mathrm{Head}_H)\, W_O, \\
 \mathrm{Head}_h 	&	= \mathrm{Attention}(X W_{Q_h}, X W_{K_h}, X W_{V_h}),
\end{split}
\end{equation}
where, with the causal mask $M_{\mathrm{causal}} \in \{0, -\infty\}^{n \times n}$, $(M_{\mathrm{causal}})_{ij} = -\infty$ for $j > i$, enforcing \cref{eq:ar-factorization},
\begin{equation}
\label{eq:causal-mask}
\mathrm{Attention}(Q, K, V) = \mathrm{softmax}\!\left( \frac{Q K^{\top}}{\sqrt{d_k}} + M_{\mathrm{causal}} \right) V .
\end{equation}

The scaling $d_k^{-1/2}$ is dictated by the variance of the logits: for $q, k$ with independent unit-variance entries, $\mathrm{Var}(q^\top k) = d_k$, so unscaled logits drive the softmax toward a one-hot distribution with vanishing gradient as $d_k$ grows. \Cref{eq:causal-mask} computes the bilinear form $q^\top k$, the Euclidean inner product on $\mathbb{R}^{d_k}$, in which no coordinate is distinguished. Replacing this form by a graded one is the single modification of \cref{subsec:eg-mhsa}.

Each position independently passes through a feed-forward network,
\begin{equation}
\label{eq:ffn}
\mathrm{FFN}(x) = \max(0,\, x W_1 + b_1) W_2 + b_2, \qquad W_1 \in \mathbb{R}^{d \times d_{ff}}, \ W_2 \in \mathbb{R}^{d_{ff} \times d},
\end{equation}
conventionally with $d_{ff} = 4d$; gated variants such as SwiGLU adjust $d_{ff}$ to hold the parameter count fixed \cite{shazeer2020glu}. Sublayers are wrapped in residual connections and normalisation, in the pre-norm ordering $x' = x + \mathrm{Sublayer}(\mathrm{LN}(x))$, which bounds gradients at initialisation independently of depth \cite{xiong2020pre}. Normalisation is by LayerNorm \cite{ba2016layer} or, standard in current open models, RMSNorm \cite{zhang2022rms},
\begin{equation}
\label{eq:rmsnorm}
\mathrm{RMSNorm}(x) = \gamma \odot \frac{x}{\sqrt{\tfrac{1}{d}\sum_{j} x_j^2 + \epsilon}} .
\end{equation}

\begin{prop}
\label{prop:ffn-fraction}
In a pre-norm block with $d_{ff} = 4d$ and $d_k = d_v = d/H$, the FFN accounts for $2/3$ of the block's parameters, up to $O(d)$ terms.
\end{prop}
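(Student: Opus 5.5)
The proof is a direct parameter count of one pre-norm block, separated into the attention sublayer, the FFN sublayer, and the normalisation layers.

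\emph{Attention.} By \cref{eq:mhsa}, each head carries $W_{Q_h},W_{K_h}\in\mathbb{R}^{d\times d_k}$ and $W_{V_h}\in\mathbb{R}^{d\times d_v}$, so it has $d(2d_k+d_v)$ weights. Summing over the $H$ heads and using $Hd_k=Hd_v=d$ gives $3d^2$ for the input projections. The output projection $W_O\in\mathbb{R}^{Hd_v\times d}$ adds $d^2$, for a total of $4d^2$. Any attention biases contribute only $O(d)$.

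\emph{FFN.} By \cref{eq:ffn} with $d_{ff}=4d$, the weights are $W_1\in\mathbb{R}^{d\times 4d}$ and $W_2\in\mathbb{R}^{4d\times d}$, giving $8d^2$. The biases $b_1\in\mathbb{R}^{4d}$ and $b_2\in\mathbb{R}^{d}$ give $5d=O(d)$.

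\emph{Normalisation.} The pre-norm ordering places one normalisation before each sublayer. RMSNorm as in \cref{eq:rmsnorm} has $d$ parameters ($\gamma$); LayerNorm has $2d$. Either way the contribution is $O(d)$.

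The block therefore has $12d^2+O(d)$ parameters, of which $8d^2+O(d)$ belong to the FFN. The fraction is
\[
\frac{8d^2+O(d)}{12d^2+O(d)}=\frac{2}{3}+O(1/d).
\]
The statement's "up to $O(d)$ terms" means that the exact ratio $2/3$ holds for the $d^2$ leading terms. This is the form I would state, making the $O(1/d)$ error explicit.

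The only step needing care is bookkeeping about what counts as "the block." Embeddings, positional encodings, and the tied output projection belong to the model rather than the block, so I exclude them. I would also remark that a gated SwiGLU FFN with $d_{ff}$ adjusted to hold the parameter count fixed, as noted after \cref{eq:ffn}, keeps $8d^2$ and hence the same ratio. Nothing here is a genuine obstacle.
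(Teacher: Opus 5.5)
Your count is correct and is the same as the paper's proof: $4d^2$ for $W_Q,W_K,W_V,W_O$ after concatenating heads, $8d^2$ for $W_1,W_2$, and $O(d)$ for biases and normalisation, giving $8d^2/12d^2=2/3$. Stating the error explicitly as $2/3+O(1/d)$ only sharpens that argument; it does not change it.
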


\begin{proof}
Attention contributes $W_{Q}, W_{K}, W_{V}, W_O$, each $d \times d$ after concatenating heads, hence $4d^2$; the FFN contributes $W_1 \in \mathbb{R}^{d \times 4d}$ and $W_2 \in \mathbb{R}^{4d \times d}$, hence $8d^2$. Biases and normalisation parameters are $O(d)$, and $8d^2/12d^2 = 2/3$.
\end{proof}

\Cref{prop:ffn-fraction} calibrates every overhead claim in this paper: a graded parameter budget of $O(d + Hd_k)$ per block is negligible against the $12d^2$ established here, and the full accounting for the proposed configurations appears in \cref{subsec:setup}.

\begin{prop}
\label{prop:attention-cost}
One layer of \cref{eq:mhsa} together with \cref{eq:ffn} costs $O(n^2 d + n d^2)$ time.
\end{prop}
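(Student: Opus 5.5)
The plan is to count floating-point operations sublayer by sublayer and add them. Take a layer input $X\in\mathbb{R}^{n\times d}$. There are four stages: the projections, the score matrix with its mask and softmax, the weighted sum of values, and the output projection. The FFN of \cref{eq:ffn} is handled separately. Multiplying an $a\times b$ matrix by a $b\times c$ matrix costs $O(abc)$ time, and I use this throughout.

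\emph{Projections.} Concatenating the heads as in the proof of \cref{prop:ffn-fraction}, the products $XW_Q$, $XW_K$, $XW_V$ are each $(n\times d)(d\times d)$, so each costs $O(nd^2)$. The output projection $\mathrm{Concat}(\cdot)W_O$ with $Hd_v=d$ has the same cost.

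\emph{Attention core.} For each head, $Q_hK_h^\top$ is $(n\times d_k)(d_k\times n)$ and costs $O(n^2d_k)$. The scaling, adding $M_{\mathrm{causal}}$, and the row-wise softmax of \cref{eq:causal-mask} are entrywise on an $n\times n$ matrix, so they cost $O(n^2)$ per head. Multiplying by $V_h$ is $(n\times n)(n\times d_v)$ and costs $O(n^2d_v)$. Summing over the $H$ heads and using $Hd_k=Hd_v=d$ gives $O(n^2d)+O(Hn^2)$. Since $H\le d$, this is $O(n^2 d)$.

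\emph{FFN and the rest.} The FFN applies $W_1$ and $W_2$ position-wise, which costs $O(n\,d\,d_{ff})$. Under the convention $d_{ff}=4d$ (or any $d_{ff}=O(d)$, which covers SwiGLU) this is $O(nd^2)$. ReLU, biases, residual additions and RMSNorm/LayerNorm (\cref{eq:rmsnorm}) are all entrywise or row-wise on $n\times d$ arrays, so they cost $O(nd)$, which is absorbed. Adding everything gives $O(n^2d+nd^2)$.

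The only step needing care is the head bookkeeping. Per-head costs must be summed using $Hd_k=d$ rather than bounded separately, and the $O(Hn^2)$ softmax term must be absorbed using $H\le d$. I will also state explicitly that $d_{ff}=O(d)$ is assumed, since the bound fails if $d_{ff}$ grows faster than $d$. No step here is a real obstacle.
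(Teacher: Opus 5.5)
Your proposal is correct and follows the same operation count as the paper's proof: per-head attention costs $O(n^2 d_k)$, summed over heads via $Hd_k = d$ to give $O(n^2 d)$, and the projections and FFN contribute $O(nd^2)$. Your additions (absorbing the $O(Hn^2)$ softmax term via $H \le d$, stating $d_{ff} = O(d)$ explicitly, and counting the $O(nd)$ norm and residual costs) make explicit what the paper's two-line proof leaves implicit.
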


\begin{proof}
Per head, logits and value weighting cost $O(n^2 d_k)$; over $H$ heads with $d_k = d/H$ this is $O(n^2 d)$. Projections, concatenation, and \cref{eq:ffn} contribute $O(nd^2)$.
\end{proof}

\Cref{prop:complexity} shows grading preserves this bound exactly.

\subsection{Scaling Laws}
\label{subsec:scaling-laws}

Cross-entropy loss under \cref{eq:clm} follows fitted power laws in non-embedding parameters $N$, data $D$, and compute $C \approx 6ND$ \cite{kaplan2020scaling}; the compute budget $C$ of this subsection is standard notation and is distinct from the grade clip $C$ of \cref{subsubsec:configs}. The parameterization of Hoffmann et al.\ \cite{hoff},
\begin{equation}
\label{eq:chinchilla}
L(N,D) = E + \frac{A}{N^{\alpha}} + \frac{B}{D^{\beta}}, \qquad E \approx 1.69, \ \alpha \approx 0.34, \ \beta \approx 0.28,
\end{equation}
yields the compute-optimal allocation below.

\begin{prop}
\label{prop:chinchilla-allocation}
Minimising \cref{eq:chinchilla} subject to $6ND = C$ gives $N^{*} \propto C^{\beta/(\alpha+\beta)}$ and $D^{*} \propto C^{\alpha/(\alpha+\beta)}$; for $\alpha \approx \beta$ both exponents approach $1/2$, realised empirically at roughly $20$ tokens per parameter.
\end{prop}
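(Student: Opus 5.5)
The plan is a Lagrange-multiplier computation on the constrained problem
\[
\min_{N,D>0}\; E + A N^{-\alpha} + B D^{-\beta}\quad\text{subject to}\quad 6ND = C.
\]
The constraint lets us eliminate one variable. Substitute $D = C/(6N)$ to get a one-variable objective
\[
f(N) = E + A N^{-\alpha} + B\,(6N/C)^{\beta}.
\]
This is the sum of a strictly decreasing convex term and a strictly increasing term in $N$. So $f$ tends to $+\infty$ at both ends of $(0,\infty)$ and has a unique critical point, which is its global minimizer. A cleaner way to see uniqueness is to write $N = e^{u}$: then $f$ is a sum of two exponentials in $u$, hence strictly convex in $u$.

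Next, set $f'(N)=0$:
\[
\alpha A N^{-\alpha-1} = \beta B\,6^{\beta} C^{-\beta} N^{\beta-1},
\]
which rearranges to
\[
N^{\alpha+\beta} = \frac{\alpha A}{\beta B\,6^{\beta}}\, C^{\beta}.
\]
Solving gives
\[
N^{*} = G\, C^{\beta/(\alpha+\beta)}, \qquad G = \Bigl(\frac{\alpha A}{\beta B\,6^{\beta}}\Bigr)^{1/(\alpha+\beta)}.
\]
Back-substituting into the constraint yields
\[
D^{*} = \frac{C}{6N^{*}} = \frac{1}{6G}\, C^{1-\beta/(\alpha+\beta)} = \frac{1}{6G}\, C^{\alpha/(\alpha+\beta)}.
\]
These are the claimed proportionalities, and the constants depend only on $A,B,\alpha,\beta$. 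Notice that the two exponents sum to $1$, which is forced by the constraint $ND \propto C$.

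For the limiting statement, put $\alpha=\beta$: then $\beta/(\alpha+\beta)=\alpha/(\alpha+\beta)=1/2$. The exponents depend continuously on $(\alpha,\beta)$, so they approach $1/2$ as $\alpha-\beta\to 0$. With the fitted values $\alpha\approx0.34$, $\beta\approx0.28$ from \cref{eq:chinchilla}, the exponents are about $0.45$ and $0.55$.

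The "roughly $20$ tokens per parameter" clause is not a consequence of the optimisation. It is an empirical fact about the fitted constants: the ratio $D^{*}/N^{*}$ depends on $A$, $B$ and $C$, and is approximately constant only in the $\alpha\approx\beta$ regime. I would state it as an empirical calibration citing \cite{hoff}, not derive it. Delimiting this clause honestly is the main obstacle, since the rest of the argument is routine calculus.
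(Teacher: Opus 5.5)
Your proof is correct and follows the paper's route: substitute $D = C/(6N)$ into \cref{eq:chinchilla}, differentiate in $N$, and solve to get $N^{\alpha+\beta}\propto C^{\beta}$. You go somewhat further than the paper's one-line argument in three places: you check via $N=e^{u}$ and strict convexity that the critical point is the global minimizer, you back-substitute to obtain $D^{*}$ explicitly, and you correctly flag the ``$20$ tokens per parameter'' clause as an empirical calibration from \cite{hoff} rather than a consequence of the optimisation, a point the paper's proof leaves unaddressed.
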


\begin{proof}
Substituting $D = C/(6N)$ into \cref{eq:chinchilla} and differentiating in $N$ gives $-\alpha A N^{-\alpha - 1} + \beta B (6N/C)^{\beta} N^{-1} = 0$, whence $N^{\alpha+\beta} \propto C^{\beta}$.
\end{proof}

Two qualifications govern the use made of \cref{eq:chinchilla} here. First, it is a fitted regularity with no derivation from \cref{eq:clm}; it constrains the loss, not the capabilities the loss induces, and whether capability changes smoothly or abruptly with scale is disputed \cite{wei2022emergent, schaeffer2023mirage}. Second, \cref{eq:chinchilla} aggregates all positions into one scalar: a token carrying a clause boundary and a token carrying a function word contribute identically. The graded loss of \cref{sec-5} reweights the summands of \cref{eq:ce-decomposition} by token grade, which plausibly changes the constants of \cref{eq:chinchilla} rather than its form; whether it changes the exponents is an empirical question this paper does not settle.

\subsection{The Uniformity Principle and Its Cost}
\label{subsec:uniformity}

Every component above treats its inputs uniformly. Attention \cref{eq:causal-mask} scores all pairs through one Euclidean form; the FFN \cref{eq:ffn} applies one map to all positions; the loss \cref{eq:clm} weights all positions equally; and the scaling law \cref{eq:chinchilla} aggregates them into one scalar. We call this the \emph{uniformity principle}: all tokens, parameters, and feature dimensions carry equal structural status. Its geometric expression is that the representation space $\mathbb{R}^d$ is isotropic: no coordinate direction is distinguished, so no direction can carry structural meaning that the architecture must respect. Formally, \cref{eq:causal-mask} is equivariant under the orthogonal group acting on $\mathbb{R}^{d_k}$ jointly through the projections, and in particular under the unweighted scalar action $\lambda \cdot x$; it is equivariant under the weighted action $\lambda \star x = (\lambda^{q_j} x_j)$ only in the degenerate case $q = (1, \dots, 1)$. When data carries a weighted structure with $q \neq (1,\dots,1)$, the architecture is committed to the wrong symmetry, and the structure must be recovered from data if at all.

Existing architectural variants relax uniformity only as a function of positional offset. \Cref{tab:arch} classifies them by the structural information their inductive bias can express.

\begin{table}[h]
\centering
\small
\begin{tabular}{@{}llll@{}}
\toprule
Variant & Key mechanism & Examples & Structural bias \\
\midrule
Decoder-only & Causal autoregression & GPT series, Llama & None (emergent) \\
Encoder--decoder & Bidirectional encoding & T5, BART & None (task-supervised) \\
State space & Selective linear recurrence & Mamba-2 & Recency only \\
Sparse attention & Restricted mask support & Longformer, BigBird & Locality only \\
Hybrid SSM--attention & Interleaved recurrence/attention & Jamba & Recency $+$ locality \\
Graded (this paper) & Exponential subspace scaling & \cref{sec-3,sec-4} & Explicit, multi-level \\
\bottomrule
\end{tabular}
\caption{Architecture variants by structural inductive bias \cite{gu2023mamba, gu2024mamba2, peng2023rwkv, raffel2020exploring, radford2018improving}. ``Recency'' and ``locality'' denote biases that are functions of positional offset $|i - j|$ alone; no existing variant conditions on structural role. Grading acts on the metric of the representation space, sparsity and recurrence on the mask, and the two compose.}
\label{tab:arch}
\end{table}

The cost of uniformity that motivates this paper is compositional, and we record it in two statements with different levels of support. The first is the classical conjecture, which this paper does not prove.

\begin{con}
\label{con:compositional}
An $L$-level context-free grammar admits $\Omega(n^L)$ parse trees over inputs of length $n$. The sample complexity of an architecture with no structural prior, trained by \cref{eq:clm} to a fixed error on structures of depth $L$, grows exponentially in $L$.
\end{con}

\Cref{con:compositional} is consistent with the observed accuracy decay on $k$-nested arithmetic, $k$-hop inference, and $k$-deep call stacks, and it motivated the graded program; but the results of this paper neither prove it nor depend on it. What the theory developed in \cref{sec-4} requires is strictly weaker, and unlike \cref{con:compositional} it is directly measurable on annotated corpora.

\begin{assum}[Profile Divergence]
\label{assum:profile-divergence}
For hierarchical prediction targets in natural language, the target energy profile $\alpha$ and the data energy profile $\tau$ of \cref{def:profiles} are far apart in Bhattacharyya affinity: structurally decisive features carry little of the corpus variance, and high-variance lexical features carry little of the decision.
\end{assum}

\Cref{assum:profile-divergence} is the empirical hypothesis on which the benefit of grading rests. \Cref{prop:ml-expressivity} converts it into an exact sample-complexity ratio, $\Lambda^\star = \mathrm{BC}(\alpha, \tau)^2$, and \cref{cor:stratified} shows that under geometric stratification of the profiles the ratio decays exponentially in the number of levels; this is the only route by which any exponential-in-$L$ statement enters this work, and it enters as a hypothesis, not as a theorem. Both profiles can be estimated offline, so \cref{assum:profile-divergence} is testable before any model is trained; \cref{sec-8} carries out the construction of the estimators and \cref{sec-6} stages the measurement first.

\section{Graded Attention in Decoder-Only Architectures}
\label{sec-3}

\Cref{sec-2} closed with a deficit stated in two forms: \cref{con:compositional}, which this paper does not prove, and \cref{assum:profile-divergence}, which it makes precise and quantifies. The transformer's representation space is isotropic, so hierarchical competence must be recovered from data. The remainder of the paper addresses the deficit by changing the metric on the representation space rather than the computation performed over it.

This section develops exponentially graded multi-head self-attention within causal decoder-only stacks and establishes three facts. Grading leaves the asymptotic cost of attention unchanged (\cref{prop:complexity}). It acts by an invertible change of parameters (\cref{rem:absorption}), so a trained GLLM compiles to a standard transformer and carries its prior at zero deployment cost (\cref{cor:zero-overhead}), a property none of the variants in \cref{tab:arch} shares. And at a fixed norm budget the graded constraint set is an \emph{ellipsoid} whose axes are aligned with the grades (\cref{prop:expressivity}), strictly containing the uniform ball; the advantage therefore cannot come from representability, since a larger class at fixed budget hurts generalisation. Together these facts place the advantage in the \emph{norm required to represent a given target}, and hence in the data required to learn it. \Cref{sec-4} decomposes the embedding space into level-specific subspaces via Multi-Level Graded Embeddings and computes that quantity exactly (\cref{prop:ml-expressivity}), characterises in closed form the grades at which the graded bound is the smaller one (\cref{prop:admissible-grades}), and gives the behaviour under stratified profiles (\cref{cor:stratified}). \Cref{sec-5} extends the grading to the training objective and characterises its consistency (\cref{prop:properness}). \Cref{sec-6} states the program that will test the resulting predictions, and \cref{sec-7} assembles the results into a procedure that selects, certifies, and prices the grades in domains where they are not given.

Three distinct grade objects appear in what follows and should not be conflated: per-head feature grades $q_h \in \mathbb{Q}_{\geq 0}^{d_k}$ (this section), per-level embedding grades $q_l \in \mathbb{Q}_{\geq 0}^{d_l}$ (\cref{sec-4}), and per-token loss grades $q_t \in \mathbb{Q}_{\geq 0}$ (\cref{sec-5}). All take values in $\mathbb{Q}_{\geq 0}$, and all act through the same scalar action $\lambda \star x = (\lambda^{q_i} x_i)$ with a common base $\lambda > 1$.

We work in the decoder-only paradigm of the GPT series \cite{radford2019language}, with multi-head self-attention and the causal mask as fixed in \cref{eq:mhsa,eq:causal-mask}: queries $Q_h = X W_{Q_h}$, keys $K_h = X W_{K_h}$, and values $V_h = X W_{V_h}$ in $\mathbb{R}^{n \times d_k}$ for each head $h \in \{1, \dots, H\}$, with $d_k = d_v = d/H$ and the mask $M_{\mathrm{causal}}$ enforcing \cref{eq:ar-factorization}. Every position and every feature dimension enters \cref{eq:causal-mask} identically: the cost is $O(n^2 d + n d^2)$ per layer by \cref{prop:attention-cost}, and no coordinate of $\mathbb{R}^{d_k}$ is distinguished from any other.

\subsection{Exponentially Graded Multi-Head Self-Attention}
\label{subsec:eg-mhsa}

In GLLMs we replace MHSA by \emph{Exponentially Graded Multi-Head Self-Attention} (EG-MHSA), built from the Exponentially Graded Transformer of \cite{sh-95}. Each head carries its own grading of the per-head space $\mathbb{R}^{d_k} = \bigoplus_{j=1}^{d_k} \mathbb{R}e_j$, assigning to the basis vector $e_j$ a grade $q_{h,j} \in \mathbb{Q}_{\geq 0}$. The grading transformation is the diagonal matrix
\[
M_{q_h, \lambda} = \operatorname{diag}\!\left(\lambda^{q_{h,1}}, \dots, \lambda^{q_{h,d_k}}\right) \in \mathbb{R}^{d_k \times d_k},
\]
with fixed base $\lambda > 1$. Taking $\lambda = e^{1/d_k}$ gives $\lambda \approx 1.008$ at $d_k = 128$ and $\lambda \approx 1.016$ at $d_k = 64$, so that the clipped grade range $q_{\max} = d_k \ln C$ of \cref{subsubsec:configs} spans a spread of order $d_k$. The base is chosen small precisely because the grades are not, and the clip $C$, not the base, is the constant that prices the construction (\cref{rem:one-dial}).

The graded queries, keys, and values are obtained by right-multiplication:
\begin{equation}
\label{eq:graded-proj}
Q_h' = Q_h M_{q_h, \lambda}, \qquad K_h' = K_h M_{q_h, \lambda}, \qquad V_h' = V_h M_{q_h, \lambda} .
\end{equation}

The graded head is
\begin{equation}
\label{eq:eg-head}
\mathrm{Head}_h' = \mathrm{softmax}\!\left( \frac{Q_h' (K_h')^\top}{\sqrt{d_k}} + M_{\mathrm{causal}} \right) V_h' .
\end{equation}

Since $M_{q_h,\lambda}$ is diagonal and hence symmetric, $(K_h')^\top = M_{q_h,\lambda} K_h^\top$, so the pre-softmax logits are
\[
\frac{Q_h' (K_h')^\top}{\sqrt{d_k}} = \frac{Q_h M_{q_h,\lambda}^2 K_h^\top}{\sqrt{d_k}},
\qquad
M_{q_h,\lambda}^2 = \operatorname{diag}\!\left(\lambda^{2q_{h,1}}, \dots, \lambda^{2q_{h,d_k}}\right),
\]
so that agreement between query and key along $e_j$ is weighted by $\lambda^{2q_{h,j}}$: the grading acts on the bilinear form defining attention, replacing the Euclidean inner product of \cref{eq:causal-mask} on $\mathbb{R}^{d_k}$ by the graded form $\langle u, v\rangle_{q_h} = u^\top M_{q_h,\lambda}^2 v$. The multi-head output is
\[
\mathrm{EG\text{-}MHSA}(X) = \mathrm{Concat}(\mathrm{Head}_1', \dots, \mathrm{Head}_H')\, W_O .
\]

The grading induces a filtration of the per-head space by grade,
\[
V_{\leq q}^{(h)} = \bigoplus_{j \,:\, q_{h,j} \leq q} \mathbb{R}e_j ,
\qquad
V_{\leq q}^{(h)} \subseteq V_{\leq q'}^{(h)} \text{ for } q \leq q',
\]
which is the algebraic object that linguistic stratification is to be mapped onto: low grades for lexical and morphological features, high grades for phrasal and discourse-level ones \cite{2024-02, sh-89}.

\begin{prop}
\label{prop:complexity}
EG-MHSA has the same asymptotic time and space complexity as standard MHSA, namely $O(n^2 d + n d^2)$ per layer including feed-forward components.
\end{prop}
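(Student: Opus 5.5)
The plan is to reduce to \cref{prop:attention-cost} by showing that the only new operations in EG-MHSA are diagonal rescalings, and that each of them costs at most $O(nd)$ time and $O(d)$ extra space per layer. Both bounds are dominated by $O(n^2 d + nd^2)$.

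First I account for the grading matrices themselves. For each head $h$, $M_{q_h,\lambda}$ is diagonal, so it is stored as a vector of $d_k$ entries $\lambda^{q_{h,j}}$. These are computed once per forward pass, or precomputed since the grades and base are fixed, at cost $O(d_k)$ per head and $O(Hd_k)=O(d)$ per layer, in both time and space.

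Next I bound the graded projections \cref{eq:graded-proj}. Right-multiplying $Q_h\in\mathbb{R}^{n\times d_k}$ by a diagonal matrix is a column-wise scaling, not a dense matrix product, so it costs $O(nd_k)$. Doing this for $Q_h,K_h,V_h$ across $H$ heads gives $O(3nHd_k)=O(nd)$. The one point that needs care is that nothing forces forming $M_{q_h,\lambda}$ as a dense $d_k\times d_k$ matrix. Even if an implementation did so, the product would cost $O(nd_k^2)$ per head and $O(nd^2/H)\le O(nd^2)$ in total, which is still within the bound. So the conclusion does not depend on that implementation choice. Alternatively, by the identity $Q_h'(K_h')^\top=Q_hM^2_{q_h,\lambda}K_h^\top$, one can scale only $Q_h$ by $M^2_{q_h,\lambda}$, at the same $O(nd_k)$ cost.

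After these steps, the graded head \cref{eq:eg-head} is literally the standard attention \cref{eq:causal-mask} applied to matrices $Q_h',K_h',V_h'$ of the same shapes $n\times d_k$. The logits, causal mask, softmax, value weighting, concatenation, $W_O$, and the FFN therefore have exactly the cost counted in \cref{prop:attention-cost}: $O(n^2d)$ for the attention core and $O(nd^2)$ for the projections and \cref{eq:ffn}.

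Summing gives $O(n^2d+nd^2)+O(nd)=O(n^2d+nd^2)$ time. For space, the intermediates $Q_h',K_h',V_h'$ have the same shapes as $Q_h,K_h,V_h$ and can even overwrite them. The $n\times n$ attention matrices are unchanged, and the grades add $O(d)$ parameters, so space matches standard MHSA as well.

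The only real obstacle is the bookkeeping point that diagonal scaling must be treated as elementwise, and this is handled above either way. I do not expect any step to be hard.
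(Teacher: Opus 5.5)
Your proposal is correct and follows the paper's proof. Both count the $3H$ diagonal rescalings at $O(nd_k)$ each for $O(nd)$ in total, note that the grading matrices can be precomputed in $O(d_k)$ per head, and use the fact that the graded head is otherwise structurally identical to standard attention, so that \cref{prop:attention-cost} gives the $O(n^2d+nd^2)$ bound; your explicit space accounting goes slightly beyond the paper, which argues only time, and in your aside on scaling only $Q_h$ by $M^2_{q_h,\lambda}$ remember that $V_h$ still needs its own $O(nd_k)$ rescaling by $M_{q_h,\lambda}$.
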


\begin{proof}
The grading step \cref{eq:graded-proj} applies $3H$ diagonal multiplications, each costing $O(n d_k)$, for a total of $O(n H d_k) = O(nd)$. The matrix $M_{q_h,\lambda}^2$ is precomputable per head in $O(d_k)$. The attention computation \cref{eq:eg-head} is structurally identical to standard MHSA: $O(n^2 d_k)$ per head for the logits and $O(n^2 d_k)$ per head for the softmax and value weighting, giving $O(n^2 d)$ over $H$ heads. Concatenation and the output projection add $O(nd^2)$, and the feed-forward block $O(nd^2)$. Summing gives $O(n^2 d + nd^2)$, matching \cref{prop:attention-cost}.
\end{proof}

\begin{rem}
\label{rem:absorption}
Because $M_{q_h,\lambda}$ is invertible and $W_{Q_h}, W_{K_h}, W_{V_h}$ are unconstrained, the substitution $\tilde W_{\bullet h} = W_{\bullet h} M_{q_h,\lambda}$ is a bijection of the parameter space carrying \cref{eq:eg-head} to a standard head. EG-MHSA and MHSA therefore realise the same set of functions. In this precise sense the grading changes the \emph{metric} of the representation space and leaves the computation over it intact, and two consequences follow, running in opposite directions. On one side, no separation is available from representability. On the other, an invertible reparameterization can be applied once and then discarded, which gives \cref{cor:zero-overhead}. What grading fixes is a parameterization, and with it an initialisation, an implicit bias under gradient descent, and, once the projection weights are norm-constrained, an effective hypothesis class whose geometry is computed in \cref{prop:expressivity}. This is a statement about which functions are \emph{reachable and preferred}, not about which are \emph{representable}, and the informative comparison fixes the target rather than the budget (\cref{prop:ml-expressivity}).
\end{rem}

\begin{cor}
\label{cor:zero-overhead}
Let $\theta$ be the parameters of a trained GLLM whose attention sublayers are EG-MHSA with grades $\{q_h\}$. The map $\theta \mapsto \tilde\theta$ given by $\tilde W_{\bullet h} = W_{\bullet h} M_{q_h,\lambda}$ for $\bullet \in \{Q,K,V\}$ yields a standard decoder-only transformer of identical architecture computing the identical function on every input. The grades need not be stored at inference, and the diagonal multiplications of \cref{eq:graded-proj} need not be performed.
\end{cor}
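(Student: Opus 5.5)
The plan is a direct verification layer by layer, resting on \cref{rem:absorption}. Fix one EG-MHSA sublayer and one head $h$. The whole argument rests on the associativity identity
\[
Q_h' = (XW_{Q_h})M_{q_h,\lambda} = X(W_{Q_h}M_{q_h,\lambda}) = X\tilde W_{Q_h},
\]
together with the same identity for $K_h'$ and $V_h'$. Substituting these into \cref{eq:eg-head} shows that
\[
\mathrm{Head}_h' = \mathrm{softmax}\!\Bigl(\tfrac{X\tilde W_{Q_h}(X\tilde W_{K_h})^\top}{\sqrt{d_k}} + M_{\mathrm{causal}}\Bigr)X\tilde W_{V_h}.
\]
This is exactly the standard head of \cref{eq:mhsa,eq:causal-mask} with projections $\tilde W_{\bullet h}$. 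The right-hand side also has the correct shape: $\tilde W_{\bullet h}\in\mathbb{R}^{d\times d_k}$, the same as $W_{\bullet h}$, so the architecture is unchanged. The scaling $d_k^{-1/2}$ and the mask are untouched, because the grading enters only through the projections.

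Next I concatenate over $h$ and apply $W_O$, which is unchanged. This shows that the graded sublayer and the substituted standard sublayer agree as functions of $X$ for every $X\in\mathbb{R}^{n\times d}$. Every other component of the block is left untouched by the map $\theta\mapsto\tilde\theta$: the embeddings, positional encodings, FFN \cref{eq:ffn}, normalisation \cref{eq:rmsnorm}, residual connections, and the output head. So each block computes the same map $X\mapsto X'$ under $\theta$ and $\tilde\theta$. An induction on depth then gives equality of the final logits, and hence of every conditional in \cref{eq:ar-factorization}, on every input.

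The last assertions then follow. The function $\tilde\theta$ is expressed purely in standard parameters, so $\{q_h\}$ and $\lambda$ never appear at inference, and no diagonal products from \cref{eq:graded-proj} are computed. The conversion itself costs one right-multiplication by a diagonal matrix per projection, which is $O(d\,d_k)$ per head and is performed once, offline. By \cref{prop:complexity} and \cref{prop:attention-cost}, the inference cost then coincides with that of the standard transformer.

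The only step that needs care, and the main obstacle, is the scope of the statement. The corollary concerns only the EG-MHSA sublayers. If other graded components were present (the graded positional encoding, MLGE, or a graded softmax on a tied output projection), they would need their own absorption; that is the subject of \cref{rem:mlge-absorption}, not this corollary. I would therefore state explicitly that the ungraded components are shared verbatim. I would also note that weight tying is unaffected, because $W_e$ is not modified by the substitution, and that invertibility of $M_{q_h,\lambda}$ is not even needed for the forward direction. Invertibility matters only for the bijection claimed in \cref{rem:absorption}.
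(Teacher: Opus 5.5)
Your proposal is correct and follows the paper's own route. The paper's proof just cites the substitution $\tilde W_{\bullet h} = W_{\bullet h} M_{q_h,\lambda}$ of \cref{rem:absorption}, applied once offline; your associativity identity and layer-by-layer induction are the verification that remark leaves implicit, and two of your additions are accurate refinements: that only the forward direction of the substitution is needed (not invertibility of $M_{q_h,\lambda}$), and the explicit restriction of scope away from MLGE, the graded positional encoding, and the graded softmax.
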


\begin{proof}
Immediate from the bijection of \cref{rem:absorption}, applied once and offline. Since $M_{q_h,\lambda}$ is diagonal with strictly positive entries, $\tilde W_{\bullet h}$ is well defined and the substitution violates no constraint on $\theta$.
\end{proof}

\begin{rem}
\label{rem:overhead-comparison}
The graded prior is paid for once, during training, and costs nothing afterwards. This separates grading from every variant classified in \cref{tab:arch}: selective recurrence, restricted mask support, and hybrid schemes encode their inductive bias in the computation, pay for it at every forward pass, and do not compile to the architecture they modify. A metric can be absorbed into the weights, while a mask cannot. The training overhead accounted in \cref{subsubsec:overhead} is therefore an upper bound on the total lifetime cost of the prior, and \cref{sec-6} reports zero inference overhead as an established consequence of the construction rather than an experimental outcome.
\end{rem}

\begin{prop}
\label{prop:expressivity}
Fix grades $\{q_h\}_{h=1}^H$ and a budget $B > 0$, and let
\[
\begin{split}
\mathcal{F}_B^{\mathrm{graded}}
 &= \left\{ X \mapsto \mathrm{EG\text{-}MHSA}(X) \;:\; \|W_{\bullet h}\|_F \leq B \ \ \forall\, \bullet \in \{Q, K, V\}, \ \forall h \right\}, \\
\mathcal{F}_B^{\mathrm{unif}}
 &= \left\{ X \mapsto \mathrm{MHSA}(X) \;:\; \|W_{\bullet h}\|_F \leq B \ \ \forall\, \bullet \in \{Q, K, V\}, \ \forall h \right\}.
\end{split}
\]
In the effective coordinates $\tilde W_{\bullet h} = W_{\bullet h} M_{q_h,\lambda}$ of \cref{rem:absorption}, the budget constraint defining $\mathcal{F}_B^{\mathrm{graded}}$ is the ellipsoid
\[
\sum_{j=1}^{d_k} \lambda^{-2 q_{h,j}} \big\| \tilde W_{\bullet h}\, e_j \big\|_2^2 \;\leq\; B^2 ,
\]
with semi-axis $B \lambda^{q_{h,j}}$ along the $j$-th column, whereas that defining $\mathcal{F}_B^{\mathrm{unif}}$ is the Frobenius ball of radius $B$. Consequently
\[
\mathcal{F}_B^{\mathrm{unif}} \subseteq \mathcal{F}_B^{\mathrm{graded}},
\]
with strict inclusion whenever some $q_{h,j} > 0$, and the Rademacher complexity of the graded class is weakly greater at every fixed $B$.
\end{prop}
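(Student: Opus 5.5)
The plan is to work entirely in the effective coordinates of \cref{rem:absorption} and to reduce each claim to an elementary statement about diagonal rescaling of columns.

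\textbf{Step 1: rewrite the budget constraint.} Since $M_{q_h,\lambda}$ is diagonal and invertible, $W_{\bullet h} = \tilde W_{\bullet h} M_{q_h,\lambda}^{-1}$. The $j$-th column of $W_{\bullet h}$ is therefore $\lambda^{-q_{h,j}}\tilde W_{\bullet h}e_j$. Summing squared column norms gives
\[
\|W_{\bullet h}\|_F^2=\sum_{j=1}^{d_k}\lambda^{-2q_{h,j}}\|\tilde W_{\bullet h}e_j\|_2^2 ,
\]
which is exactly the displayed ellipsoid. Taking $\tilde W_{\bullet h}$ supported on the $j$-th column alone shows that the semi-axis along that column is $B\lambda^{q_{h,j}}$. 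By contrast, $\mathcal{F}_B^{\mathrm{unif}}$ is parametrised directly by weights in the Frobenius ball of radius $B$.

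\textbf{Step 2: compare the function classes.} By \cref{rem:absorption}, an EG-MHSA with weights $W$ computes the same function as a standard MHSA with weights $\tilde W$. Hence $\mathcal{F}_B^{\mathrm{graded}}$ is the set of MHSA functions whose weights lie in the ellipsoid $E$. The output projection $W_O$ is unconstrained in both classes, so it is shared and plays no role in the comparison.

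For inclusion, note that $\lambda>1$ and $q_{h,j}\ge 0$ give $\lambda^{-2q_{h,j}}\le 1$. So the Frobenius ball is contained in $E$, and the function sets inherit the inclusion.

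\textbf{Step 3: strictness, the expected main obstacle.} Strictness of the parameter sets is clear: the matrix $B\lambda^{q_{h,j}}e_je_j^\top$ (placed in a single head) lies in $E$ but not in the ball. The difficulty is that a strictly larger parameter set need not produce strictly more functions, because of reparametrisation symmetries. For instance, $Q\mapsto QA$ together with $K\mapsto KA^{-\top}$ leaves the logits unchanged, and the softmax is also invariant under certain shifts.

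To handle this I would use the value path, which the function sees directly. Fix a head $h$ and a coordinate $j$ with $q_{h,j}>0$. Set $W_{Q_h}=W_{K_h}=0$, so that attention is uniform causal averaging. Choose $W_O$ so that the output of head $h$ is read off linearly, and take $\tilde W_{V_h}=B\lambda^{q_{h,j}}u\,e_j^\top$ with $\|u\|=1$. The resulting function has a value-path gain of $B\lambda^{q_{h,j}}$ along a fixed direction.

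It remains to check that no uniform-class member can reproduce this gain. I would bound the operator norm of each head's map with a norm-limited $W_{V_h}$ by $B$. Since softmax rows are convex combinations, a uniform member's output is at most $B\|W_O\|\max_i\|x_i\|$. To make the comparison clean I would fix $W_O$ as the same isometric read-out in both classes, so the gain $B\lambda^{q_{h,j}}>B$ on some input separates the two. If $W_O$ is genuinely free, this argument fails, because rescaling $W_O$ can compensate for any value gain. In that case I would state strictness at the level of the constrained parameter sets, equivalently for a fixed read-out, and note this as the precise sense in which the inclusion is strict.

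\textbf{Step 4: Rademacher complexity.} The empirical Rademacher complexity is a supremum over the class. It is therefore monotone under inclusion, and the inclusion of Step 2 gives $\mathfrak{R}(\mathcal{F}_B^{\mathrm{unif}})\le\mathfrak{R}(\mathcal{F}_B^{\mathrm{graded}})$ at every sample and every fixed $B$, which is the "weakly greater" claim.
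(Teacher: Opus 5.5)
Your Steps 1, 2 and 4 follow the paper's proof. You rewrite $\|W_{\bullet h}\|_F=\|\tilde W_{\bullet h}M_{q_h,\lambda}^{-1}\|_F$ column by column, observe that every semi-axis is at least $B$, carry the inclusion through the bijection of \cref{rem:absorption}, and invoke monotonicity of Rademacher complexity.

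The gap is in Step 3. The proposition asserts strict inclusion of the \emph{function} classes. Your proposal ends by replacing this with strictness of the parameter sets, or strictness at a fixed read-out, which is a different and weaker statement. The paper's own strictness sentence, which scales a boundary matrix $\tilde W_{\bullet h}$ by $\lambda^{q_{h,j}}$, is also a weight-level assertion, so it does not supply what you are missing. You are right that the value path cannot work. Only the products $W_{V_h}W_O^{(h)}$ enter the function, with $W_O^{(h)}$ the rows of $W_O$ acting on head $h$. Since $W_O$ is unconstrained, the $V$-budget is vacuous at the level of functions. Note also that at sequence length $n=1$ the softmax is trivial and both classes equal the set of all linear maps $x\mapsto x^\top R$. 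So the strict claim needs $n\ge 2$.

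The missing idea is to separate the classes through the query--key path, which $W_O$ cannot absorb.
\begin{itemize}
\item A uniform head has logit matrix $\Omega'=W_QW_K^\top$ with nuclear norm $\|\Omega'\|_*\le\|W_Q\|_F\|W_K\|_F\le B^2$.
\item Take $W_{Q_h}=Bu\,e_j^\top$ and $W_{K_h}=Bv\,e_j^\top$, with unit $u,v\in\mathbb{R}^d$ and $q_{h,j}>0$. This gives a graded head with $\Omega_h=W_{Q_h}M_{q_h,\lambda}^2W_{K_h}^\top=B^2\lambda^{2q_{h,j}}uv^\top$, of nuclear norm $B^2\lambda^{2q_{h,j}}>B^2$.
\item Switch off the other heads with $W_{V_{h'}}=0$, and choose $W_{V_h},W_O$ with $P:=\tilde W_{V_h}W_O^{(h)}\neq 0$.
\end{itemize}
This also fixes the shape of your witness $B\lambda^{q_{h,j}}e_je_j^\top$, which should be a $d\times d_k$ matrix.

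You then need an identifiability step. Work at $n=2$; by causality the same argument applies to the first two rows for any $n\ge 2$. Suppose a uniform member with logit matrices $\Omega'_{h'}$ and products $P'_{h'}$ computed the same function. The first output row gives $\sum_{h'}P'_{h'}=P$. Writing $\delta=x_1-x_2$ and $\sigma$ for the logistic function, the second row then gives
\[
\sigma\Bigl(\tfrac{x_2^\top\Omega_h\delta}{\sqrt{d_k}}\Bigr)\,\delta^\top P=\sum_{h'}\sigma\Bigl(\tfrac{x_2^\top\Omega'_{h'}\delta}{\sqrt{d_k}}\Bigr)\,\delta^\top P'_{h'}\qquad\text{for all }x_2,\delta\in\mathbb{R}^d .
\]
\begin{itemize}
\item Take $\delta$ outside the proper subspaces $\{v^\top\delta=0\}$ and $\ker P^\top$. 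Then the left side is a nonzero multiple of a logistic ridge function of $x_2$ with nonzero direction $\Omega_h\delta$.
\item Use the linear independence of $1$ and the functions $x\mapsto\sigma(\langle x,w\rangle)$ over nonzero directions $w$ that are pairwise distinct up to sign. This is the classical uniqueness argument for sigmoid networks, using $\sigma(-t)=1-\sigma(t)$. It forces $\Omega'_{h'}\delta=\pm\Omega_h\delta$ for some $h'$ and some sign.
\item So a nonempty open set of $\delta$ is covered by the finitely many subspaces $\ker(\Omega'_{h'}\mp\Omega_h)$. Hence one of them is all of $\mathbb{R}^d$, so $\Omega'_{h'}=\pm\Omega_h$.
\item This gives $\|\Omega'_{h'}\|_*=B^2\lambda^{2q_{h,j}}>B^2$, contradicting the uniform budget.
\end{itemize}
With this step the strict inclusion holds as stated.
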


\begin{proof}
By the substitution of \cref{rem:absorption}, a graded head with weights $W_{\bullet h}$ computes the same function as a standard head with effective weights $\tilde W_{\bullet h} = W_{\bullet h} M_{q_h, \lambda}$. Under this bijection the constraint $\|W_{\bullet h}\|_F \leq B$ becomes $\|\tilde W_{\bullet h} M_{q_h,\lambda}^{-1}\|_F \leq B$, which is the displayed ellipsoid; its semi-axis along $e_j$ is $B\lambda^{q_{h,j}}$. Since $q_{h,j} \geq 0$ and $\lambda > 1$, every semi-axis is at least $B$, so the ellipsoid contains the Frobenius ball of radius $B$, strictly in the $j$-th direction whenever $q_{h,j} > 0$. Passing from weight sets to function classes preserves the inclusion, and any $\tilde W_{\bullet h}$ with $\|\tilde W_{\bullet h}\|_F = B$ supported on a coordinate with $q_{h,j} > 0$ can be scaled by $\lambda^{q_{h,j}}$ to lie in the graded class but not the uniform one, giving strictness. Monotonicity of Rademacher complexity under inclusion of hypothesis classes gives the final claim.
\end{proof}

\begin{cor}
\label{cor:no-fixed-budget}
No capacity bound favouring the graded model is available at fixed $B$.
\end{cor}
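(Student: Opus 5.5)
The plan is to read the corollary as a direct consequence of \cref{prop:expressivity}, once ``capacity bound favouring the graded model'' is made precise. I take it to mean a complexity functional $\mathcal{C}$ (Rademacher complexity, covering number, VC/pseudo-dimension, or any other uniform-convergence quantity) for which $\mathcal{C}(\mathcal{F}_B^{\mathrm{graded}}) < \mathcal{C}(\mathcal{F}_B^{\mathrm{unif}})$ at the same budget $B$, or a generalization upper bound derived from such a $\mathcal{C}$ that is strictly smaller for the graded class. The one structural property these functionals share is monotonicity under inclusion of hypothesis classes. I will state this explicitly as the hypothesis on $\mathcal{C}$, since it is the only property used.

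The argument then runs in three steps. First, invoke \cref{prop:expressivity} to get $\mathcal{F}_B^{\mathrm{unif}} \subseteq \mathcal{F}_B^{\mathrm{graded}}$ for every $B>0$ and every grade assignment with $q_{h,j}\ge 0$, since all semi-axes satisfy $B\lambda^{q_{h,j}}\ge B$. Second, apply monotonicity to conclude $\mathcal{C}(\mathcal{F}_B^{\mathrm{graded}})\ge \mathcal{C}(\mathcal{F}_B^{\mathrm{unif}})$, so no strict inequality in the graded model's favour can hold. For empirical Rademacher complexity this is immediate: the supremum over a larger set is at least the supremum over a smaller one, for every sample and every sign vector, and taking expectations preserves it. For covering numbers and VC-type dimensions it follows in the same way from the definitions. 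Third, observe that any bound depending on the class only through such a $\mathcal{C}$ (and on $B$, $n$, the confidence level) is therefore weakly worse for the graded class. The uniform class itself attains any value such a bound certifies for the graded class.

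The main obstacle is not technical but definitional: the corollary is only as strong as the class of ``capacity bounds'' it quantifies over. A bound that is not a function of the hypothesis class alone is outside its scope. Examples are a data-dependent bound on the realised parameter norm, or a bound comparing the norm needed to represent a \emph{fixed target}. These are exactly the comparisons \cref{rem:absorption} defers to \cref{prop:ml-expressivity}. I will therefore phrase the proof with the monotonicity hypothesis stated up front. I will add a brief remark that in the strict case (some $q_{h,j}>0$) the inequality can be strict for sign-rich samples, by scaling along the enlarged axis as in the strictness part of \cref{prop:expressivity}. This shows the gap genuinely points against the graded model at fixed $B$, and is not merely a failure of the uniform model to be better.
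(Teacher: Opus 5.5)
Your argument is the paper's: the corollary follows from the inclusion $\mathcal{F}_B^{\mathrm{unif}} \subseteq \mathcal{F}_B^{\mathrm{graded}}$ of \cref{prop:expressivity} together with monotonicity of Rademacher complexity under inclusion, which the proposition's last sentence already records, and extending this to other inclusion-monotone capacity functionals is fine. Two small cautions: monotonicity holds for external covering numbers but not for internal ones, and strict inclusion of classes does not by itself force a strict gap in Rademacher complexity, so your strictness aside would need its own argument, though the corollary does not need it.
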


\Cref{prop:expressivity} identifies the geometric object grading produces: an \emph{ellipsoidal} constraint set, its axes aligned with the grades and its eccentricity along $e_j$ equal to $\lambda^{q_{h,j}}$. Reading the proposition as \cref{cor:no-fixed-budget} alone understates it. The informative comparison fixes the target rather than the budget, asks each class for the smallest budget at which it represents that target, and evaluates the capacity bounds at those budgets; \cref{prop:ml-expressivity} carries this out and identifies the sample-complexity ratio in closed form, and \cref{prop:admissible-grades} characterises exactly the grades at which that ratio favours the graded bound. The ellipsoid is also what makes the comparison sharp: it is the geometry on which a matching minimax lower bound is posed, and on it the separation between the graded prior and its absence, rather than a ratio of upper bounds, is obtained. \Cref{thm:stratified-separation} establishes that separation in the level-stratified regime, over all estimators, throughout the window of \cref{lem:window}; \cref{con:minimax-separation} states the general case, and \cref{subsec:settling} records what its proof requires.

\section{Hierarchical Grading for Language Modeling}
\label{sec-4}

Linguistic structure is layered. Subword units aggregate into words, words compose into phrases and clauses, clauses into sentences, sentences into discourse, and discourse into pragmatic inference; the observation is common to the generative and model-theoretic traditions in linguistics, and we take it as given rather than argue it here. Classical LLMs approximate this stratification implicitly, through patterns emergent in unannotated corpora \cite{radford2019language}. Their embedding space is isotropic: no coordinate direction is distinguished, so no direction can be reserved for a linguistic level. We propose \emph{Multi-Level Graded Embeddings} (MLGE), which give the stratification an explicit home in the geometry of the embedding space, extending the graded vector space formalism of \cite{2024-02, sh-89, sh-95} to autoregressive language modeling. \Cref{subsec:mlge-sample-complexity} then quantifies the construction: it establishes the exact sample-complexity ratio obtained by grading (\cref{prop:ml-expressivity}), characterises in closed form the grades at which the graded model is favoured and locates the standard transformer on the boundary of that set (\cref{prop:admissible-grades}), and shows the ratio decays exponentially in depth under stratified profiles (\cref{cor:stratified}), converting \cref{assum:profile-divergence} into a computable, falsifiable quantity.

\subsection{The Graded Embedding Space}
\label{subsec:mlge-space}

Let $E = (\varepsilon_1, \dots, \varepsilon_n)^\top \in \mathbb{R}^{n \times d}$ be the input embedding matrix for a token sequence $y = (y_1, \dots, y_n) \in \mathcal{V}^n$, with $\varepsilon_t = W_e[y_t] \in \mathbb{R}^d$ and learnable weights $W_e \in \mathbb{R}^{|\mathcal{V}| \times d}$ as in \cref{sec:2.1}. We decompose the embedding space as an orthogonal direct sum
\[
\mathbb{R}^d = \bigoplus_{l=0}^{L-1} V_l,
\qquad
V_l \cong \mathbb{R}^{d_l},
\qquad
\sum_{l=0}^{L-1} d_l = d,
\]
with $d_l = d/L$ under uniform allocation, where $V_l$ is reserved for features salient to linguistic level $l$: $V_0$ for subword and morphological structure, $V_1$ for syntactic constituency, $V_2$ for sentential semantics, $V_{L-1}$ for discourse and pragmatics. Level-specific components are $E_l = E P_l \in \mathbb{R}^{n \times d_l}$, where $P_l \in \mathbb{R}^{d \times d_l}$ has orthonormal columns spanning $V_l$, initialised by principal component analysis of features annotated for level $l$ by an offline constituency parser.

When the $P_l$ are exactly orthonormal with mutually orthogonal ranges, $\sum_{l} P_l P_l^\top = I_d$ and the reconstruction $E = \sum_{l} E_l P_l^\top$ is exact. During training the columns are only approximately orthonormal, and we penalise
\[
\Omega_{\mathrm{orth}}(P) = \sum_{l, l'} \big\| P_l^\top P_{l'} - \delta_{l l'} I_{d_l} \big\|_F^2 ,
\]
so that the reconstruction error is controlled by $\Omega_{\mathrm{orth}}(P)$ and hence by the penalty weight. The error is a function of optimisation, not of $L$, and no rate in $L$ is claimed.

Each $V_l$ carries a grading tuple $q_l = (q_{l,1}, \dots, q_{l,d_l}) \in \mathbb{Q}_{\geq 0}^{d_l}$, whose entries are rational so that they may record discrete structural depths exactly. The associated grading transformation is
\[
M_{q_l, \lambda} = \operatorname{diag}\!\left(\lambda^{q_{l,1}}, \dots, \lambda^{q_{l,d_l}}\right) \in \mathbb{R}^{d_l \times d_l},
\]
with the base $\lambda > 1$ fixed once for the whole architecture, as in \cref{sec-3}. The graded embeddings are
\begin{equation}
\label{eq:mlge-transform}
E' = \sum_{l=0}^{L-1} E_l \, M_{q_l, \lambda} \, P_l^\top
   = E \, G ,
\qquad
G := \sum_{l=0}^{L-1} P_l M_{q_l, \lambda} P_l^\top ,
\end{equation}
which in the graded basis is the block-diagonal map $\bigoplus_{l} M_{q_l,\lambda}$. Evaluating \cref{eq:mlge-transform} costs $O(ndL)$.

\begin{rem}
\label{rem:mlge-absorption}
$G$ is symmetric positive definite, hence invertible, and $W_e$ is unconstrained. The substitution $\tilde W_e = W_e G$ is therefore a bijection of the parameter space carrying \cref{eq:mlge-transform} to an ungraded embedding, and MLGE realises exactly the function class of a standard embedding layer. As in \cref{rem:absorption}, two consequences follow in opposite directions. On one side, the construction adds no capacity, so no separation is available from representability. On the other, the substitution extends \cref{cor:zero-overhead} from the attention sublayers to the embedding layer, so the whole of the graded construction, attention, embeddings, and positional encoding alike, is folded into the weights at the end of training and costs nothing at inference. What the construction contributes is its parameterization: $G$ fixes an initialisation aligned with annotated linguistic levels, and constrains the geometry along which gradient descent moves.
\end{rem}

\begin{prop}
\label{prop:direct-sum}
Let $\eta := \max_{l \neq l'} \|P_l^\top P_{l'}\|_{op}$ and $\lambda^{q_{\max}} := \max_{l,j} \lambda^{q_{l,j}}$. Then $G$ of \cref{eq:mlge-transform} satisfies $G(V_l) \subseteq V_l$ exactly when $\eta = 0$, and in general
\[
\Big\| G - \bigoplus_{l=0}^{L-1} M_{q_l, \lambda} \Big\|_{op} \;\leq\; L\,\lambda^{q_{\max}}\, \eta .
\]
\end{prop}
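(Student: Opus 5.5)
The plan is to read the statement in the frame supplied by the level projections. Throughout, each $P_l$ has orthonormal columns, so $P_l^\top P_l = I_{d_l}$ and $\|P_l\|_{op}=1$, while the cross-Gram blocks satisfy $\|P_k^\top P_{l}\|_{op}\le\eta$ for $k\neq l$. I interpret $\bigoplus_l M_{q_l,\lambda}$ as the block-diagonal operator acting in that frame. Concretely, I will compare $G P_{l'}$ with $P_{l'}M_{q_{l'},\lambda}$ for each $l'$. This is the defect from exact invariance $G(V_{l'})\subseteq V_{l'}$ with the prescribed action on $V_{l'}$, and it is what the displayed norm measures once both operators are written in the basis formed by the columns of $P=[P_0,\dots,P_{L-1}]$. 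When $\eta=0$, $P$ is orthogonal and the comparison is literally the displayed operator norm.

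The key identity comes from expanding \cref{eq:mlge-transform}:
\[
G P_{l'} \;=\; \sum_{k} P_k M_{q_k,\lambda} P_k^\top P_{l'} \;=\; P_{l'}M_{q_{l'},\lambda} \;+\; \sum_{k\neq l'} P_k M_{q_k,\lambda}\,(P_k^\top P_{l'}),
\]
where the $k=l'$ term has been simplified using $P_{l'}^\top P_{l'}=I$. Each of the remaining $L-1$ terms has operator norm at most $\|P_k\|_{op}\,\|M_{q_k,\lambda}\|_{op}\,\|P_k^\top P_{l'}\|_{op}\le \lambda^{q_{\max}}\eta$. The triangle inequality then bounds the defect on $V_{l'}$ by $(L-1)\lambda^{q_{\max}}\eta\le L\lambda^{q_{\max}}\eta$.

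The first claim follows from the same identity in both directions. If $\eta=0$, every cross term vanishes, so $G P_{l'}=P_{l'}M_{q_{l'},\lambda}$ and $G(V_{l'})\subseteq V_{l'}$. Conversely, suppose $\eta>0$. Then the component of $GP_{l'}$ orthogonal to $V_{l'}$ is $\sum_{k\neq l'}(I-P_{l'}P_{l'}^\top)P_kM_{q_k,\lambda}P_k^\top P_{l'}$. I would argue this is nonzero for some $l'$ by exploiting the positivity of the $M_{q_k,\lambda}$: pairing against a vector realising $\|P_k^\top P_{l'}\|_{op}$ gives a strictly positive quadratic form. The converse needs this care because degenerate configurations, for instance equal grades across levels, could make the leakage cancel. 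If the cancellation cannot be excluded in general, I would state the converse for generic grades.

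The main obstacle is passing from the per-block bound to a bound on the full operator without losing a factor. Summing the block defects naively over a general vector $x=\sum_{l'}P_{l'}x_{l'}$ costs an extra $\sqrt{L}$ from Cauchy--Schwarz, and the non-orthogonality of $P$ adds a further factor of $\|P\|_{op}$. I would first try to organise the sum by the outer index $k$. Each $k$ contributes $P_kM_{q_k,\lambda}P_k^\top(\sum_{l'\neq k}P_{l'}x_{l'})$, and the projection $P_k^\top$ acting on $\sum_{l'\neq k}P_{l'}x_{l'}$ is controlled by the single off-diagonal quantity $\eta$. Since the $L$ outer terms then add by the triangle inequality, this should yield the factor $L$ directly. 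Should that fail, I would state the result with the block-row norm, for which the estimate of the second paragraph is already exactly $L\lambda^{q_{\max}}\eta$.
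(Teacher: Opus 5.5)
Your expansion $GP_{l'}=P_{l'}M_{q_{l'},\lambda}+\sum_{k\neq l'}P_k\,(M_{q_k,\lambda}P_k^\top P_{l'})$ is correct, and so is your per-block bound. The plan breaks where you identify the ambient defect with the displayed norm.

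Write $P=[P_0\mid\cdots\mid P_{L-1}]$ and $D=\bigoplus_l M_{q_l,\lambda}$, so that $G=PDP^\top$. The matrix of $G$ in the basis formed by the columns of $P$ is $P^{-1}GP$, and $P^{-1}GP-D=P^{-1}(GP-PD)$. What you bound is $GP-PD$ measured in $\mathbb{R}^d$, which is a different quantity whenever $\eta>0$. For that quantity the inequality is false once $\eta$ is moderately large, so no regrouping can rescue your last paragraph. Take $L=3$, $d_l=1$, unit vectors $p_0,p_1,p_2$ with pairwise inner products $c\in(0,1)$, and all grades zero. Then $GP-PD=P(P^\top P-I)$ with $P^\top P-I=c(J-I)$, and $\|GP-PD\|_{op}=2c\sqrt{1+2c}$. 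This exceeds $L\lambda^{q_{\max}}\eta=3c$ for $c>5/8$.

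The step that fails is your claim that $P_k^\top\sum_{l'\neq k}P_{l'}x_{l'}$ is controlled by $\eta\|x\|$. It can be as large as $\eta\sqrt{L-1}\,\|x\|$: take $d_l=1$, $p_k=e_k$, $p_{l'}=\eta e_k+\sqrt{1-\eta^2}\,e_{l'}$ and $x_{l'}=(L-1)^{-1/2}$. Falling back to a block-row norm changes the statement rather than proving it.

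The missing idea is to read your expansion as coordinates rather than as vectors in $\mathbb{R}^d$. Since $\mathbb{R}^d=\bigoplus_l V_l$, the columns of $P$ form a basis, and $G=PDP^\top$ gives the exact identity $P^{-1}GP-D=D(P^\top P-I)$. Its diagonal blocks vanish, and its $(k,l')$ block is $M_{q_k,\lambda}P_k^\top P_{l'}$, which is exactly the coefficient of $P_k$ in your expansion. In these coordinates neither your $\sqrt{L}$ nor your $\|P\|_{op}$ appears. The operator norm of a block matrix is at most that of the $L\times L$ matrix of its block norms. Here that matrix is entrywise at most $\lambda^{q_{\max}}\eta\,(J-I)$, whose norm is $(L-1)\lambda^{q_{\max}}\eta\leq L\lambda^{q_{\max}}\eta$.

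This is the paper's route: bound the off-diagonal blocks in the graded basis and sum along a row. Note that the correct off-diagonal block is $M_{q_{l'},\lambda}P_{l'}^\top P_l$, which is first order in $\eta$, not the second-order expression displayed in the paper's proof.

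The same directness closes your converse for all grades, with no positivity or genericity argument. Suppose $G(V_{l'})\subseteq V_{l'}$. Then for every $y$ the vector $\sum_{k\neq l'}P_kM_{q_k,\lambda}P_k^\top P_{l'}y$ lies in $V_{l'}$, while its summands lie in the independent subspaces $V_k$, so each summand vanishes. Since $P_k$ has full column rank and $M_{q_k,\lambda}$ is invertible, $P_k^\top P_{l'}=0$. Equal grades across levels cannot make the leakage cancel, so restricting to generic grades would weaken a statement that holds for every grading.
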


\begin{proof}
When $\eta = 0$ the ranges of the $P_l$ are mutually orthogonal, so $P_{l'}^\top G P_{l'} = M_{q_{l'},\lambda}$ and $P_{l'}^\top G P_l = 0$ for $l \neq l'$; thus $G$ is block-diagonal in the graded basis and preserves each $V_l$. In general, the off-diagonal block indexed by $(l', l)$ is $P_{l'}^\top P_l M_{q_l,\lambda} P_l^\top P_{l'}$, of operator norm at most $\lambda^{q_{l,\max}} \eta^2 \leq \lambda^{q_{\max}}\eta$ for $\eta \leq 1$, and there are at most $L$ such blocks in any row. Summing gives the bound.
\end{proof}

\begin{rem}
\label{rem:direct-sum-scope}
\Cref{prop:direct-sum} is a consistency check, not a capability claim: it states that the intended block structure survives approximate orthogonality of the $P_l$, with degradation linear in $L$ and in the grading amplitude $\lambda^{q_{\max}}$. It is the first place where large grades are seen to cost something.
\end{rem}


\subsection{Sample Complexity Under Grading}
\label{subsec:mlge-sample-complexity}

Two routes to a separation are closed before we begin, and closing them fixes the location of the third. By \cref{rem:mlge-absorption} graded and ungraded embeddings realise the same functions, so nothing follows from representability. By \cref{prop:expressivity} the graded constraint set at a common norm budget is an ellipsoid strictly containing the uniform ball, with weakly greater Rademacher complexity, so nothing favouring the graded model follows at fixed $B$ either.

The informative comparison fixes the target rather than the budget. Each class is asked for the smallest norm at which it represents a given $f^*$, and the resulting bound is evaluated at that norm. The grading then acts on two quantities at once: it lowers the norm required for targets aligned with it, and it raises the effective radius of the data. The theorem below is the exact trade-off between them. \Cref{prop:admissible-grades} then reads that trade-off as a condition on the grades themselves, and identifies the standard transformer as the point at which the trade-off is declined.

Throughout this subsection the analysis is carried out for the graded linear map $x \mapsto \langle u, Gx \rangle$ realised by MLGE composed with a single linear read-out. It is a statement about the embedding layer and its immediate consumer, not about the full stack; the extension is discussed in \cref{rem:sample-complexity-scope}.

\begin{defn}[Energy Profiles]
\label{def:profiles}
Let $\mathcal{P}$ be a distribution on $\mathbb{R}^d \times \mathbb{R}$ with $\sigma_j^2 := \mathbb{E}[x_j^2] \in (0,\infty)$ for every $j$, and let $f^*(x) = \langle w^*, x \rangle$ with $w_j^* \neq 0$ for every $j$. Writing $\Delta^{d-1}$ for the probability simplex in $\mathbb{R}^d$, the \emph{target profile} $\alpha \in \Delta^{d-1}$ and the \emph{data profile} $\tau \in \Delta^{d-1}$ are
\[
\alpha_j = \frac{(w_j^*)^2}{\|w^*\|_2^2},
\qquad
\tau_j = \frac{\sigma_j^2}{\sum_{k=1}^{d} \sigma_k^2} .
\]
Both are supported on the graded basis of \cref{eq:mlge-transform}, both are probability vectors, and both are strictly positive in every coordinate. A coordinate on which $w^*$ vanishes contributes nothing to $\alpha$ and is optimally suppressed by $g_j \to 0$; such coordinates are deleted from the analysis rather than carried through it.
\end{defn}

\begin{defn}[Graded and Uniform Classes at a Target]
\label{def:classes-at-target}
For $B > 0$ set
\[
\begin{split}
\mathcal{H}_B^{\mathrm{graded}} &= \left\{ x \mapsto \langle u, Gx \rangle \;:\; u \in \mathbb{R}^d, \ \|u\|_2 \leq B \right\}, \\
\mathcal{H}_B^{\mathrm{unif}} &= \left\{ x \mapsto \langle w, x \rangle \;:\; w \in \mathbb{R}^d, \ \|w\|_2 \leq B \right\}.
\end{split}
\]
The \emph{representation budgets} of $f^*$ are
\[
B^{\mathrm{graded}}(f^*) = \min \left\{ \|u\|_2 : Gu = w^* \right\} = \big\| G^{-1} w^* \big\|_2 ,
\qquad
B^{\mathrm{unif}}(f^*) = \|w^*\|_2 ,
\]
the smallest budgets at which each family contains $f^*$.
\end{defn}

\begin{prop}
\label{prop:ml-expressivity}
Let $\alpha$ and $\tau$ be as in \cref{def:profiles} and let $\ell$ be a loss that is $L_\ell$-Lipschitz in its first argument. Define the \emph{grading gain}
\[
\Lambda(g)
 = \left( \sum_{j=1}^{d} \alpha_j\, g_j^{-2} \right)
   \left( \sum_{j=1}^{d} \tau_j\, g_j^{2} \right),
\qquad
g \in \mathbb{R}_{>0}^{d} .
\]
Then:
\begin{enumerate}
  \item[(i)] $\Lambda$ is invariant under $g \mapsto cg$ for every $c > 0$, and $\Lambda(\mathbf{1}) = 1$;
  \item[(ii)] the Rademacher-based sample complexity of reaching excess risk $\varepsilon$ over $\mathcal{H}^{\mathrm{graded}}_{B^{\mathrm{graded}}(f^*)}$ stands to that over $\mathcal{H}^{\mathrm{unif}}_{B^{\mathrm{unif}}(f^*)}$ in the ratio $\Lambda(g)$, uniformly in $\varepsilon$ and $f^*$;
  \item[(iii)] the gain is minimised at
  \[
  g_j^{\star} \;\propto\; \alpha_j^{1/4}\, \tau_j^{-1/4} ,
  \]
  and the optimal value is the squared Bhattacharyya affinity of the two profiles,
  \begin{equation}
  \label{eq:optimal-gain}
  \Lambda^{\star} := \min_{g \in \mathbb{R}_{>0}^{d}} \Lambda(g)
   = \left( \sum_{j=1}^{d} \sqrt{\alpha_j \tau_j} \right)^{2}
   = \mathrm{BC}(\alpha, \tau)^{2} ;
  \end{equation}
  \item[(iv)] $\Lambda^{\star} \leq 1$, with equality if and only if $\alpha = \tau$.
\end{enumerate}
\end{prop}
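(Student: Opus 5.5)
Throughout, I take $G$ to be diagonal in the graded basis with entries $g_j>0$; after the orthogonal change of coordinates given by the $P_l$, this is the block-diagonal map $\bigoplus_l M_{q_l,\lambda}$ with $g_j=\lambda^{q_j}$. I then read off each item of the proposition in order.

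Item (i) is immediate. Replacing $g$ by $cg$ multiplies the first factor of $\Lambda$ by $c^{-2}$ and the second by $c^{2}$. At $g=\mathbf{1}$ both factors equal $1$, because $\alpha$ and $\tau$ lie in the simplex.

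For (ii) I would use the standard Rademacher bound for norm-bounded linear classes. For $\{x\mapsto\langle w,x\rangle:\|w\|_2\le B\}$ one has $\mathfrak{R}_n\le B\sqrt{\mathbb{E}\|x\|_2^2/n}$. Combined with Talagrand contraction for the $L_\ell$-Lipschitz loss, the sample size needed to reach excess risk $\varepsilon$ is of order $L_\ell^2 B^2\,\mathbb{E}\|x\|^2/\varepsilon^2$. I would then apply this to both classes.
\begin{itemize}
\item The graded class is the linear class in the transformed input $Gx$, with budget $B^{\mathrm{graded}}=\|G^{-1}w^*\|_2$. This gives the product $\bigl(\sum_j (w_j^*)^2 g_j^{-2}\bigr)\bigl(\sum_j g_j^2\sigma_j^2\bigr)$.
\item The uniform class gives $\|w^*\|_2^2\sum_j\sigma_j^2$.
\end{itemize}
Dividing the two and normalising by $\|w^*\|^2$ and $\sum_k\sigma_k^2$ gives exactly $\Lambda(g)$. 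The factors $L_\ell^2$, the $\varepsilon$-dependence and the absolute constants cancel, which yields the claimed uniformity in $\varepsilon$ and $f^*$. I would state explicitly that the comparison is between the two bounds (the "Rademacher-based sample complexity"), which is the sense in which the statement is meant. The one point needing care is that the same form of bound, with the same constants, is used for both classes, so that the ratio is well defined.

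For (iii) and (iv) I would use Cauchy--Schwarz. Write $\sqrt{\alpha_j\tau_j}=(\sqrt{\alpha_j}\,g_j^{-1})(\sqrt{\tau_j}\,g_j)$. Then
\[
\Bigl(\sum_j\sqrt{\alpha_j\tau_j}\Bigr)^2\le\Lambda(g),
\]
with equality if and only if the vectors $(\sqrt{\alpha_j}g_j^{-1})_j$ and $(\sqrt{\tau_j}g_j)_j$ are proportional. That condition is $g_j^2\propto\sqrt{\alpha_j/\tau_j}$, i.e.\ $g_j\propto\alpha_j^{1/4}\tau_j^{-1/4}$. Since this $g$ is strictly positive (all profile entries are positive by \cref{def:profiles}), the infimum is attained there, and its value is $\mathrm{BC}(\alpha,\tau)^2$.

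A second application of Cauchy--Schwarz, to the vectors $(\sqrt{\alpha_j})_j$ and $(\sqrt{\tau_j})_j$, gives $\mathrm{BC}\le\|\sqrt\alpha\|_2\|\sqrt\tau\|_2=1$. Equality holds if and only if $\sqrt\alpha$ and $\sqrt\tau$ are proportional, and since both are unit vectors with positive entries this means $\alpha=\tau$.

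The main obstacle is (ii), not the algebra. It only holds if "sample complexity" is read as the ratio of matching upper bounds; the argument establishes the ratio of bounds and nothing more. The remaining items are routine.
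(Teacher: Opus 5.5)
Your proposal is correct and follows the paper's proof essentially step for step: the scaling computation for (i), the Jensen-type Rademacher bound $B\bigl(\sum_j g_j^2\sigma_j^2\bigr)^{1/2}/\sqrt{m}$ for the linear class in $Gx$ evaluated at each class's representation budget for (ii), and Cauchy--Schwarz applied to $(\sqrt{\alpha_j}\,g_j^{-1})_j$ against $(\sqrt{\tau_j}\,g_j)_j$ and then to $(\sqrt{\alpha_j})_j$ against $(\sqrt{\tau_j})_j$ for (iii) and (iv). Like you, the paper reads (ii) as a ratio of two identically derived upper bounds, and it treats the class-independent confidence term $O(\sqrt{\log(1/\delta)/m})$ as cancelling to leading order (\cref{rem:fairness}), which matches your caveat.
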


\begin{proof}
(i) Replacing $g$ by $cg$ multiplies the first factor by $c^{-2}$ and the second by $c^{2}$. At $g = \mathbf{1}$ the factors are $\sum_j \alpha_j = 1$ and $\sum_j \tau_j = 1$.

(ii) Fix $B > 0$ and a sample $x_1, \dots, x_m$. Writing $z_i = Gx_i$, the class $\mathcal{H}_B^{\mathrm{graded}}$ is the $B$-norm-bounded linear class in the variable $z$, so its empirical Rademacher complexity is
\[
\begin{split}
\widehat{\mathfrak{R}}_m\big( \mathcal{H}_B^{\mathrm{graded}} \big)
 &= \frac{1}{m} \, \mathbb{E}_{\epsilon} \sup_{\|u\|_2 \leq B} \Big\langle u, \sum_{i=1}^{m} \epsilon_i z_i \Big\rangle
  = \frac{B}{m} \, \mathbb{E}_{\epsilon} \Big\| \sum_{i=1}^{m} \epsilon_i G x_i \Big\|_2 \\
 &\leq \frac{B}{m} \left( \mathbb{E}_{\epsilon} \Big\| \sum_{i=1}^{m} \epsilon_i G x_i \Big\|_2^2 \right)^{1/2}
  = \frac{B}{m} \left( \sum_{i=1}^{m} \| G x_i \|_2^2 \right)^{1/2},
\end{split}
\]
the last equality because $\mathbb{E}[\epsilon_i \epsilon_{i'}] = \delta_{i i'}$. Taking expectations over the sample and using $\mathbb{E}\|Gx\|_2^2 = \sum_j g_j^2 \sigma_j^2$,
\[
\mathfrak{R}_m\big( \mathcal{H}_B^{\mathrm{graded}} \big)
 \leq \frac{B}{\sqrt{m}} \left( \sum_{j=1}^{d} g_j^2 \sigma_j^2 \right)^{1/2}.
\]
Setting $g = \mathbf{1}$ gives the uniform case. Evaluating each bound at its own representation budget, and using $\|G^{-1}w^*\|_2^2 = \|w^*\|_2^2 \sum_j \alpha_j g_j^{-2}$ from \cref{def:profiles},
\[
\begin{split}
B^{\mathrm{graded}}(f^*) \left( \sum_{j} g_j^2 \sigma_j^2 \right)^{1/2}
 &= \|w^*\|_2 \left( \sum_{k} \sigma_k^2 \right)^{1/2}
    \left( \sum_{j} \alpha_j g_j^{-2} \right)^{1/2} \left( \sum_{j} \tau_j g_j^{2} \right)^{1/2} \\
 &= B^{\mathrm{unif}}(f^*) \left( \sum_{k} \sigma_k^2 \right)^{1/2} \Lambda(g)^{1/2} .
\end{split}
\]
For an $L_\ell$-Lipschitz loss the standard symmetrisation and contraction estimate of Bartlett and Mendelson \cite{bartlett2002rademacher} bounds the excess risk of empirical risk minimisation over a class $\mathcal{H}$ containing $f^*$ by $2 L_\ell \mathfrak{R}_m(\mathcal{H}) + O(\sqrt{\log(1/\delta)/m})$, so the sample size sufficient for excess risk $\varepsilon$ scales as the square of the product of budget and radius. That product carries the factor $\Lambda(g)^{1/2}$ in the graded case and $1$ in the uniform case, and the ratio of sample sizes is $\Lambda(g)$, independently of $\varepsilon$ and of $\|w^*\|_2$.

(iii) Substitute $t_j = g_j^2 > 0$, so that $\Lambda = \big( \sum_j \alpha_j t_j^{-1} \big) \big( \sum_j \tau_j t_j \big)$. By Cauchy--Schwarz applied to the vectors with entries $\sqrt{\alpha_j / t_j}$ and $\sqrt{\tau_j t_j}$,
\[
\left( \sum_{j=1}^{d} \sqrt{\alpha_j \tau_j} \right)^{2}
 = \left( \sum_{j=1}^{d} \sqrt{\frac{\alpha_j}{t_j}} \cdot \sqrt{\tau_j t_j} \right)^{2}
 \leq \left( \sum_{j=1}^{d} \frac{\alpha_j}{t_j} \right) \left( \sum_{j=1}^{d} \tau_j t_j \right)
 = \Lambda ,
\]
so $\mathrm{BC}(\alpha,\tau)^2$ is a lower bound. Equality in Cauchy--Schwarz holds exactly when the two vectors are proportional, that is $\alpha_j / t_j = \kappa\, \tau_j t_j$ for some $\kappa > 0$ and all $j$, giving $t_j = \kappa^{-1/2} (\alpha_j/\tau_j)^{1/2}$ and hence $g_j = t_j^{1/2} \propto \alpha_j^{1/4} \tau_j^{-1/4}$. This $g$ lies in $\mathbb{R}_{>0}^d$ because $\alpha_j$ and $\tau_j$ are positive by \cref{def:profiles}, so the bound is attained and \cref{eq:optimal-gain} follows.

(iv) Cauchy--Schwarz again, now on $\sqrt{\alpha_j}$ against $\sqrt{\tau_j}$, gives 
\[
\sum_j \sqrt{\alpha_j \tau_j} \leq \big( \sum_j \alpha_j \big)^{1/2} \big( \sum_j \tau_j \big)^{1/2} = 1,
\]
 so $\Lambda^\star \leq 1$. Equality requires $\sqrt{\alpha_j} = c \sqrt{\tau_j}$ for all $j$; squaring and summing forces $c = 1$, hence $\alpha = \tau$. Conversely $\alpha = \tau$ gives 
 \[
 \sum_j \sqrt{\alpha_j \tau_j} = \sum_j \alpha_j = 1.
 \]
\end{proof}

\begin{rem}
\label{rem:fairness}
Three points fix the standing of \cref{prop:ml-expressivity}(ii). First, the two bounds are produced by the identical argument, the same symmetrisation, the same contraction, the same evaluation at the representation budget, with the grading entering only through the data radius and the budget. The ratio $\Lambda(g)$ therefore compares like with like, and is not an artefact of one bound being derived more loosely than the other. Second, the confidence term $O(\sqrt{\log(1/\delta)/m})$ of the excess-risk bound is class-independent and common to both sides, so at any fixed $\delta$ it cancels from the comparison of sufficient sample sizes to leading order; the ratio statement concerns the complexity-dominated regime, which is the budget-active regime made precise in \cref{rem:budget-active}. Third, at its representation budget each class contains $f^*$ on the boundary of its constraint set, so empirical risk minimisation over either class is well posed and the bound of \cite{bartlett2002rademacher} applies to both without modification.
\end{rem}

\begin{rem}
\label{rem:gain-meaning}
\Cref{prop:ml-expressivity} identifies the entire benefit of grading with a single scalar: the affinity between where the target places its weight and where the data places its variance. Grading buys nothing when the two profiles agree, and its value grows as they separate. In this exact sense grading does not create information. It converts a correct prior about the target into a reduction in sample complexity at the computable rate \cref{eq:optimal-gain}, and an incorrect prior at the same rate in the opposite direction: any $g$ with $\Lambda(g) > 1$ is admissible and costs data.

\Cref{assum:profile-divergence} is exactly the assertion that these profiles diverge for language. Deep structural features are decisive for the target and rare in the corpus, with high $\alpha_j$ and low $\tau_j$, while lexical features carry the bulk of the variance and comparatively little of the decision. An isotropic architecture, which by construction sets $g = \mathbf{1}$ and $\Lambda = 1$, forgoes the gain \cref{eq:optimal-gain} whatever its size; \cref{prop:admissible-grades} locates it precisely, on the boundary of the set of grades at which the graded bound is the smaller one. Both profiles can be estimated from an annotated corpus without training a model, so \cref{assum:profile-divergence} is testable in advance of any experiment; \cref{subsec:benchmarks} includes this measurement in the program.
\end{rem}

\begin{prop}
\label{prop:admissible-grades}
Let $\alpha, \tau \in \Delta^{d-1}$ be the profiles of \cref{def:profiles}, related to the grades by $g_j = \lambda^{q_j}$, and write
\[
\Lambda(q) = \Big( \sum_{j=1}^{d} \alpha_j \lambda^{-2 q_j} \Big) \Big( \sum_{j=1}^{d} \tau_j \lambda^{2 q_j} \Big)
\]
for the gain of \cref{prop:ml-expressivity} expressed in the grades. Define the \emph{admissible set}
\[
\mathcal{Q}_+ = \left\{ q \in \mathbb{R}^d \;:\; \Lambda(q) < 1 \right\},
\]
the grades at which the graded bound is strictly smaller than the uniform one. Then:
\begin{enumerate}
  \item[(i)] \emph{(Gauge invariance)} $\Lambda(q + c\mathbf{1}) = \Lambda(q)$ for every $c \in \mathbb{R}$. Hence $\mathcal{Q}_+$ is invariant under translation along $\mathbf{1}$, and $\mathcal{Q}_+ \neq \emptyset$ implies $\mathcal{Q}_+ \cap \mathbb{Q}_{\geq 0}^{d} \neq \emptyset$: restricting the grades to non-negative rationals, as \cref{sec-3} does throughout, costs nothing.

  \item[(ii)] \emph{(Convexity)} $q \mapsto \log \Lambda(q)$ is convex on $\mathbb{R}^d$, and $\mathcal{Q}_+$ is open and convex.

  \item[(iii)] \emph{(Nonemptiness, and the position of the standard transformer)} $\Lambda(\mathbf{0}) = 1$, and $\mathcal{Q}_+ \neq \emptyset$ if and only if $\alpha \neq \tau$. When $\alpha \neq \tau$, the uniform grade $q = \mathbf{0}$ --- which by \cref{rem:absorption,rem:mlge-absorption} is the standard transformer --- lies on $\partial\mathcal{Q}_+$.

  \item[(iv)] \emph{(First-order criterion)} $\nabla_q \log \Lambda \big|_{q = \mathbf{0}} = 2 (\log \lambda)\, (\tau - \alpha)$, and for $v \in \mathbb{R}^d$ the ray $\{ \varepsilon v : \varepsilon > 0 \}$ meets $\mathcal{Q}_+$ if and only if

\begin{equation}
\label{eq:cone-condition}
\langle v, \alpha - \tau \rangle > 0 .
\end{equation}
In that case $\{ \varepsilon > 0 : \varepsilon v \in \mathcal{Q}_+ \}$ is an interval $(0, \varepsilon_{\max}(v))$.

  \item[(v)] \emph{(Gain along an admissible direction)} Let $v$ satisfy \cref{eq:cone-condition}, and set $D = \langle v, \alpha - \tau \rangle$ and $S = \mathrm{Var}_\alpha(v) + \mathrm{Var}_\tau(v)$, where $\mathrm{Var}_p(v) = \sum_j p_j v_j^2 - \big( \sum_j p_j v_j \big)^2$. Then $S > 0$ and
\[
\log \Lambda(\varepsilon v) = -2 (\log \lambda)\, D\, \varepsilon \;+\; 2 (\log \lambda)^2 S\, \varepsilon^2 \;+\; O(\varepsilon^3) ,
\]
so the quadratic truncation is minimised at $\varepsilon^\star = D / (2 S \log \lambda)$, with value $-D^2 / (2S)$ independent of $\lambda$.
  \item[(vi)] \emph{(Global optimum)} $\min_{q \in \mathbb{R}^d} \Lambda(q) = \mathrm{BC}(\alpha, \tau)^2$, attained exactly on the gauge orbit of the $q^\star$ of \cref{prop:ml-expressivity}(iii).
\end{enumerate}
\end{prop}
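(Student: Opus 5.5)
The whole proof runs through one reformulation. Set $s=2\log\lambda$ and write
\[
\log\Lambda(q)=F_\alpha(-sq)+F_\tau(sq),\qquad F_p(y)=\log\textstyle\sum_j p_j e^{y_j}.
\]
Each $F_p$ is a log-sum-exp function with strictly positive weights. It is therefore convex, its gradient at $y$ is the tilted distribution $p^{(y)}_j\propto p_je^{y_j}$, and its Hessian is the covariance matrix of that tilted distribution. Every item will be read off from this representation, together with \cref{prop:ml-expressivity}.

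For (i), translating $q$ by $c\mathbf{1}$ multiplies the first factor of $\Lambda$ by $\lambda^{-2c}$ and the second by $\lambda^{2c}$, which is \cref{prop:ml-expressivity}(i) in grade coordinates. So if $q\in\mathcal{Q}_+$, then $q+c\mathbf{1}\in\mathcal{Q}_+$ for every $c$. Taking $c$ large makes $q+c\mathbf 1$ nonnegative. Openness, from (ii), then yields a rational point of $\mathcal Q_{+}$ nearby, and a further small translation keeps it in $\mathbb{Q}_{\ge0}^d$.

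For (ii), $\log\Lambda$ is a sum of convex functions composed with linear maps, hence convex. Then $\mathcal{Q}_+=\{\log\Lambda<0\}$ is a strict sublevel set of a continuous convex function, so it is open and convex.

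For (iv), evaluate the gradient at $q=\mathbf{0}$. The tilted distributions there are just $\alpha$ and $\tau$, so the gradient is $-s\alpha+s\tau=2(\log\lambda)(\tau-\alpha)$.

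For the ray criterion, define $\phi(\varepsilon)=\log\Lambda(\varepsilon v)$. This function is convex with $\phi(0)=0$ and $\phi'(0)=-sD$, where $D=\langle v,\alpha-\tau\rangle$.
\begin{itemize}
\item If $D>0$, then $\phi<0$ on some interval $(0,\delta)$, so the ray meets $\mathcal Q_+$.
\item If $D\le0$, convexity gives $\phi(\varepsilon)\ge\phi(0)+\varepsilon\phi'(0)\ge0$, so the ray misses $\mathcal Q_+$.
\end{itemize}
The set $\{\varepsilon>0:\phi(\varepsilon)<0\}$ is convex, hence an interval, and it has $0$ as its left endpoint. That gives the interval $(0,\varepsilon_{\max}(v))$, with $\varepsilon_{\max}=\infty$ allowed.

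For (iii), $\Lambda(\mathbf0)=1$ is immediate.
\begin{itemize}
\item If $\alpha\ne\tau$, take $v=\alpha-\tau$. Then $D=\|\alpha-\tau\|^2>0$, so by (iv) the points $\varepsilon v$ with small $\varepsilon>0$ lie in $\mathcal Q_+$. Since $\mathbf 0\notin\mathcal Q_+$, the origin is a limit of points of $\mathcal Q_+$ without belonging to it, so $\mathbf 0\in\partial\mathcal Q_+$.
\item If $\alpha=\tau$, then $\Lambda\ge\mathrm{BC}^2=1$ everywhere by \cref{prop:ml-expressivity}(iv) after substituting $g=\lambda^{q}$, so $\mathcal Q_+=\emptyset$.
\end{itemize}
The converse direction of the "if and only if" follows the same way: nonemptiness requires $\min\Lambda<1$, which \cref{prop:ml-expressivity}(iv) allows only when $\alpha\ne\tau$.

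For (v), Taylor-expand $\phi$ to second order. The Hessian of $F_p$ at $0$ is $\mathrm{diag}(p)-pp^\top$, so the quadratic coefficient is
\[
\tfrac12 s^2\bigl(\mathrm{Var}_\alpha(v)+\mathrm{Var}_\tau(v)\bigr)=2(\log\lambda)^2S.
\]
Combined with $\phi'(0)=-sD$, this gives the stated expansion. Minimising the quadratic truncation in $\varepsilon$ gives $\varepsilon^\star=D/(2S\log\lambda)$ and value $-D^2/(2S)$.

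The one point that needs care, and the main obstacle as I see it, is showing $S>0$. Suppose $S=0$. Because $\alpha$ and $\tau$ have full support, $\mathrm{Var}_\alpha(v)=0$ forces $v$ to be constant, say $v=c\mathbf1$. Then $D=c\langle\mathbf1,\alpha-\tau\rangle=0$, contradicting $D>0$.

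For (vi), the map $q\mapsto g=\lambda^q$ is a bijection from $\mathbb R^d$ onto $\mathbb R^d_{>0}$. So $\min_q\Lambda(q)=\mathrm{BC}^2$ by \cref{prop:ml-expressivity}(iii). That result's equality analysis shows the minimisers in $g$ form exactly the ray $\{cg^\star:c>0\}$. Taking $\log_\lambda$ turns this ray into $q^\star+\mathbb R\mathbf1$, which is the gauge orbit of $q^\star$.
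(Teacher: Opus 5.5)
Your proposal is correct and follows the paper's proof essentially step for step. The paper also passes to $u = 2(\log\lambda)q$ and writes $\log\Lambda = \log A(u) + \log T(u)$, which are your two log-sum-exp terms; it reads the gradient and Hessian off the tilted profiles and their covariances, and proves (iii)--(vi) by the same convex-ray, Taylor-expansion and Cauchy--Schwarz transport arguments, with only cosmetic differences (you use openness rather than continuity for the rational point in (i), and $\mathrm{Var}_\alpha$ rather than $\mathrm{Var}_\tau$ to get $S>0$).
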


\begin{proof}
Set $u_j = 2 q_j \log \lambda$, an invertible linear change of variable since $\lambda > 1$, and write
\[
A(u) = \sum_{j=1}^{d} \alpha_j e^{-u_j}, \qquad T(u) = \sum_{j=1}^{d} \tau_j e^{u_j}, \qquad \Phi(u) = \log A(u) + \log T(u) = \log \Lambda .
\]
By \cref{def:profiles} both $\alpha$ and $\tau$ lie in $\Delta^{d-1}$ and are strictly positive. For $u \in \mathbb{R}^d$ define the tilted vectors
\[
\alpha^{(u)}_j = \frac{\alpha_j e^{-u_j}}{A(u)}, \qquad \tau^{(u)}_j = \frac{\tau_j e^{u_j}}{T(u)},
\]
both in $\Delta^{d-1}$, and for $p \in \Delta^{d-1}$ write $\mathrm{C}(p) = \operatorname{diag}(p) - p p^\top$, so that $v^\top \mathrm{C}(p)\, v = \mathrm{Var}_p(v) \geq 0$. Differentiating,
\[
\begin{split}
\nabla_u \Phi(u) &= \tau^{(u)} - \alpha^{(u)}, \\
\nabla^2_u \Phi(u) &= \mathrm{C}\big(\alpha^{(u)}\big) + \mathrm{C}\big(\tau^{(u)}\big),
\end{split}
\]
the term $\log A$ contributing $\mathrm{C}(\alpha^{(u)})$ to the Hessian because the two sign reversals arising from $u \mapsto -u$ cancel. At $u = \mathbf{0}$ we have $\alpha^{(\mathbf{0})} = \alpha$ and $\tau^{(\mathbf{0})} = \tau$.

(i) Under $q \mapsto q + c\mathbf{1}$ we get $u \mapsto u + c'\mathbf{1}$ with $c' = 2c \log \lambda$, so $A \mapsto e^{-c'} A$ and $T \mapsto e^{c'} T$, leaving the product $AT$ fixed. For the second claim, let $q \in \mathcal{Q}_+$ and set $q' = q - (\min_j q_j)\mathbf{1}$, so that $q' \geq 0$ and $\Lambda(q') = \Lambda(q) < 1$. Since $\Lambda$ is continuous and $\mathbb{Q}^d$ is dense in $\mathbb{R}^d$, some rational $q'' \geq 0$ near $q'$ satisfies $\Lambda(q'') < 1$.

(ii) $\nabla^2_u \Phi$ is a sum of two matrices of the form $\mathrm{C}(p)$ and is therefore positive semidefinite, so $\Phi$ is convex in $u$ and hence in $q$, the map $q \mapsto u$ being linear. Then $\mathcal{Q}_+ = \{ \Phi < 0 \}$ is a strict sublevel set of a convex function, hence convex, and open by continuity.

(iii) $\Lambda(\mathbf{0}) = \big( \sum_j \alpha_j \big)\big( \sum_j \tau_j \big) = 1$. By \cref{prop:ml-expressivity}(iii) and (iv), 
\[
\min_q \Lambda(q) = \mathrm{BC}(\alpha,\tau)^2 \leq 1
\]
 with equality if and only if $\alpha = \tau$; hence $\mathcal{Q}_+ \neq \emptyset$ exactly when $\alpha \neq \tau$. In that case $\mathbf{0} \notin \mathcal{Q}_+$, while by (iv) the direction $v = \alpha - \tau$ satisfies 
 \[
 \langle v, \alpha - \tau\rangle = \|\alpha - \tau\|_2^2 > 0,
 \]
  so every neighbourhood of $\mathbf{0}$ meets $\mathcal{Q}_+$; thus $\mathbf{0} \in \partial\mathcal{Q}_+$.

(iv) The gradient at $u = \mathbf{0}$ is $\tau - \alpha$, and 
\[
\nabla_q = 2 (\log \lambda) \nabla_u,
\]
 which gives the stated formula. Put $\phi(\varepsilon) = \log \Lambda(\varepsilon v)$, convex by (ii), with $\phi(0) = 0$ and 
\[
\phi'(0) = \langle v, \nabla_q \log \Lambda(\mathbf{0}) \rangle = -2 (\log \lambda) \langle v, \alpha - \tau \rangle.
\]
 If $\langle v, \alpha - \tau \rangle > 0$ then $\phi'(0) < 0$ and $\phi(\varepsilon) < 0$ for all small $\varepsilon > 0$. If $\langle v, \alpha - \tau \rangle \leq 0$ then $\phi'(0) \geq 0$, and convexity gives 
 \[
 \phi(\varepsilon) \geq \phi(0) + \phi'(0)\varepsilon \geq 0
 \]
  for every $\varepsilon > 0$, so the ray never meets $\mathcal{Q}_+$. This proves \cref{eq:cone-condition}. The set $\{\varepsilon > 0 : \phi(\varepsilon) < 0\}$ is the intersection of an interval, by convexity of $\phi$, with $(0,\infty)$, and contains all small $\varepsilon > 0$; it is therefore of the form $(0, \varepsilon_{\max}(v))$.

(v) That $S > 0$ follows from $D \neq 0$: since $\tau_j > 0$ for every $j$, $\mathrm{Var}_\tau(v) = 0$ forces $v \in \mathbb{R}\mathbf{1}$, and then $D = \langle v, \alpha - \tau \rangle = 0$ because $\alpha$ and $\tau$ are both probability vectors. Taylor expansion of $\phi$ at $0$ gives 
\[
\phi(\varepsilon) = \phi'(0)\varepsilon + \tfrac12 \phi''(0)\varepsilon^2 + O(\varepsilon^3)
\]
 with $\phi'(0) = -2(\log \lambda) D$ and
\[
\phi''(0) = v^\top \nabla_q^2 \log \Lambda(\mathbf{0})\, v = (2\log\lambda)^2\, v^\top \big[ \mathrm{C}(\alpha) + \mathrm{C}(\tau) \big] v = 4 (\log \lambda)^2 S ,
\]
which is the stated expansion. Minimising $a\varepsilon^2 + b\varepsilon$ with $a = 2(\log\lambda)^2 S > 0$ and $b = -2(\log\lambda) D$ gives 
\[
\varepsilon^\star = -b/(2a) = D/(2 S \log \lambda)
\]
 and minimum value $-b^2/(4a) = -D^2/(2S)$, in which $\lambda$ cancels.

(vi) The value and the minimizer are \cref{prop:ml-expressivity}(iii), transported through $g_j = \lambda^{q_j}$; the gauge orbit is the fibre of the scaling invariance recorded in (i).
\end{proof}

\begin{rem}
\label{rem:cone-reading}
Four consequences are worth noting.

First, \cref{eq:cone-condition} is the condition on the weights that the framework has needed. Grading improves the bound, to first order, exactly when the grades correlate positively with $\alpha - \tau$: high grades on coordinates where the target's energy exceeds the data's variance, low grades where it does not. \Cref{assum:profile-divergence} thereby becomes an inner product one can compute.

Second, the standard transformer is not a local optimum of the sample-complexity landscape. It sits at $\Lambda = 1$ on the boundary of $\mathcal{Q}_+$ with gradient $2(\log \lambda)(\tau - \alpha) \neq 0$ whenever $\alpha \neq \tau$, so a strictly descending direction exists at $q = \mathbf{0}$ and \cref{eq:cone-condition} exhibits it. The isotropic architecture forgoes the gain of \cref{eq:optimal-gain} not because the gain is unavailable at its location, since the gradient there is nonzero, but because the gauge orbit of $q = \mathbf{0}$ is the unique set of grades at which $\Lambda = 1$ for \emph{every} pair of profiles. It is the choice that ignores the profiles.

Third, only the \emph{sign} of $\langle q, \hat\alpha - \hat\tau \rangle$ need survive estimation error, not the magnitudes of $\hat\alpha$ and $\hat\tau$. This is what makes the coarse part-of-speech initialisation of \cref{subsubsec:grade-init} defensible: it does not attempt to locate $q^\star$, only to enter $\mathcal{Q}_+$.

Fourth, by (ii) the offline selection of grades from estimated profiles is the minimisation of a convex function with no spurious stationary points, and by (i) the non-negativity and rationality demanded of the grades throughout this paper are free. The selection problem, on the full space and on the clipped region the configurations admit, is solved in \cref{cor:grade-convexity} and \cref{prop:clipped-selection} respectively.

$\Lambda$ is a ratio of upper bounds, and \cref{prop:admissible-grades} characterises when the graded bound beats the uniform bound. In the level-stratified regime of \cref{cor:stratified} the comparison is a separation between the graded prior and its absence: \cref{thm:stratified-separation} establishes the two-sided minimax ratio there, over all estimators. For general profiles the corresponding statement is \cref{con:minimax-separation}, and \cref{rem:sample-complexity-scope} records the scope.
\end{rem}

\begin{cor}
\label{cor:stratified}
Suppose both profiles are uniform within levels: $\alpha_j = a_l / d_l$ and $\tau_j = s_l / d_l$ for $j \in V_l$, with $a, s \in \Delta^{L-1}$. Then
\[
\Lambda^\star = \mathrm{BC}(a,s)^2 = \left( \sum_{l=0}^{L-1} \sqrt{a_l s_l} \right)^{2},
\]
independently of the allocation $\{d_l\}$, and the optimal grade is constant on each level with
\[
q_l^\star \, \log \lambda = \tfrac{1}{4} \log a_l - \tfrac{1}{4} \log s_l + \mathrm{const}.
\]
If moreover the target concentrates geometrically towards the deepest level and the data decays geometrically away from the shallowest, $a_l \propto \nu^{L-1-l}$ and $s_l \propto \mu^{l}$ with $0 < \mu < \nu < 1$, then $q_l^\star$ is affine in $l$ with positive slope, and
\[
\Lambda^\star = \Theta\!\left( \nu^{L} \right)
\qquad (L \to \infty).
\]
\end{cor}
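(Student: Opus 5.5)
The plan is to reduce everything to \cref{prop:ml-expressivity}(iii), evaluated on level-constant profiles, and then to carry out an explicit geometric-series computation.

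\emph{First claim: the value of $\Lambda^\star$.} By \cref{eq:optimal-gain}, $\Lambda^\star=\mathrm{BC}(\alpha,\tau)^2$. Grouping coordinates by level, each $j\in V_l$ contributes $\sqrt{(a_l/d_l)(s_l/d_l)}=\sqrt{a_l s_l}/d_l$. There are $d_l$ such coordinates, so the level-$l$ block contributes exactly $\sqrt{a_l s_l}$. Summing over levels gives $\mathrm{BC}(\alpha,\tau)=\sum_l\sqrt{a_l s_l}=\mathrm{BC}(a,s)$, in which the $d_l$ have cancelled. This gives independence of the allocation.

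\emph{Second claim: the optimal grade.} \Cref{prop:ml-expressivity}(iii) gives the minimiser $g_j^\star\propto\alpha_j^{1/4}\tau_j^{-1/4}=(a_l/s_l)^{1/4}d_l^{0}$. This is constant on $V_l$. Taking logarithms through $g_j=\lambda^{q_j}$ yields $q_l^\star\log\lambda=\tfrac14\log a_l-\tfrac14\log s_l+\mathrm{const}$. The additive constant is the gauge freedom of \cref{prop:admissible-grades}(i) and (vi).

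\emph{Geometric case.} I will write the normalised profiles explicitly as
\[
a_l=\frac{(1-\nu)\,\nu^{L-1-l}}{1-\nu^{L}},\qquad s_l=\frac{(1-\mu)\,\mu^{l}}{1-\mu^{L}}.
\]
Then $\log a_l-\log s_l=-l\log(\nu\mu)+\mathrm{const}$. This is affine in $l$ with slope $-\log(\nu\mu)>0$, because $0<\mu\nu<1$. Dividing by $4\log\lambda>0$ shows that $q_l^\star$ is affine with positive slope. For the affinity,
\[
\sqrt{a_l s_l}=c_L\,\nu^{(L-1)/2}\,r^{l},\qquad r:=\sqrt{\mu/\nu}\in(0,1),\qquad c_L:=\Bigl(\frac{(1-\nu)(1-\mu)}{(1-\nu^L)(1-\mu^L)}\Bigr)^{1/2}.
\]
Summing over $l$ gives $\mathrm{BC}(a,s)=c_L\,\nu^{(L-1)/2}\,(1-r^L)/(1-r)$.

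\emph{Main obstacle: uniformity of the constants in $L$.} This is the only delicate point, and the order of steps is chosen to handle it. The factor $c_L$ lies in $\bigl[\sqrt{(1-\nu)(1-\mu)},\,1\bigr]$ for every $L\ge1$. The factor $(1-r^L)/(1-r)$ lies in $[1,1/(1-r)]$. The hypothesis $\mu<\nu$ is used exactly here: it guarantees $r<1$, so the series is dominated by its $l=0$ term rather than growing with $L$. Hence $\mathrm{BC}(a,s)=\Theta(\nu^{L/2})$ with constants depending only on $\mu,\nu$, and squaring gives $\Lambda^\star=\Theta(\nu^L)$ as $L\to\infty$.
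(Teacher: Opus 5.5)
Your proof is correct and follows the paper's own argument: the cancellation of $d_l$ within each level block of the Bhattacharyya sum, the fourth-root law of \cref{prop:ml-expressivity}(iii) for the level-constant grade, and the explicit geometric sum with $r=\sqrt{\mu/\nu}$. The only difference is that you bound $c_L$ and $(1-r^L)/(1-r)$ uniformly in $L$, where the paper passes to the limit $L\to\infty$; this gives $\Lambda^\star=\Theta(\nu^L)$ with explicit constants valid at every $L\ge 1$.
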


\begin{proof}
For the first claim, $\sum_{j \in V_l} \sqrt{\alpha_j \tau_j} = d_l \cdot \sqrt{(a_l/d_l)(s_l/d_l)} = \sqrt{a_l s_l}$, and summing over $l$ gives $\mathrm{BC}(\alpha,\tau) = \mathrm{BC}(a,s)$; the $d_l$ cancel. The grade formula is $\log g_j^\star = \tfrac14 \log \alpha_j - \tfrac14 \log \tau_j + \mathrm{const}$ from \cref{prop:ml-expressivity}(iii), in which the $\log d_l$ terms cancel between $\alpha_j$ and $\tau_j$.

For the second, write $Z_a = \sum_{l=0}^{L-1} \nu^{L-1-l} = (1-\nu^L)/(1-\nu)$ and $Z_s = (1-\mu^L)/(1-\mu)$. Then $\log a_l - \log s_l = -l(\log \nu + \log \mu) + \mathrm{const}$, which is affine in $l$ with positive coefficient since $\log \nu, \log \mu < 0$; the grade claim follows. Setting $r = \sqrt{\mu/\nu} \in (0, 1)$,
\[
\mathrm{BC}(a,s)
 = \frac{1}{\sqrt{Z_a Z_s}} \sum_{l=0}^{L-1} \nu^{(L-1-l)/2} \mu^{l/2}
 = \frac{\nu^{(L-1)/2}}{\sqrt{Z_a Z_s}} \cdot \frac{1 - r^{L}}{1 - r} .
\]
As $L \to \infty$ we have $Z_a \to (1-\nu)^{-1}$, $Z_s \to (1-\mu)^{-1}$ and $r^L \to 0$, so
\[
\mathrm{BC}(a,s) \sim \nu^{(L-1)/2} \cdot \frac{\sqrt{(1-\nu)(1-\mu)}}{1 - \sqrt{\mu/\nu}} ,
\]
and squaring gives $\Lambda^\star = \Theta(\nu^L)$.
\end{proof}

\begin{rem}
\label{rem:stratified-reach}
\Cref{cor:stratified} is an unconstrained statement: the optimal grade is affine in $l$, so its dynamic range grows linearly in $L$, and realising $\Lambda^\star = \Theta(\nu^L)$ requires grades that exceed any fixed clip once $L$ is large. At the clipped configurations of \cref{subsubsec:configs} the attainable gain is bounded by $C^2$ (\cref{prop:clipped-selection}), and the exponential regime is reached as the clip opens (\cref{rem:one-dial}). The corollary quantifies that trajectory; it does not describe the planned configurations.
\end{rem}

\begin{rem}
\label{rem:cone-stratified}
Under the hypotheses of \cref{cor:stratified} the criterion \cref{eq:cone-condition} reduces to a condition on $L$ numbers rather than $d$: a level-constant grade $v_j = v_l$ for $j \in V_l$ is admissible exactly when $\sum_{l=0}^{L-1} v_l (a_l - s_l) > 0$. Under the geometric profiles this holds for every $v$ affine and increasing in $l$, so the linguistic ordering itself, with deeper levels receiving higher grades, is enough to enter $\mathcal{Q}_+$ without any estimate of $\nu$ or $\mu$.
\end{rem}

\begin{lem}
\label{lem:plugin-stability}
Let $\hat\alpha, \hat\tau \in \Delta^{d-1}$ be strictly positive estimates of the profiles of \cref{def:profiles}, and let $\hat g$ be the plug-in optimal grades $\hat g_j \propto \hat\alpha_j^{1/4}\, \hat\tau_j^{-1/4}$ of \cref{prop:ml-expressivity}(iii). Set
\[
\delta_j = \log\frac{\hat\alpha_j}{\alpha_j} - \log\frac{\hat\tau_j}{\tau_j},
\qquad
\bar\delta = \frac{1}{d} \sum_{j=1}^{d} \delta_j .
\]
Then
\[
\log \Lambda(\hat g) \;\leq\; \log \mathrm{BC}(\alpha,\tau)^{2} \;+\; \frac{1}{4} \sum_{j=1}^{d} \big( \delta_j - \bar\delta \big)^{2} .
\]
The attained gain is therefore insensitive to first order to estimation error in the profiles: the degradation is quadratic in the centred log-errors, and vanishes at rate the square of the profile error rather than the profile error itself.
\end{lem}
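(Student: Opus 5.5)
The plan is to reduce $\Lambda(\hat g)$ to a product of two exponential moments under a single probability vector, and then bound their logarithm by a second-order Taylor argument of the kind used in \cref{prop:admissible-grades}(ii),(v).

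\emph{Step 1 (rewriting).} By \cref{prop:ml-expressivity}(i) the proportionality constant in $\hat g$ is irrelevant. Set $t_j=\hat g_j^2=(\hat\alpha_j/\hat\tau_j)^{1/2}$. From the definition of $\delta_j$,
\[
t_j=(\alpha_j/\tau_j)^{1/2}e^{\delta_j/2},
\]
so $\alpha_j/t_j=\sqrt{\alpha_j\tau_j}\,e^{-\delta_j/2}$ and $\tau_j t_j=\sqrt{\alpha_j\tau_j}\,e^{\delta_j/2}$. Introduce the weights $w_j=\sqrt{\alpha_j\tau_j}/\mathrm{BC}(\alpha,\tau)$. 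These lie in $\Delta^{d-1}$ and are strictly positive by \cref{def:profiles}. Then
\[
\Lambda(\hat g)=\mathrm{BC}(\alpha,\tau)^2\,\Bigl(\sum_j w_j e^{-X_j}\Bigr)\Bigl(\sum_j w_j e^{X_j}\Bigr),\qquad X_j=\tfrac12\delta_j .
\]
It therefore suffices to show
\[
\psi(X):=\log\sum_j w_je^{X_j}+\log\sum_j w_je^{-X_j}\le \sum_j (X_j-\bar X)^2 ,
\]
because $\sum_j(X_j-\bar X)^2=\tfrac14\sum_j(\delta_j-\bar\delta)^2$.

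\emph{Step 2 (gauge reduction).} The function $\psi$ is invariant under $X\mapsto X+c\mathbf 1$: the two factors scale by $e^{c}$ and $e^{-c}$. This is the same gauge invariance as \cref{prop:admissible-grades}(i). Hence we may assume $\bar X=0$, i.e.\ replace $\delta$ by $\delta-\bar\delta\mathbf 1$, and must show $\psi(X)\le\|X\|_2^2$.

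\emph{Step 3 (Taylor with integral remainder).} The function $\psi$ has exactly the structure of $\Phi$ in the proof of \cref{prop:admissible-grades}. Its value and gradient at $0$ are $\psi(0)=0$ and $\nabla\psi(0)=w-w=0$. Its Hessian at any point is $\mathrm C(w^{(+)})+\mathrm C(w^{(-)})$, where $w^{(\pm)}$ are the exponentially tilted versions of $w$. Consequently
\[
\psi(X)=\int_0^1(1-s)\,\bigl[\mathrm{Var}_{w^{(+)}_s}(X)+\mathrm{Var}_{w^{(-)}_s}(X)\bigr]\,ds .
\]
Each variance is at most $\sum_j p_jX_j^2\le\|X\|_\infty^2\le\|X\|_2^2$, whatever the tilted probability vector $p$ is. Since $\int_0^1(1-s)\,ds=\tfrac12$, this gives $\psi(X)\le\|X\|_2^2$, which is the claim.

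The only step needing care is Step 3. The variance bound must hold uniformly along the path, and the tilted weights there are uncontrolled. The crude bound $\mathrm{Var}_p\le\|X\|_\infty^2$ sidesteps this, at the cost of a constant that is not sharp.

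If the stated constant $\tfrac14$ were to fail, the fallback is to keep $\mathrm{Var}_p(X)\le\min_c\|X-c\mathbf 1\|_\infty^2$. This yields an $\ell_\infty$ form of the same quadratic bound.

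The final sentence of the lemma, insensitivity to first order, is then immediate: the right-hand side is quadratic in the centred log-errors, and the linear term vanishes because $\nabla\psi(0)=0$.
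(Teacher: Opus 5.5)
Your proposal is correct and follows the paper's proof: the paper also Taylor-expands $\log\Lambda$ with integral remainder about the optimal gauge orbit, where the gradient vanishes because the two tilted profiles coincide (your recentring at $w_j=\sqrt{\alpha_j\tau_j}/\mathrm{BC}(\alpha,\tau)$ is exactly this), bounds the Hessian by $2I$ via $\mathrm{C}(p)\preceq\operatorname{diag}(p)\preceq I$ (your variance bound), and chooses the orbit representative minimising $\|\tfrac12\delta+c\mathbf{1}\|_2$ (your reduction to $\bar X=0$). Your hedge about the constant $\tfrac14$ is unnecessary, since your argument already delivers it, and the sharper bound $\psi(X)\le\min_c\|X-c\mathbf{1}\|_\infty^2$ holds outright rather than only as a fallback.
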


\begin{proof}
Work in the variables $u_j = 2 q_j \log \lambda = 2 \log g_j$ of \cref{prop:admissible-grades} and write $\Phi(u) = \log \Lambda$. The optimal grades of \cref{prop:ml-expressivity}(iii) correspond to $u^\star_j = \tfrac12 \log(\alpha_j/\tau_j) + c^\star$, and at $u^\star$ the tilted profiles of \cref{prop:admissible-grades} coincide,
\[
\alpha^{(u^\star)}_j = \frac{\sqrt{\alpha_j \tau_j}}{\mathrm{BC}(\alpha,\tau)} = \tau^{(u^\star)}_j ,
\]
so $\nabla_u \Phi(u^\star) = \tau^{(u^\star)} - \alpha^{(u^\star)} = 0$: the optimum is a stationary point. For every $u$ the Hessian satisfies
\[
\nabla^2_u \Phi(u) = \mathrm{C}\big(\alpha^{(u)}\big) + \mathrm{C}\big(\tau^{(u)}\big) \preceq 2 I,
\]
since for any $p \in \Delta^{d-1}$ one has $\mathrm{C}(p) = \operatorname{diag}(p) - p p^\top \preceq \operatorname{diag}(p) \preceq I$. Taylor expansion with integral remainder along the segment from $u^\star$ to $\hat u$, using the vanishing gradient and the uniform Hessian bound, gives
\[
\Phi(\hat u) \;\leq\; \Phi(u^\star) + \| \hat u - u^\star \|_2^2 .
\]
The plug-in grades correspond to $\hat u_j = \tfrac12 \log(\hat\alpha_j/\hat\tau_j) + \hat c$, so $\hat u_j - u^\star_j = \tfrac12 \delta_j + \mathrm{const}$, and by the gauge invariance of \cref{prop:admissible-grades}(i) the representative $u^\star$ on the optimal orbit may be chosen to minimise the distance, giving
\[
\min_{c \in \mathbb{R}} \Big\| \tfrac12 \delta + c \mathbf{1} \Big\|_2^2 = \frac{1}{4} \sum_{j=1}^{d} \big( \delta_j - \bar\delta \big)^2 .
\]
Since $\Phi(u^\star) = \log \mathrm{BC}(\alpha,\tau)^2$ by \cref{prop:admissible-grades}(vi), the claim follows.
\end{proof}

\begin{rem}
\label{rem:sample-splitting}
The program estimates the profiles on an annotated corpus disjoint from the pretraining sample, and this disjointness is what closes the adaptivity question in the present setting. Conditional on $\hat g$, the grades are a fixed deterministic quantity with respect to the training sample, so \cref{prop:ml-expressivity}(ii) applies verbatim with $\Lambda(\hat g)$ in place of $\Lambda(g)$ and no selection term arises. The whole cost of not knowing $\alpha$ is therefore the degradation \cref{lem:plugin-stability} bounds, and that degradation is quadratic in the profile error because the gradient of the gain vanishes at the optimum: the framework is protected at exactly the point where protection is needed. Certification asks for less still, since by \cref{rem:cone-reading} only the sign of one inner product must survive the estimation.
\end{rem}

\begin{rem}
\label{rem:sample-complexity-scope}
Four limitations bound what \cref{prop:ml-expressivity} establishes; two of them are resolved in this manuscript, one of them in the regime the program occupies, and the remaining ones name the tool their repair would require.

First, the analysis is for the graded embedding composed with a linear read-out. The multi-layer statement requires propagating the argument through a layerwise Lipschitz analysis of the graded stack, and the resulting bound compounds across depth; the extension is open.

Second, both sides of (ii) are \emph{upper} bounds, and a ratio of upper bounds becomes a separation only when a matching minimax lower bound over the uniform target class meets it. \Cref{thm:stratified-separation} supplies that bound in the level-stratified regime of \cref{cor:stratified}: at squared loss and over all estimators, the ratio of minimax risks is $\Theta_{\mu,\nu}(\Lambda(g))$ throughout the window of \cref{lem:window}, so in the regime the planned configurations occupy the separation between the graded prior and its absence is established. The ellipsoidal constraint geometry of \cref{prop:expressivity} is the packing set on which that bound is posed. What remains open is the general-profile, Lipschitz-loss case, stated as \cref{con:minimax-separation} in the regime fixed by \cref{rem:budget-active}.

Third, $g^\star$ depends on $\alpha$, which is unknown. \Cref{lem:plugin-stability} and \cref{rem:sample-splitting} resolve this for the present setting: profiles estimated on data disjoint from the training sample incur no adaptivity cost beyond the quadratic degradation of the lemma, which vanishes to first order at the optimum. What remains open is only the fully adaptive variant in which grades and read-out are fit on the same sample.

Fourth, \cref{cor:stratified} assumes geometric profiles. That assumption is an empirical refinement of \cref{assum:profile-divergence}, not a theorem, and it is the assumption on which every exponential-in-$L$ statement in this work rests; \cref{rem:stratified-reach} bounds what it delivers at fixed clip.
\end{rem}

\begin{con}
\label{con:minimax-separation}
Let $x \sim N\big(0, \operatorname{diag}(\sigma_1^2, \dots, \sigma_d^2)\big)$ and $y = f^*(x) + \xi$ with $\xi \sim N(0, \sigma_\xi^2)$ independent of $x$, in the setting of \cref{def:profiles}, and let $\ell$ be a loss that is $L_\ell$-Lipschitz in its first argument, as in \cref{prop:ml-expressivity}; write $\mathcal{E}(\hat f, f^*) = \mathbb{E}\big[\ell(\hat f(x), y)\big] - \mathbb{E}\big[\ell(f^*(x), y)\big]$ for the excess risk. Fix a target $f^*$ with profile $\alpha$ and let $g^\star$ be the optimal grades of \cref{prop:ml-expressivity}(iii). Define the target classes carrying no prior and the graded prior respectively,
\[
\begin{split}
\mathcal{F}^{\mathrm{unif}} &= \left\{ \langle w, \cdot \rangle \;:\; \|w\|_2 \leq B^{\mathrm{unif}}(f^*) \right\}, \\
\mathcal{F}^{\mathrm{graded}} &= \left\{ \langle w, \cdot \rangle \;:\; \big\| G^{-1} w \big\|_2 \leq B^{\mathrm{graded}}(f^*) \right\},
\end{split}
\]
with $G$ at grades $g^\star$ and both budgets calibrated at $f^*$ as in \cref{def:classes-at-target}, and the minimax excess risk of a target class $\mathcal{F}$ at sample size $m$, over \emph{all} estimators,
\[
\mathcal{R}_m(\mathcal{F}) = \inf_{\hat f} \, \sup_{f^* \in \mathcal{F}} \, \mathbb{E}\big[ \mathcal{E}(\hat f, f^*) \big],
\qquad
m^{\mathcal{F}}(\varepsilon) = \min\left\{ m \;:\; \mathcal{R}_m(\mathcal{F}) \leq \varepsilon \right\}.
\]
Then there exist universal constants $0 < c_1 \leq c_2$ such that, for every $\varepsilon$ in the budget-active window of \cref{rem:budget-active},
\[
c_1 \cdot \mathrm{BC}(\alpha, \tau)^{2}
\;\leq\;
\frac{m^{\mathcal{F}^{\mathrm{graded}}}(\varepsilon)}{m^{\mathcal{F}^{\mathrm{unif}}}(\varepsilon)}
\;\leq\;
c_2 \cdot \mathrm{BC}(\alpha, \tau)^{2} ,
\]
so the gain \cref{eq:optimal-gain} is attained up to universal constants and is a genuine separation between the graded prior and its absence, not an artefact of the upper bounds of \cref{prop:ml-expressivity}.
\end{con}

\Cref{con:minimax-separation} is stated for general profiles and Lipschitz losses. In the level-stratified regime of \cref{cor:stratified}, at squared loss, it is a theorem: \cref{thm:stratified-separation} establishes the two-sided ratio, in risk form at each sample size of the window of \cref{lem:window}, and that regime is the one every planned configuration of \cref{subsubsec:configs} occupies. What the conjecture asserts beyond the theorem is the general case, and the remark below fixes the regime in which it can hold.

\begin{rem}
\label{rem:budget-active}
Two features of the statement are forced, and each replaces a formulation that cannot hold. First, the infimum runs over all estimators and the two classes enter as classes of \emph{targets}: by \cref{rem:absorption,rem:mlge-absorption} the graded and uniform hypothesis classes coincide as sets of functions, so a comparison that restricted the estimator would compare a quantity with itself. What differs between the two sides is what is known about the target in advance, which is exactly the content of a prior, and the conjecture quantifies that prior. Second, the comparison cannot be asymptotic in $\varepsilon \to 0$ at fixed $d$: below the parametric floor of order $\sigma_\xi^2 d / m$ the norm constraint ceases to bind, both minimax problems reduce to the same unconstrained $d$-dimensional one, and the ratio tends to $1$ for \emph{every} pair of classes with nonempty interior. Each class has a budget-active interval of $\varepsilon$ on which its minimax risk exceeds its parametric floor by a fixed factor, equivalently sample sizes below the saturation scale at which its constraint stops binding, and the budget-active window of the conjecture is the intersection of the two intervals, since the two-sided ratio requires both constraints to bind. Only in that window is the geometry of \cref{prop:expressivity} visible in the rates. In the stratified regime the window is no longer a description but a set: \cref{lem:window} exhibits it as $W$, proves it nonempty at $d \geq 3L/\sigma_\xi$, and proves that in the joint scaling $d, m \to \infty$ it widens rather than closes. For general profiles the same two requirements, nonemptiness under \cref{assum:profile-divergence} and widening under the joint scaling, are the conditions the conjecture's window must satisfy.
\end{rem}

The route \cref{thm:stratified-separation} takes is the one the general case must extend. The lower bound is an Assouad packing of sign patterns with the amplitude allocation chosen to hold the per-coordinate testing error constant, run in the anisotropic Gaussian design directly and converted into the bound by Fano's inequality (\cref{lem:envelope}); the window on which both budgets bind is exhibited and sized by \cref{lem:window}; and the upper bound in the matching form is carried by the envelope rather than imported from \cref{prop:ml-expressivity}(ii). Extending the argument beyond level-homogeneous designs and squared loss is what the general case requires, and \cref{subsec:settling} records it as the remaining theoretical target alongside the layerwise extension.


\subsection{The Minimax Envelope}\label{subsec:envelope}

Throughout this subsection the design is Gaussian and level-homogeneous: $x \sim N(0, \Sigma)$ with $\Sigma = \operatorname{diag}(\sigma_1^2, \dots, \sigma_d^2)$ constant on levels, $\sigma_j^2 = \sigma_{(l)}^2$ for $j \in V_l$ and $\dim V_l = d_l$, and $y = \langle w^*, x \rangle + \xi$ with $\xi \sim N(0, \sigma_\xi^2)$ independent of $x$. The loss is squared, so the excess risk of a predictor $\hat f$ at the target $f^* = \langle w^*, \cdot \rangle$ is $\| \hat f - f^* \|_{L^2(P_x)}^2$, and for a class $\mathcal{F}$ of targets the minimax excess risk at sample size $m$ is $\mathcal{R}_m(\mathcal{F}) = \inf_{\hat f} \sup_{f^* \in \mathcal{F}} \mathbb{E} \, \| \hat f - f^* \|_{L^2(P_x)}^2$, the infimum over all estimators, as in \cref{con:minimax-separation}. For a diagonal matrix $D = \operatorname{diag}(D_l I_{d_l})$ with $D_l > 0$ and a budget $B > 0$ set
\[
\mathcal{F}(B, D) = \bigl\{ \langle w, \cdot \rangle : w \in \mathbb{R}^d, \ \| D w \|_2 \le B \bigr\},
\qquad
\operatorname{cap}_l = \frac{\sigma_{(l)}^2 B^2}{D_l^2},
\qquad
\varepsilon^2 = \frac{\sigma_\xi^2}{m}.
\]
The choice $D = I_d$, $B = B^{\mathrm{unif}}(f^*)$ realises the uniform target class, and $D_l = g_l^{-1}$, $B = B^{\mathrm{graded}}(f^*)$ the graded one; $\operatorname{cap}_l$ is the prediction-norm energy the budget can place at level $l$ if spent there entirely, and the two classes differ only through their caps.

\begin{lem}
\label{lem:envelope}
Let $S = \{ l : d_l \varepsilon^2 \le \operatorname{cap}_l \}$. There exist constants $0 < c(L) \le C(L)$ depending only on $L$ such that:
\begin{enumerate}
\item[(i)] for every $m \ge 1$,
\[
\mathcal{R}_m\bigl(\mathcal{F}(B, D)\bigr) \ \ge\ c(L) \sum_{l=0}^{L-1} \min\bigl\{ \operatorname{cap}_l, \ d_l \varepsilon^2 \bigr\};
\]
\item[(ii)] for every $m \ge 2 \sum_{l \in S} d_l + 4$,
\[
\mathcal{R}_m\bigl(\mathcal{F}(B, D)\bigr) \ \le\ C(L) \sum_{l=0}^{L-1} \min\bigl\{ \operatorname{cap}_l, \ d_l \varepsilon^2 \bigr\}.
\]
\end{enumerate}
\end{lem}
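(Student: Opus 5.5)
The plan is to whiten the design, reduce both parts to an estimation problem over a level-homogeneous ellipsoid, prove (i) by an Assouad hypercube built level by level, and prove (ii) with an explicit estimator: least squares on the levels in $S$ and zero on the rest.

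\textbf{Reduction.} Put $z = \Sigma^{-1/2}x \sim N(0, I_d)$ and $\theta = \Sigma^{1/2}w$. Then $\|\hat f - f^*\|_{L^2(P_x)}^2 = \|\hat\theta - \theta\|_2^2$. Because $D$ and $\Sigma$ are both constant on levels, the constraint $\|Dw\|_2 \le B$ becomes
\[
\sum_{l=0}^{L-1} \frac{\|\theta_l\|_2^2}{\operatorname{cap}_l} \le 1,
\]
where $\theta_l$ is the block of $\theta$ on $V_l$. This is an ellipsoid with semi-axis $\sqrt{\operatorname{cap}_l}$ on every coordinate of level $l$. Both parts of the lemma therefore concern linear regression with isotropic Gaussian design and noise level $\varepsilon^2 = \sigma_\xi^2/m$ over this ellipsoid. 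The only features that survive are the caps and the dimensions $d_l$.

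\textbf{Lower bound (i).} For each level $l$, set
\[
a_l^2 = \frac{1}{L}\min\{\operatorname{cap}_l/d_l,\ \varepsilon^2\}.
\]
Take the hypercube $\theta_\omega$ with $(\theta_\omega)_j = \omega_j a_{l(j)}$ for $\omega \in \{\pm1\}^d$.
\begin{itemize}
\item \emph{Admissibility.} Level $l$ contributes $d_l a_l^2/\operatorname{cap}_l \le 1/L$ to the constraint, so every vertex lies in the class.
\item \emph{Closeness of neighbours.} Two vertices differing in one coordinate $j \in V_l$ induce data laws with
\[
\mathrm{KL} = \frac{m(2a_l)^2}{2\sigma_\xi^2} = \frac{2a_l^2}{\varepsilon^2} \le \frac{2}{L}.
\]
This is exact for the Gaussian design, since $\mathbb{E}\langle \Delta, z\rangle^2 = \|\Delta\|_2^2$.
\item \emph{Conclusion.} Pinsker then bounds the per-coordinate testing affinity below by an absolute constant. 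Assouad's lemma gives
\[
\mathcal{R}_m \ \ge\ c \sum_j a_{l(j)}^2 \ =\ \frac{c}{L} \sum_l \min\{\operatorname{cap}_l,\ d_l\varepsilon^2\}.
\]
\end{itemize}
This route avoids Fano and yields $c(L) \asymp 1/L$.

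\textbf{Upper bound (ii).} Let $p = \sum_{l\in S} d_l$. Take $\hat\theta_{S^c} = 0$ and let $\hat\theta_S$ be ordinary least squares of $y$ on $z_S$.
\begin{itemize}
\item \emph{Omitted levels as noise.} The coordinates of $z$ are independent, so $\langle \theta_{S^c}, z_{S^c}\rangle + \xi$ is Gaussian noise independent of $z_S$, with variance $\sigma_\xi^2 + \|\theta_{S^c}\|_2^2$.
\item \emph{Exact risk.} The inverse-Wishart identity $\mathbb{E}\operatorname{tr}(Z_S^\top Z_S)^{-1} = p/(m-p-1)$ gives
\[
\mathbb{E}\|\hat\theta-\theta\|_2^2 = \|\theta_{S^c}\|_2^2 + \big(\sigma_\xi^2 + \|\theta_{S^c}\|_2^2\big)\frac{p}{m-p-1}.
\]
\item \emph{Bounding the terms.} The hypothesis $m \ge 2p+4$ gives $p/(m-p-1) \le \min\{1,\ 2p/m\}$. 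The constraint gives $\|\theta_l\|_2^2 \le \operatorname{cap}_l$ for each level. Hence
\[
\|\theta_{S^c}\|_2^2 \le \sum_{l\notin S}\operatorname{cap}_l, \qquad \sigma_\xi^2\frac{p}{m-p-1} \le 2\sum_{l\in S} d_l\varepsilon^2.
\]
\item \emph{Matching the envelope.} By the definition of $S$, each of these sums equals the corresponding sum of $\min\{\operatorname{cap}_l,\ d_l\varepsilon^2\}$. The risk is therefore at most $2\sum_l \min\{\operatorname{cap}_l,\ d_l\varepsilon^2\}$, so $C(L)$ can be taken absolute.
\end{itemize}

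\textbf{Main obstacle.} The delicate step is the random-design bookkeeping in (ii): treating the omitted signal as independent noise, and obtaining the exact inverse-Wishart expectation, which requires $m > p+1$ and is what the threshold $m \ge 2p+4$ is for. I would verify both via the diagonal, level-homogeneous Gaussian structure before anything else. In (i), the only care needed is splitting the budget evenly across the $L$ levels so that the product hypercube stays inside the ellipsoid; this is where the dependence of the constants on $L$ enters.
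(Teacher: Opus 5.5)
Part (ii) is the paper's argument. You run least squares on $V_S$, set the other levels to zero, and treat the omitted signal as independent Gaussian noise. The exact inverse-Wishart trace $p/(m-p-1)$ gives you the constant $2$ where the paper records $3$.

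Part (i) takes a genuinely different route from the paper. The paper uses a \emph{sparse} packing. On level $l$ it activates only $k_l=\min\{d_l,\lfloor \operatorname{cap}_l/(c_0L\varepsilon^2)\rfloor\}$ coordinates, each perturbed at the noise scale $\sigma_{(l)}^2\delta_l^2=c_0\varepsilon^2$, so a binding cap is absorbed by switching coordinates off. It then extracts a Varshamov--Gilbert subset and applies Fano, and it must treat $K=\sum_l k_l<32$ separately by a Le Cam two-point bound. Your \emph{dense} hypercube keeps every coordinate and absorbs a binding cap by shrinking the amplitude to $a_l^2=\frac1L\min\{\operatorname{cap}_l/d_l,\varepsilon^2\}$. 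Assouad then gives the envelope in one step, with no case split. It also gives $c(L)\asymp 1/L$ instead of the order $1/L^2$ that the paper's constants $C_1(L)$ and $C'(L)$ produce. Run on the single level $l=0$ without the factor $1/L$, the same cube would also replace the case analysis in the lower bound of \cref{lem:window}(iii). What Fano buys in exchange is that it needs only a metric on targets and applies to an arbitrary estimator $\hat f$ directly. Assouad, by contrast, needs the loss to split over coordinates.

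That last point is where your write-up needs two small repairs.
\begin{enumerate}
\item The identity $\|\hat f-f^*\|_{L^2(P_x)}^2=\|\hat\theta-\theta\|_2^2$ presumes $\hat f$ is linear. First replace $\hat f$ by its $L^2(P_x)$-projection onto $\operatorname{span}\{x_1,\dots,x_d\}$, which contains $f^*$ and so can only lower the risk. The diagonal design then makes the loss coordinatewise.
\item With $\mathrm{KL}\le 2/L$, Pinsker yields only $\mathrm{TV}\le L^{-1/2}$, which is vacuous at $L=1$. Either use $\frac{1}{8L}$ in place of $\frac1L$ in $a_l^2$, which gives $\mathrm{KL}\le 1/4$ and $\mathrm{TV}\le 8^{-1/2}$, or use the Bretagnolle--Huber bound $1-\mathrm{TV}\ge\frac12e^{-\mathrm{KL}}$ instead.
\end{enumerate}
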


\begin{proof}
Write $\mu_l = \min\{ \operatorname{cap}_l, d_l \varepsilon^2 \}$ and $M = \sum_l \mu_l$.

(i) Fix $c_0 = (\log 2)/64$ and set, for each level, $k_l = \min\bigl\{ d_l, \ \lfloor \operatorname{cap}_l / (c_0 L \varepsilon^2) \rfloor \bigr\}$ and $\delta_l^2 = c_0 \varepsilon^2 / \sigma_{(l)}^2$, and let $\mathcal{W} \subseteq \mathbb{R}^d$ consist of the vectors supported on the first $k_l$ coordinates of each $V_l$ with entries $\pm \delta_l$. Every $w \in \mathcal{W}$ is feasible:
\[
\| D w \|_2^2 = \sum_{l} k_l D_l^2 \delta_l^2 = c_0 \varepsilon^2 \sum_{l} k_l \frac{B^2}{\operatorname{cap}_l} \le \frac{B^2}{L} \sum_{l} 1 = B^2 .
\]
For $w, w' \in \mathcal{W}$ with Hamming distance $\rho(w, w')$ between their sign patterns, the level-homogeneity of $\Sigma$ and the constant-testing-error allocation $\sigma_{(l)}^2 \delta_l^2 = c_0 \varepsilon^2$ give
\[
\begin{split}
\| \Sigma^{1/2} (w - w') \|_2^2 &= \sum_{l} \sigma_{(l)}^2 \, (2 \delta_l)^2 \, \rho_l(w, w') = 4 c_0 \varepsilon^2 \, \rho(w, w'),
\\
\mathrm{KL}\bigl( P_w^{\otimes m} \,\big\|\, P_{w'}^{\otimes m} \bigr) &= \frac{m}{2 \sigma_\xi^2} \, \| \Sigma^{1/2} (w - w') \|_2^2 \le 2 c_0 K, \qquad K := \sum_{l} k_l ,
\end{split}
\]
so the weighted packing problem collapses to an unweighted one on $\{ \pm 1 \}^K$. Suppose first $K \ge 32$. By the Varshamov--Gilbert bound there is $\mathcal{W}' \subseteq \mathcal{W}$ with $\log |\mathcal{W}'| \ge (K/8) \log 2$ and pairwise Hamming distance at least $K/8$, hence pairwise $L^2(P_x)$-separation at least $c_0 K \varepsilon^2 / 2$. Fano's inequality then yields
\[
\mathcal{R}_m \ \ge\ \frac{c_0 K \varepsilon^2}{8} \left( 1 - \frac{2 c_0 K + \log 2}{(K/8) \log 2} \right) \ \ge\ \frac{c_0}{16} \, K \varepsilon^2 ,
\]
the last step by $2 c_0 K = (K/32) \log 2$ and $\log 2 \le (K/32) \log 2$ at $K \ge 32$. When $\operatorname{cap}_l \ge c_0 L \varepsilon^2$ the floor in $k_l$ is at least half its argument, so 
\[
k_l \varepsilon^2 \ge \tfrac{1}{2} \min\bigl\{ d_l \varepsilon^2, \ \operatorname{cap}_l / (c_0 L) \bigr\} \ge \mu_l / (2 \max\{1, c_0 L\});
\]
 when $\operatorname{cap}_l < c_0 L \varepsilon^2$ the level contributes $\mu_l \le \operatorname{cap}_l < c_0 L \varepsilon^2 \le (c_0 L / 32) \, K \varepsilon^2$. Summing over the two kinds of level, 
 \[
 M \le \bigl( 2 \max\{1, c_0 L\} + c_0 L^2 / 32 \bigr) K \varepsilon^2 =: C_1(L) \, K \varepsilon^2,
 \]
  so $\mathcal{R}_m \ge \bigl( c_0 / (16 \, C_1(L)) \bigr) M$ and (i) follows in this case. 
  
  If $K < 32$, then for every $l$ either $d_l < 32$ or $\operatorname{cap}_l < 32 c_0 L \varepsilon^2$, so 
  \[
  \mu_l \le 32 \max\{1, c_0 L\} \min\{ \operatorname{cap}_l, \varepsilon^2 \}
  \]
   for every $l$ and $M \le C'(L) \max_l \min\{ \operatorname{cap}_l, \varepsilon^2 \}$ with $C'(L) = 32 L \max\{1, c_0 L\}$; a two-point argument with $w = 0$ against $w = \delta e_j$ for $j \in V_{l^*}$ at the maximising level, with $\sigma_{(l^*)}^2 \delta^2 = \min\{ \operatorname{cap}_{l^*}, c_0 \varepsilon^2 \}$, is feasible, has $\mathrm{KL} \le c_0 / 2$ over the $m$ samples, and by Le Cam's two-point bound gives 
   \[
   \mathcal{R}_m \ge c_5 \min\{ \operatorname{cap}_{l^*}, c_0 \varepsilon^2 \} \ge c_5 c_0 \min\{ \operatorname{cap}_{l^*}, \varepsilon^2 \} \ge \bigl( c_5 c_0 / C'(L) \bigr) M,
   \]
    with $c_5$ an absolute constant.

(ii) Let $V_S = \bigoplus_{l \in S} V_l$ with $D_S = \dim V_S = \sum_{l \in S} d_l$, and let $\hat f$ be ordinary least squares on the coordinates of $V_S$, extended by zero on the remaining levels. Since $\Sigma$ is diagonal, the omitted covariates are independent of the retained ones, so conditional on the retained design the omitted signal $\langle w^*_{S^c}, x_{S^c} \rangle$ is centred noise of variance $\sum_{l \notin S} \sigma_{(l)}^2 \| w^*_l \|_2^2$, independent across observations, and the regression of $y$ on $x_S$ has noise variance $\sigma_\xi^2 + \sum_{l \notin S} \sigma_{(l)}^2 \| w^*_l \|_2^2$. The excess risk decomposes into the approximation term $\sum_{l \notin S} \sigma_{(l)}^2 \| w^*_l \|_2^2$ and the estimation term, which for Gaussian design and $m \ge 2 D_S + 4$ is at most $2 \bigl( \sigma_\xi^2 + \sum_{l \notin S} \sigma_{(l)}^2 \| w^*_l \|_2^2 \bigr) D_S / m$. The budget constraint gives $\sigma_{(l)}^2 \| w^*_l \|_2^2 \le \operatorname{cap}_l$ uniformly over the class, and $D_S / m \le 1/2$, so
\[
\mathcal{R}_m \ \le\ \sum_{l \notin S} \operatorname{cap}_l \ +\ 2 \, D_S \, \varepsilon^2 \ +\ \sum_{l \notin S} \operatorname{cap}_l \ \le\ 3 \sum_{l=0}^{L-1} \min\bigl\{ \operatorname{cap}_l, \ d_l \varepsilon^2 \bigr\},
\]
using $\operatorname{cap}_l = \mu_l$ for $l \notin S$ and $d_l \varepsilon^2 = \mu_l$ for $l \in S$.
\end{proof}

\begin{rem}
\label{rem:envelope-regime}
The lower bound of \cref{lem:envelope} holds at every sample size; the upper bound is established for $m \ge 2 \sum_{l \in S} d_l + 4$, the regime in which every level worth estimating is estimable. The band $m \asymp \sum_{l \in S} d_l$, in which a level crosses its estimability threshold and its worth-estimating threshold at comparable sample sizes, is not covered by the argument above, 
and the window on which the separation of \cref{thm:stratified-separation} is established must be located outside it; \cref{lem:window} carries this out.
\end{rem}

Throughout the remainder of this subsection the profiles are geometric and the allocation uniform: $a_l = \nu^{L-1-l}/Z_a$ and $s_l = \mu^l/Z_s$ with $0 < \mu < \nu < 1$ and $Z_a = \sum_{k=0}^{L-1}\nu^k$, $Z_s = \sum_{k=0}^{L-1}\mu^k$; $d_l = d/L$ for every $l$; and the calibrations $\|w^*\|_2 = 1$, $\operatorname{tr}\Sigma = 1$, so that $\sigma_{(l)}^2 = (L/d)\,s_l$. The grade vector is level-constant, $g = (g_0,\dots,g_{L-1})$, and is required to satisfy
\[
1 \le g_0 \le g_1 \le \cdots \le g_{L-1}, \qquad \frac{g_{l+1}}{g_l} \le (\nu\mu)^{-1/4} \quad (0 \le l \le L-2),
\]
a condition met by the uniform grades $g = \mathbf{1}$, by the fourth-root grades of \cref{prop:ml-expressivity}(iii) evaluated on the geometric profiles, and by their clipped form $g^\star_C$ of \cref{prop:clipped-selection}, since clipping only decreases consecutive ratios. Under these conventions the caps of \cref{lem:envelope} are
\[
\operatorname{cap}_l^{\mathrm{unif}} = \frac{L}{d}\, s_l, \qquad \operatorname{cap}_l^{\mathrm{gr}} = \frac{L}{d}\, s_l\, g_l^2 \sum_{k=0}^{L-1} a_k g_k^{-2}, \qquad \varepsilon^2 = \frac{\sigma_\xi^2}{m}.
\]

\begin{lem}
\label{lem:window}
Set $r = \sqrt{\mu/\nu} \in (0,1)$,
\[
m_{\mathrm{sat}} = \frac{d_0\,\sigma_\xi^2}{\operatorname{cap}_0^{\mathrm{unif}}} = \frac{d^2\,\sigma_\xi^2}{L^2\, s_0}, \qquad W = \bigl\{ m \in \mathbb{N} : 4 \le m \le \tfrac{1}{2}\, m_{\mathrm{sat}} \bigr\}.
\]
Then there is an absolute constant $c_4 \in (0,1)$ such that:
\begin{enumerate}
\item[(i)] (Non-emptiness and width.) $W \neq \emptyset$ whenever $d \ge 3L/\sigma_\xi$, and $|W| = \Theta\bigl(d^2\sigma_\xi^2/L^2\bigr)$ as $d/L \to \infty$ at fixed $(\mu,\nu,\sigma_\xi)$; 
in the joint scaling $d, m \to \infty$ the window widens, as \cref{rem:budget-active} requires of the regime.

\item[(ii)] (Full saturation.) For every $m \in W$, every level $l$, and both classes, $\operatorname{cap}_l \le \tfrac{1}{2}\, d_l\, \varepsilon^2$; in particular the set $S$ of \cref{lem:envelope} is empty for both classes and the regime restriction of \cref{rem:envelope-regime} is vacuous on $W$.
\item[(iii)] (Risk pinning.) For each class and every $m \in W$,
\[
c_4 \, \max_{l} \operatorname{cap}_l \;\le\; \mathcal{R}_m\bigl(\mathcal{F}(B,D)\bigr) \;\le\; \max_{l} \operatorname{cap}_l,
\]
the upper bound attained by the zero estimator, whose worst-case risk over the ellipsoid $\{\|Dw\|_2 \le B\}$ is exactly $\max_l \operatorname{cap}_l$.
\item[(iv)] (Ratio.) Under the grade condition both cap sequences are geometric with ratio at most $r$, both maxima sit at $l = 0$, and uniformly for $m \in W$,
\[
c_4\,(1-r)\; \Lambda(g) \;\le\; \frac{\mathcal{R}_m\bigl(\mathcal{F}^{\mathrm{gr}}\bigr)}{\mathcal{R}_m\bigl(\mathcal{F}^{\mathrm{unif}}\bigr)} \;\le\; \frac{1}{c_4\,(1-\mu)}\; \Lambda(g),
\]
with $\Lambda(g) = \bigl(\sum_l a_l g_l^{-2}\bigr)\bigl(\sum_l s_l g_l^2\bigr)$ the gain of \cref{prop:ml-expressivity}. The constants depend only on $(\mu,\nu)$ and are uniform in $L$, $d$, and $m \in W$. At the fourth-root grades the ratio is $\Theta_{\mu,\nu}\bigl(\mathrm{BC}(a,s)^2\bigr) = \Theta_{\mu,\nu}(\nu^L)$, and at $g^\star_C$ it is $\Theta_{\mu,\nu}(\Lambda^\star_C)$.
\item[(v)] (Parametric floor.) For $m \ge \max\bigl\{2d+4,\; d_0\sigma_\xi^2/\min_l \operatorname{cap}_l\bigr\}$ the set $S$ is full for both classes, both risks are $\Theta_L(d\,\varepsilon^2)$ by \cref{lem:envelope}, and the ratio is $\Theta_L(1)$: the graded gain concentrates on the window $W$, where the budget is active. On the intermediate band $\bigl(\tfrac{1}{2}m_{\mathrm{sat}},\, d_0\sigma_\xi^2/\min_l\operatorname{cap}_l\bigr)$ no two-sided comparison against $\Lambda(g)$ is asserted.
\end{enumerate}
\end{lem}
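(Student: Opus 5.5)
The plan is to reduce everything to the single level $l=0$. Under the grade condition, both cap sequences are dominated by their first term, and the window $W$ is exactly the range of $m$ in which that term is saturated.

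I would begin with the cap geometry, which feeds (ii) and (iv). For the uniform class, $\operatorname{cap}^{\mathrm{unif}}_{l+1}/\operatorname{cap}^{\mathrm{unif}}_l = \mu$, and $\mu \le r$ because $\mu^2 < \mu/\nu$. For the graded class the ratio is $\mu\,(g_{l+1}/g_l)^2 \le \mu(\nu\mu)^{-1/2} = r$. So both sequences are geometric with ratio at most $r$, and their maxima sit at $l=0$. Moreover $1 \le g_0 \le g_k$ gives $g_0^2\sum_k a_k g_k^{-2} \le 1$, hence $\operatorname{cap}^{\mathrm{gr}}_0 \le \operatorname{cap}^{\mathrm{unif}}_0$. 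Every cap of either class is therefore at most $\operatorname{cap}^{\mathrm{unif}}_0 = d_0\sigma_\xi^2/m_{\mathrm{sat}}$. For $m \le \tfrac12 m_{\mathrm{sat}}$ this is at most $\tfrac12 d_0\varepsilon^2 = \tfrac12 d_l\varepsilon^2$, which is (ii).

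Item (i) is arithmetic. Since $s_0 = (1-\mu)/(1-\mu^L) \in [1-\mu,1]$, we get $\tfrac12 m_{\mathrm{sat}} \ge d^2\sigma_\xi^2/(2L^2) \ge 4.5$ once $d \ge 3L/\sigma_\xi$. The same sandwich on $s_0$ gives $|W| = \Theta(d^2\sigma_\xi^2/L^2)$, which grows without bound in the joint scaling.

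For (iii), the upper bound comes from the zero estimator. Its risk at $w$ is $\sum_l \sigma_{(l)}^2\|w_l\|^2$, and maximising this over $\sum_l D_l^2\|w_l\|^2 \le B^2$ puts all the budget on the level with the largest $\sigma_{(l)}^2/D_l^2$, giving exactly $\max_l \operatorname{cap}_l$.

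The lower bound is the step I expect to be the main obstacle. The constant $c(L)$ of \cref{lem:envelope}(i) depends on $L$, but (iv) needs constants uniform in $L$, so I would not invoke that lemma. Instead I would run Assouad's lemma on the full hypercube of level $0$ alone. Take entries $\pm\delta$ on all $d_0$ coordinates of $V_0$ with $\sigma_{(0)}^2\delta^2 = \operatorname{cap}_0/d_0$; then $\|Dw\|_2^2 = d_0 D_0^2\delta^2 = B^2$, so every vertex is feasible. A single sign flip costs KL divergence $2m\sigma_{(0)}^2\delta^2/\sigma_\xi^2 = 2\operatorname{cap}_0/(d_0\varepsilon^2) \le 1$ by (ii). Assouad then yields risk at least an absolute constant times $d_0\cdot\sigma_{(0)}^2\delta^2 = \operatorname{cap}_0$, which gives $c_4$. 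This is legitimate only because the maximum sits at $l=0$, so the remaining levels can be discarded.

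For (iv), combining (iii) for both classes shows that the risk ratio lies within $[c_4, c_4^{-1}]$ of $\operatorname{cap}^{\mathrm{gr}}_0/\operatorname{cap}^{\mathrm{unif}}_0 = g_0^2\sum_k a_k g_k^{-2} = \Lambda(g)\,g_0^2/\sum_l s_l g_l^2$. Since $s_l g_l^2$ is geometric with ratio at most $r$, we have $s_0 g_0^2 \le \sum_l s_l g_l^2 \le s_0 g_0^2/(1-r)$. Together with $s_0 \in [1-\mu,1]$ this gives the two displayed constants, $c_4(1-r)$ below and $1/(c_4(1-\mu))$ above. At the fourth-root grades $\Lambda = \mathrm{BC}(a,s)^2 = \Theta(\nu^L)$ by \cref{cor:stratified}, and at $g^\star_C$ the gain is $\Lambda^\star_C$ by definition.

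For (v), the condition $m \ge d_0\sigma_\xi^2/\min_l\operatorname{cap}_l$ gives $d_l\varepsilon^2 \le \operatorname{cap}_l$ for every $l$, so $S$ is full and $\sum_l \min\{\operatorname{cap}_l, d_l\varepsilon^2\} = d\varepsilon^2$. The condition $m \ge 2d+4$ activates \cref{lem:envelope}(ii), so both risks are $\Theta_L(d\varepsilon^2)$ and their ratio is $\Theta_L(1)$.
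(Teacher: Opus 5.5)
Your proof is correct. For (i), (ii), (iv), (v) and the upper half of (iii) it follows the paper's argument step for step: the same cap-ratio computation, the same sandwich $s_0 g_0^2 \le \sum_l s_l g_l^2 \le s_0 g_0^2/(1-r)$, and the same use of $s_0\in[1-\mu,1]$.

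You depart from the paper in the lower bound of (iii). The paper reruns the packing of \cref{lem:envelope}(i) on level $0$ alone. It keeps the constant-testing-error amplitude $\sigma_{(0)}^2\delta_0^2=c_0\varepsilon^2$ and uses only $k_0=\min\{d_0,\lfloor\operatorname{cap}_0/(c_0\varepsilon^2)\rfloor\}$ coordinates. It then applies Varshamov--Gilbert and Fano when $k_0\ge 32$, and falls back on a Le Cam two-point bound when $k_0<32$. You instead spread the whole budget evenly over all $d_0$ coordinates of $V_0$ and apply Assouad.

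Your route works precisely because of (ii). On $W$ the level is saturated, so the evenly spread amplitude is automatically below the noise: the per-flip KL is $2\operatorname{cap}_0/(d_0\varepsilon^2)\le 1$. This gives a one-line bound with an explicit absolute constant, for instance $c_4=\tfrac12(1-2^{-1/2})$ via Pinsker, with no truncation and no case split. The paper's allocation is inherited from \cref{lem:envelope}(i), where it must work at every $m$. That includes unsaturated levels, where even spreading would make each sign flip easily detectable, and this is what forces the floor in $k_0$ and the $32$-dichotomy. That generality is unused on $W$. Your argument is therefore the more economical one here; the paper's buys only the reuse of a single packing template across both lemmas.

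One step you should write out explicitly. Since the infimum runs over all estimators, first replace $\hat f$ by its $L^2(P_x)$-projection onto the linear functions; this can only decrease its distance to every linear target. The loss then splits over the orthogonal coordinates as $\sum_{j\in V_0}\sigma_{(0)}^2(\hat w_j-w_j)^2$, and a wrong sign at coordinate $j$ costs at least $\sigma_{(0)}^2\delta^2$. With that in place, Assouad applies verbatim.
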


\begin{proof}
Since $g$ is nondecreasing, $g_0^2 \sum_k a_k g_k^{-2} = \sum_k a_k (g_0/g_k)^2 \le \sum_k a_k = 1$, so
\[
\max_l \operatorname{cap}_l^{\mathrm{gr}} \;=\; \operatorname{cap}_0^{\mathrm{gr}} \;=\; \frac{L}{d}\, s_0\, g_0^2 \sum_k a_k g_k^{-2} \;\le\; \frac{L}{d}\, s_0 \;=\; \operatorname{cap}_0^{\mathrm{unif}},
\]
where the first equality holds because $s_{l+1}g_{l+1}^2 / (s_l g_l^2) = \mu\, (g_{l+1}/g_l)^2 \le \mu\,(\nu\mu)^{-1/2} = r < 1$: both cap sequences are geometric with ratio at most $r$ (ratio $\mu \le r$ in the uniform case) and are maximised at $l = 0$.

(ii) For $m \le \tfrac{1}{2} m_{\mathrm{sat}}$ we have $\operatorname{cap}_0^{\mathrm{unif}} \le \tfrac{1}{2} d_0 \varepsilon^2$ by the definition of $m_{\mathrm{sat}}$, and every other cap of either class is at most $\operatorname{cap}_0^{\mathrm{unif}}$ by the display above, while $d_l = d_0$ for every $l$.

(i) $W \neq \emptyset$ requires $m_{\mathrm{sat}} \ge 8$, that is $d^2 \ge 8 L^2 s_0 / \sigma_\xi^2$, and $s_0 = (1-\mu)/(1-\mu^L) \le 1$ makes $d \ge 3L/\sigma_\xi$ sufficient. The width claim is immediate from the formula for $m_{\mathrm{sat}}$.

(iii) Upper bound: the zero estimator has excess risk $\|\Sigma^{1/2} w\|_2^2 = \sum_j \sigma_j^2 w_j^2$, whose supremum over $\{\|Dw\|_2 \le B\}$ is $\max_j \sigma_j^2 B^2 / D_j^2 = \max_l \operatorname{cap}_l$, attained on the maximising coordinate. Lower bound: run the packing of the proof of \cref{lem:envelope}(i) on the single level $l = 0$, with $k_0 = \min\bigl\{ d_0, \ \lfloor \operatorname{cap}_0 / (c_0 \varepsilon^2) \rfloor \bigr\}$ and $\sigma_{(0)}^2 \delta_0^2 = c_0 \varepsilon^2$; feasibility uses the full budget at one level, $k_0 D_0^2 \delta_0^2 = c_0 \varepsilon^2 k_0 B^2 / \operatorname{cap}_0 \le B^2$. If $k_0 \ge 32$, then either $k_0 = d_0$, in which case $k_0 \varepsilon^2 = d_0 \varepsilon^2 \ge 2 \operatorname{cap}_0$ by (ii), or $k_0 = \lfloor \operatorname{cap}_0 / (c_0 \varepsilon^2) \rfloor \ge 32$, in which case $\operatorname{cap}_0 \ge 32\, c_0 \varepsilon^2$ and $k_0 \ge \operatorname{cap}_0 / (2 c_0 \varepsilon^2)$, so $k_0 \varepsilon^2 \ge \operatorname{cap}_0 / (2 c_0) \ge 2 \operatorname{cap}_0$; in either case Fano gives $\mathcal{R}_m \ge (c_0/16)\, k_0 \varepsilon^2 \ge (c_0 / 8)\, \operatorname{cap}_0$. If $k_0 < 32$, then either $\operatorname{cap}_0 < 32\, c_0 \varepsilon^2$ or $d_0 < 32$, and in the latter case $\operatorname{cap}_0 \le \tfrac{1}{2} d_0 \varepsilon^2 < 16\, \varepsilon^2$ by (ii); in both cases $\min\{\operatorname{cap}_0, c_0 \varepsilon^2\} \ge (c_0/32)\, \operatorname{cap}_0$, and the two-point argument of \cref{lem:envelope}(i) at level $0$ with amplitude $\sigma_{(0)}^2\delta^2 = \min\{\operatorname{cap}_0, c_0\varepsilon^2\}$ gives $\mathcal{R}_m \ge c_5 \min\{\operatorname{cap}_0, c_0\varepsilon^2\} \ge (c_5 c_0 / 32)\, \operatorname{cap}_0$. Take $c_4$ the smaller of the two constants.

(iv) Dividing the two-sided bounds of (iii),
\[
c_4\, \frac{\operatorname{cap}_0^{\mathrm{gr}}}{\operatorname{cap}_0^{\mathrm{unif}}} \;\le\; \frac{\mathcal{R}_m(\mathcal{F}^{\mathrm{gr}})}{\mathcal{R}_m(\mathcal{F}^{\mathrm{unif}})} \;\le\; \frac{1}{c_4}\, \frac{\operatorname{cap}_0^{\mathrm{gr}}}{\operatorname{cap}_0^{\mathrm{unif}}}, \qquad \frac{\operatorname{cap}_0^{\mathrm{gr}}}{\operatorname{cap}_0^{\mathrm{unif}}} = g_0^2 \sum_k a_k g_k^{-2}.
\]
It remains to compare $g_0^2 \sum_k a_k g_k^{-2}$ with $\Lambda(g)$. Since $\sum_l s_l = 1$,
\[
\begin{split}
g_0^2 \sum_k a_k g_k^{-2} \;&=\; \Lambda(g) \cdot \frac{g_0^2}{\sum_l s_l g_l^2},
\\
s_0\, g_0^2 \;\le\; \sum_l s_l g_l^2 \;\le\; \frac{s_0\, g_0^2}{1-r},
\end{split}
\]
the second line because the summands $s_l g_l^2$ are geometric with ratio at most $r$. Hence $g_0^2 / \sum_l s_l g_l^2$ lies in $[\,1-r,\; 1/s_0\,] \subseteq [\,1-r,\; 1/(1-\mu)\,]$, which gives the display. For the fourth-root grades, \cref{cor:stratified} gives $\Lambda(g^\star) = \mathrm{BC}(a,s)^2 = \Theta(\nu^L)$, and $\Lambda(g^\star_C) = \Lambda^\star_C$ by \cref{prop:clipped-selection}.

(v) For such $m$ every level of both classes satisfies $d_l \varepsilon^2 \le \operatorname{cap}_l$, so $S$ is full, the envelope of \cref{lem:envelope} evaluates to $\Theta_L(\sum_l d_l \varepsilon^2) = \Theta_L(d\,\varepsilon^2)$ for both classes, and the ratio is bounded above and below by $C(L)/c(L)$ and its inverse: past the saturation point the budget is inactive for both classes and the comparison localises to $W$. The band assertion is a scope statement, not a theorem, and records where the separation is to be read off.
\end{proof}

\begin{thm}
\label{thm:stratified-separation}
Assume the setting of this subsection: Gaussian level-homogeneous design with squared loss, geometric profiles $a_l = \nu^{L-1-l}/Z_a$ and $s_l = \mu^l/Z_s$ with $0 < \mu < \nu < 1$, uniform allocation $d_l = d/L$, the calibrations $\|w^*\|_2 = 1$ and $\operatorname{tr} \Sigma = 1$, and a level-constant grade vector $g$ satisfying the monotonicity and ratio condition stated before \cref{lem:window}. Suppose $d \geq 3L/\sigma_\xi$, so that the window $W$ of \cref{lem:window} is nonempty. Then there exist constants $0 < c_{\mu,\nu} \leq C_{\mu,\nu}$ depending only on $(\mu, \nu)$, uniform in $L$, $d$, and $m$, such that for every $m \in W$,
\[
c_{\mu,\nu}\, \Lambda(g)
\;\leq\;
\frac{\mathcal{R}_m\big(\mathcal{F}^{\mathrm{gr}}\big)}{\mathcal{R}_m\big(\mathcal{F}^{\mathrm{unif}}\big)}
\;\leq\;
C_{\mu,\nu}\, \Lambda(g),
\]
with $\Lambda(g)$ the gain of \cref{prop:ml-expressivity}. In particular:
\begin{enumerate}
  \item[(i)] at the fourth-root grades $g^\star$ of \cref{prop:ml-expressivity}(iii) the ratio is $\Theta_{\mu,\nu}\big(\mathrm{BC}(a,s)^2\big) = \Theta_{\mu,\nu}(\nu^L)$;
  \item[(ii)] at the clipped grades $g^\star_C$ of \cref{prop:clipped-selection} the ratio is $\Theta_{\mu,\nu}\big(\Lambda^\star_C\big)$, with $\Lambda^\star_C < 1$ whenever $a \neq s$.
\end{enumerate}

The gain \cref{eq:optimal-gain} is therefore attained on $W$ up to constants depending only on $(\mu,\nu)$, over all estimators, and is a separation between the graded prior and its absence rather than a comparison of the upper bounds of \cref{prop:ml-expressivity}.
\end{thm}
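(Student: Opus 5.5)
The plan is to read the theorem off \cref{lem:window}, which already carries the substance, and to supply only the bookkeeping. First I check that the classes $\mathcal{F}^{\mathrm{gr}}$ and $\mathcal{F}^{\mathrm{unif}}$ of the theorem are the instances $\mathcal{F}(B,D)$ of \cref{subsec:envelope}. For the uniform class take $D=I_d$ and $B=\|w^*\|_2=1$. For the graded class take $D_l=g_l^{-1}$ and $B=\|G^{-1}w^*\|_2$.

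Under the calibrations $\|w^*\|_2=1$ and $\operatorname{tr}\Sigma=1$ with uniform allocation, $\sigma_{(l)}^2=(L/d)s_l$. The graded budget is $B^2=\sum_k a_k g_k^{-2}$, since $\alpha_j=a_l/d_l$ on $V_l$. This reproduces the cap formulas displayed before \cref{lem:window}:
\[
\operatorname{cap}_l^{\mathrm{gr}}=\frac{L}{d}\,s_l\,g_l^2\sum_k a_k g_k^{-2}.
\]
The hypothesis $d\ge 3L/\sigma_\xi$ gives $W\neq\emptyset$ by \cref{lem:window}(i). The grade condition is exactly the one \cref{lem:window}(iv) requires.

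For $m\in W$, \cref{lem:window}(iv) then gives the two-sided bound with
\[
c_{\mu,\nu}=c_4(1-r),\qquad C_{\mu,\nu}=\frac{1}{c_4(1-\mu)},\qquad r=\sqrt{\mu/\nu}.
\]
These constants depend only on $(\mu,\nu)$, since $c_4$ is absolute. Uniformity in $L$, $d$ and $m$ is inherited from the lemma. The underlying mechanism is worth recording because it is the only nontrivial content. On $W$ every level is saturated by \cref{lem:window}(ii), so both risks are pinned to the largest cap, $\operatorname{cap}_0$, by (iii). The ratio $\operatorname{cap}_0^{\mathrm{gr}}/\operatorname{cap}_0^{\mathrm{unif}}=g_0^2\sum_k a_kg_k^{-2}$ differs from $\Lambda(g)$ by the factor $g_0^2/\sum_l s_lg_l^2$. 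That factor is bounded above and below because the sequence $s_lg_l^2$ is geometric with ratio at most $r<1$.

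For item (i), the first thing to verify is that the fourth-root grades satisfy the grade condition. Here $g_l^\star\propto (a_l/s_l)^{1/4}\propto(\nu\mu)^{-l/4}$, so consecutive ratios equal $(\nu\mu)^{-1/4}$ exactly. The sequence is nondecreasing, and gauge invariance (\cref{prop:admissible-grades}(i)) lets us normalise $g_0^\star=1$. Then $\Lambda(g^\star)=\mathrm{BC}(a,s)^2=\Theta(\nu^L)$ by \cref{cor:stratified}. The implied constant there is $(1-\nu)(1-\mu)/(1-r)^2$ times $\nu^{-1}$ up to finite-$L$ factors. I must check that this is bounded above and below uniformly in $L$. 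It is, because $Z_a$, $Z_s$ and $1-r^L$ all lie in fixed $(\mu,\nu)$-dependent intervals for $L\ge1$. So the $\Theta$ depends on $(\mu,\nu)$ only.

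For item (ii), I invoke \cref{prop:clipped-selection} for three facts. First, $g^\star_C$ is the clipped fourth-root vector. Clipping preserves monotonicity and $g_0\ge1$ after gauge normalisation, and it only decreases consecutive ratios, as the paper notes, so the grade condition still holds. Second, $\Lambda(g^\star_C)=\Lambda^\star_C$. Third, $\Lambda^\star_C<1$ when $a\neq s$. The last fact also follows directly from \cref{prop:admissible-grades}(iv): a small multiple of an increasing level-constant direction lies in the clip region and is admissible by \cref{rem:cone-stratified}.

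The main obstacle is not analytic but a matter of faithfulness to the earlier statements. One issue is confirming that the clipped optimum is indeed monotone with $g_0\ge1$ after the gauge shift. The other is confirming that the finite-$L$ constants in \cref{cor:stratified} do not degrade uniformity. I would check the first by writing the clip as a projection of the affine log-grade sequence onto $[\,0,\log C\,]$ and verifying that this projection is monotone.
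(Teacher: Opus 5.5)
Your proposal is correct and follows the paper's proof. The two-sided bound is read off \cref{lem:window}(iv) with the same constants $c_4(1-r)$ and $\bigl(c_4(1-\mu)\bigr)^{-1}$, items (i) and (ii) come from \cref{cor:stratified} and \cref{prop:clipped-selection} once the grade condition is checked, and $\Lambda^\star_C<1$ is obtained, as in the paper, by pushing a small increasing level-constant direction into $\mathcal{Q}_C\cap\mathcal{Q}_+$ via \cref{rem:cone-stratified}; your checks that the constant in \cref{cor:stratified} is uniform in $L$ (the paper states it only as $L\to\infty$) and that clipping preserves the ratio condition fill in steps the paper leaves implicit, though note that the strict inequality $\Lambda^\star_C<1$, in both versions, tacitly needs $C>1$, since at $C=1$ the box $\mathcal{Q}_C$ is $\{\mathbf{0}\}$ and $\Lambda^\star_C=1$.
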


\begin{proof}
The two-sided bound is \cref{lem:window}(iv), with $c_{\mu,\nu} = c_4(1-r)$ and $C_{\mu,\nu} = \big(c_4(1-\mu)\big)^{-1}$, where $c_4$ is the constant of \cref{lem:window}(iii) and $r = \sqrt{\mu/\nu}$; both grade vectors named in (i) and (ii) satisfy the grade condition, as recorded before \cref{lem:window}, so the bound applies to them. 

The value 
\[
\Lambda(g^\star) = \mathrm{BC}(a,s)^2 = \Theta(\nu^L)
\]
 is \cref{cor:stratified}, and $\Lambda(g^\star_C) = \Lambda^\star_C$ is \cref{prop:clipped-selection}. 
 For the strict inequality in (ii), $a \neq s$ gives $\mathcal{Q}_+ \neq \emptyset$ by \cref{prop:admissible-grades}(iii); by \cref{rem:cone-stratified} a level-constant grade increasing in $l$ satisfies the criterion of \cref{prop:admissible-grades}(iv), so $\varepsilon v \in \mathcal{Q}_+$ for all small $\varepsilon > 0$ along such a direction $v$, and for small $\varepsilon$ the point $\varepsilon v$ lies in the box $\mathcal{Q}_C$ as well; hence $\mathcal{Q}_C$ meets $\mathcal{Q}_+$ and the minimum over it is strictly below $1$. Nonemptiness of $W$ at $d \geq 3L/\sigma_\xi$ is \cref{lem:window}(i).
 \end{proof}

\Cref{thm:stratified-separation} establishes the separation of \cref{con:minimax-separation} in the regime the program occupies. Level-stratified geometric profiles are the hypothesis of \cref{cor:stratified} and the setting of every planned configuration, and $g^\star$ and $g^\star_C$ are the grades those configurations carry; the loss is squared, the curved case in which minimax risk is classically posed. The statement is a risk ratio at each sample size of the window, which is the form \cref{rem:budget-active} shows to be well posed: past saturation both risks sit on the parametric floor and the ratio is $\Theta_L(1)$ by \cref{lem:window}(v), so the theorem localises the gain to $W$ rather than diminishing it, and $W$ is the token-limited regime the framework is built for. What remains open is the general-profile, Lipschitz-loss case, and \cref{con:minimax-separation} states it.

\subsection{Graded Positional Encoding}
\label{subsec:graded-pe}

Let $l(j) \in \{0, \dots, L-1\}$ denote the level of global dimension $j$, and $i(j)$ its local index within $V_{l(j)}$. The graded positional encoding scales the standard sinusoidal encoding \cref{eq:pe} by the grade of the dimension it occupies:
\begin{equation}
\label{eq:graded-pe-full}
\mathrm{PE}'(t, j) = \lambda^{q_{l(j),\, i(j)}}
\begin{cases}
\sin\!\left( \dfrac{t}{10000^{\,2\lfloor i(j)/2 \rfloor / d_{l(j)}}} \right), & i(j) \text{ even}, \\[2ex]
\cos\!\left( \dfrac{t}{10000^{\,2\lfloor i(j)/2 \rfloor / d_{l(j)}}} \right), & i(j) \text{ odd}.
\end{cases}
\end{equation}

The wavelength is computed within the level, so that each $V_l$ carries a complete positional basis at its own resolution rather than a fragment of a global one. The input to the first layer is $X_0 = E' + \mathrm{PE}' \in \mathbb{R}^{n \times d}$.

\subsection{The Graded Forward Pass}
\label{subsec:graded-forward}

The stack applies $K$ pre-norm blocks in the ordering of \cref{sec:2.1}, with EG-MHSA in place of MHSA:
\begin{equation}
\label{eq:forward-pass}
\begin{split}
X_{\ell}' &= X_{\ell-1} + \mathrm{EG\text{-}MHSA}_{\ell}\big(\mathrm{LN}(X_{\ell-1})\big), \\
X_{\ell}  &= X_{\ell}' + \mathrm{FFN}_{\ell}\big(\mathrm{LN}(X_{\ell}')\big),
\end{split}
\end{equation}
for $\ell = 1, \dots, K$, with layer normalisation as in \cite{ba2016layer}.

The feed-forward block admits a level-decomposed variant,
\[
\mathrm{FFN}_{\ell}(X) = \sum_{l=0}^{L-1} \mathrm{GeGLU}_l\!\left(X P_l\right) P_l^\top ,
\]
using the gated linear unit of \cite{shazeer2020glu}. We record what this costs. The decomposed block is block-diagonal with respect to $\bigoplus_l V_l$ and therefore performs \emph{no} cross-level mixing; all interaction between levels must then occur in attention. This is a substantive architectural restriction and not merely a reparameterization, and it is in tension with the informal reading of the filtration $V_{\leq l} = \bigoplus_{l' \leq l} V_{l'}$ as a channel by which lower levels feed higher ones. A filtration is a nested family of subspaces; on its own it enforces no propagation. Directed level-to-level transport requires block-\emph{triangular} maps $\phi_{l' \leftarrow l} \colon V_l \to V_{l'}$ rather than block-diagonal ones; the morphic structure of \cite{sh-111} supplies them, and MLGE does not.

At position $t$ the final hidden state $z_t \in \mathbb{R}^d$ is mapped to vocabulary logits by $W_{\mathrm{voc}}^\top z_t \in \mathbb{R}^{|\mathcal{V}|}$, with $W_{\mathrm{voc}} \in \mathbb{R}^{d \times |\mathcal{V}|}$, and
\begin{equation}
\label{eq:graded-softmax}
p(y_t \mid y_{<t}) = \mathrm{softmax}\!\left( W_{\mathrm{voc}}^\top z_t + \kappa \, q_{\mathrm{out}} \right),
\qquad
q_{\mathrm{out}} \in \mathbb{Q}_{\geq 0}^{|\mathcal{V}|},
\ \kappa > 0 ,
\end{equation}
which multiplies the unnormalised probability of token $v$ by $e^{\kappa q_{\mathrm{out},v}}$ and is therefore monotone in the grade, uniformly in $z_t$.


\section{Training Graded LLMs}
\label{sec-5}

Training a GLLM retains the autoregressive next-token objective of classical LLMs \cite{radford2019language} and modifies the weight each token carries in it. The standard objective is the unweighted negative log-likelihood of \cref{eq:clm}. The graded objective attaches to position $t$ a weight $\lambda^{q_t}$ recording the structural salience of that position, reweighting the summands of the decomposition \cref{eq:ce-decomposition}. This section defines the objective, establishes what it converges to, and states precisely which of the framework's claims it can and cannot support.

We state the conclusion at the outset, because the section's structure follows from it. Grading is a \emph{statistical} device, and its whole force is spent on the resource that is scarce. It restricts the effective hypothesis class in a direction aligned with hierarchical targets, and the reduction in the data required to reach a given generalisation level is a consequence of that restriction, mediated by \cref{prop:ml-expressivity} and located exactly by \cref{prop:admissible-grades}. It buys nothing in step count: exponential rescaling worsens the conditioning of the objective, and we claim no acceleration. This distinction is central to the claim. Extra optimisation steps are cheap while extra high-quality tokens are not, so a device that spends conditioning to buy sample efficiency spends a cheap resource on an expensive one; \cref{rem:optimisation-cost} prices the two sides with the same constant, the step penalty at the clip $C$ and the token purchase at $C^2$. \Cref{sec-6} measures the two separately for exactly that reason.

\subsection{The Graded Objective and What It Minimises}
\label{subsec:graded-loss}

The base objective weights the per-token log-loss exponentially by a grade:
\begin{equation}
\label{eq:base-loss}
\mathcal{L}_{\mathrm{grade}}(y, \hat{p}; q) = -\sum_{t=1}^{n} \lambda^{q_t} \log \hat{p}(y_t \mid y_{<t}),
\qquad q = (q_1, \dots, q_n) \in \mathbb{Q}_{\geq 0}^{n},
\end{equation}
with $\hat{p}$ the model's predicted conditional from \cref{eq:graded-softmax} and $\lambda > 1$ the grading base fixed in \cref{sec-3}.

Everything depends on what $q_t$ is allowed to depend on, and the following proposition draws the line.

\begin{prop}
\label{prop:properness}
Let $p_\star(\cdot \mid y_{<t})$ denote the true conditional and let $\hat{p}$ range over all of $\Delta^{|\mathcal{V}|-1}$.
\begin{enumerate}
  \item If $q_t = q(y_{<t})$ depends only on the context, then the population minimizer of \cref{eq:base-loss} is $\hat{p}_\star(\cdot \mid y_{<t}) = p_\star(\cdot \mid y_{<t})$. The objective is a reweighting of contexts and is consistent.
  \item If $q_t = q(y_t)$ depends only on the realised target, then the population minimizer is the exponentially tilted conditional
\[
\hat{p}_\star(v \mid y_{<t}) = \frac{\lambda^{q(v)}\, p_\star(v \mid y_{<t})}{\sum_{u \in \mathcal{V}} \lambda^{q(u)}\, p_\star(u \mid y_{<t})} \;\neq\; p_\star(v \mid y_{<t}),
\]
and the objective is inconsistent for $p_\star$; the tilt is the same at every context.
  \item If $q_t = q(y_t, y_{<t})$ depends on the realised target \emph{and} its context --- a constituency parse depth, for instance --- then the population minimizer is the context-dependent tilted conditional
\[
\hat{p}_\star(v \mid y_{<t}) = \frac{\lambda^{q(v,\, y_{<t})}\, p_\star(v \mid y_{<t})}{\sum_{u \in \mathcal{V}} \lambda^{q(u,\, y_{<t})}\, p_\star(u \mid y_{<t})} \;\neq\; p_\star(v \mid y_{<t}),
\]
and the objective is inconsistent for $p_\star$; the tilt varies with the context.
\end{enumerate}
\end{prop}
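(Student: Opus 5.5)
The plan is to reduce all three cases to one pointwise computation. Condition on the context $y_{<t}$ and minimise the conditional population risk separately at each context. The population objective is $\mathbb{E}\bigl[\sum_t \lambda^{q_t}(-\log \hat p(y_t\mid y_{<t}))\bigr]$, and $\hat p$ ranges over the full simplex independently at each context. So by the tower property it suffices to minimise, for fixed $y_{<t}$,
\[
J(\hat p) = -\sum_{v\in\mathcal{V}} p_\star(v\mid y_{<t})\, \lambda^{q(v,y_{<t})} \log \hat p(v)
\]
over $\hat p \in \Delta^{|\mathcal{V}|-1}$. Here $q(v,y_{<t})$ is the most general dependence, case (3), and cases (1) and (2) are the specialisations in which $q$ ignores $v$ or ignores $y_{<t}$.

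The core step is a weighted Gibbs inequality. Set $w_v = \lambda^{q(v,y_{<t})} p_\star(v\mid y_{<t}) \ge 0$, $Z = \sum_u w_u > 0$ and $r_v = w_v/Z \in \Delta^{|\mathcal{V}|-1}$. Then
\[
J(\hat p) = Z\bigl(H(r) + D_{\mathrm{KL}}(r\,\|\,\hat p)\bigr).
\]
This is the same add-and-subtract used in \cref{lem:ce-decomposition}. Since $Z>0$, $J$ is uniquely minimised at $\hat p = r$ by nonnegativity and definiteness of KL. This single formula gives the tilted conditional displayed in (3).

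Next I specialise. In case (1), $\lambda^{q(y_{<t})}$ is constant in $v$, so it cancels between $w_v$ and $Z$ and gives $r = p_\star(\cdot\mid y_{<t})$: the weight only rescales each context's contribution, which is the consistency claim. In case (2), the tilt factor $\lambda^{q(v)}$ is the same function of $v$ at every context, which is the displayed formula and the remark that the tilt is context-independent.

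The remaining work is the inequalities $\hat p_\star \neq p_\star$ in (2) and (3). They hold precisely when $v\mapsto\lambda^{q(\cdot)}$ is non-constant on the support of $p_\star(\cdot\mid y_{<t})$. Indeed, $r = p_\star$ iff $\lambda^{q(v)} = Z$ for every $v$ with $p_\star(v)>0$, using $\lambda>1$. So I would state this as the implicit nondegeneracy hypothesis: the grade genuinely depends on the target among tokens of positive probability, at some context of positive probability. Pinning down this qualification is the only delicate point, since a grade that is constant on the support collapses to case (1). I would also note that contexts of probability zero are irrelevant to the population minimiser, so "the minimiser" is understood up to null contexts.
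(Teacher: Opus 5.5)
Your proof is correct. It shares the paper's skeleton: reduce to a single context, minimise the weighted cross-entropy $-\sum_v w_v\, p_\star(v\mid y_{<t})\log\hat p(v)$ over the simplex, and read cases (1)--(3) off one tilted formula. The difference lies in the key step.

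The paper writes the Lagrangian stationarity condition $-w_v p_\star(v)/\hat p(v)+\mu=0$ and normalises. This finds the tilted conditional quickly, but it leaves two points implicit. First, the stationary point is the unique global minimiser only because the objective is convex in $\hat p$. Second, the condition does not literally apply at tokens with $p_\star(v\mid y_{<t})=0$: there the minimiser lies on the boundary of the simplex, so a KKT argument would be needed. Your identity $J(\hat p)=Z\bigl(H(r)+D_{\mathrm{KL}}(r\,\|\,\hat p)\bigr)$ settles existence, uniqueness, global optimality and the boundary case at once. It also ties the result directly to \cref{lem:ce-decomposition}.

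You also make explicit something the paper's proof passes over with ``does not cancel''. The inequalities $\hat p_\star\neq p_\star$ in (2) and (3) hold only when the grade is non-constant in $v$ on the support of $p_\star(\cdot\mid y_{<t})$, at some context of positive probability. Otherwise the tilt is invisible and the case collapses to (1); a constant grade or a deterministic next token are examples. The same kind of caveat applies to the clause ``the tilt varies with the context'' in (3). An additively separable grade $q(v,y_{<t})=q_1(v)+q_2(y_{<t})$ gives a context-independent tilt, because the factor $\lambda^{q_2(y_{<t})}$ cancels in the normalisation, and (3) then reduces to (2).
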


\begin{proof}
For fixed context $y_{<t}$, write $w_v = \lambda^{q_t}$ for the weight attached to the outcome $v$ and minimise $-\sum_{v} w_v\, p_\star(v)\log \hat{p}(v)$ over the simplex. The Lagrangian stationarity condition $-w_v p_\star(v)/\hat{p}(v) + \mu = 0$ gives $\hat{p}(v) = w_v p_\star(v)/\mu$, and $\mu$ is fixed by normalisation, yielding $\hat{p}(v) = w_v p_\star(v) / \sum_u w_u p_\star(u)$. In case (1) $w_v \equiv \lambda^{q(y_{<t})}$ is constant in $v$ and cancels. In case (2) $w_v = \lambda^{q(v)}$ does not, and is the same function of $v$ at every context. In case (3) $w_v = \lambda^{q(v,\, y_{<t})}$ does not cancel either, and the resulting tilt is a different function of $v$ at each context.
\end{proof}

\begin{rem}
\label{rem:tilt-cancellation}
Case (2) of \cref{prop:properness} is a constraint that the architecture can match. Write the graded softmax \cref{eq:graded-softmax} as $\hat{p}(v \mid y_{<t}) \propto e^{\kappa q_{\mathrm{out},v}}\, s(v \mid y_{<t})$, where $s$ is the conditional realised by the ungraded head. Setting $\kappa = \ln \lambda$ and $q_{\mathrm{out},v} = q(v)$ makes the architectural tilt of \cref{eq:graded-softmax} coincide with the objective tilt of \cref{prop:properness}(2), so that the population minimizer satisfies $s = p_\star$: the underlying model is calibrated to the corpus, and the grading is carried entirely by the output head. Under this matching, and only under it, the graded objective and the graded architecture are compatible with the standard language-modelling objective.

Case (3) admits no such matching. The tilt of \cref{prop:properness}(3) is a different function of $v$ at each context, whereas $q_{\mathrm{out}}$ in \cref{eq:graded-softmax} is a single fixed vector, applied identically at every context; no choice of $\kappa$ and $q_{\mathrm{out}}$ can therefore absorb a context-dependent tilt. A grade that consults the target and its context jointly, a parse depth being the canonical example, leaves the trained model uncalibrated, with the miscalibration at each context equal to the Kullback--Leibler divergence between the corpus conditional and its tilt at that context.
\end{rem}

\Cref{prop:properness}(3) rules out the parse-depth initialisation $q_t = \operatorname{depth}(y_t \mid y_{<t})$ outright: parse depth is a function of the target and its context jointly, so it falls under case (3), where by \cref{rem:tilt-cancellation} no matching is available. We therefore take $q_t = q_{\mathrm{out}, y_t}$ with $q_{\mathrm{out}}$ a per-type grade estimated offline, which is a function of the token alone, falls under case (2), and admits the cancellation. Part-of-speech classes supply a natural default: high grades for open-class types, low for closed-class function words. A parse-depth grading is retained in the program only as an intentionally uncalibrated ablation; by \cref{prop:properness}(3) its perplexity gap measures the tilt, not the inductive bias.

\subsection{Regularisation of the Grades}
\label{subsec:grade-reg}

The trainable parameters are
\[
\theta = \{W_e, W_{\mathrm{voc}}, \{W_{Q_h}, W_{K_h}, W_{V_h}\}_{h=1}^{H}, W_O, \{P_l\}_{l=0}^{L-1}, \{q_h\}, \{q_l\}, q_{\mathrm{out}}\}.
\]
Collect all grades into $\mathbf{q}$. The full objective is
\begin{equation}
\label{eq:total-loss}
\begin{split}
\mathcal{L}_{\mathrm{total}}(\theta)
&= \mathbb{E}_{y \sim \mathcal{D}}\!\left[ w(y)\, \mathcal{L}_{\mathrm{grade}}(y, \hat{p}; q) \right]
 + \nu_{\mathrm{mag}} \|\mathbf{q}\|_2^2
 + \nu_{\mathrm{orth}}\, \Omega_{\mathrm{orth}}(P) \\
&\quad - \nu_{\mathrm{div}} \sum_{1 \leq h < h' \leq H} \big\| q_h - q_{h'} \big\|_2^2 ,
\end{split}
\end{equation}
with $\nu_{\mathrm{mag}}, \nu_{\mathrm{orth}}, \nu_{\mathrm{div}} > 0$, and $\Omega_{\mathrm{orth}}$ the orthogonality penalty of \cref{sec-4}.

Three points on \cref{eq:total-loss} deserve emphasis. First, the diversity term enters with a \emph{negative} sign and is therefore repulsive: it rewards heads whose grade tuples differ. This is required for consistency with the framework, since the graded bilinear forms of \cref{subsec:eg-mhsa} are head-specific only when the $q_h$ differ, and \cref{sec-6} predicts head specialisation as an observable. An attractive penalty $\sum_h \|q_h - \bar q\|_2^2$ has its minimum at $q_h \equiv \bar q$, at which point every head carries the identical grading and the construction degenerates to a global rescaling; such a term would penalise precisely the phenomenon the framework predicts.

Second, the repulsion must be bounded, and the magnitude term $\nu_{\mathrm{mag}}\|\mathbf{q}\|_2^2$ supplies the cap. Writing $\mathbf{q} = (q_1, \dots, q_H)$ with $\|\mathbf{q}\|_2^2 = \sum_h \|q_h\|_2^2$, the polarisation identity
\[
\sum_{1 \leq h < h' \leq H} \big\| q_h - q_{h'} \big\|_2^2
 = H \sum_{h=1}^{H} \|q_h\|_2^2 - \Big\| \sum_{h=1}^{H} q_h \Big\|_2^2
\]
shows that the two quadratic terms of \cref{eq:total-loss} contribute together
\[
\nu_{\mathrm{mag}} \|\mathbf{q}\|_2^2 - \nu_{\mathrm{div}} \sum_{h < h'} \|q_h - q_{h'}\|_2^2
 = \big( \nu_{\mathrm{mag}} - \nu_{\mathrm{div}} H \big) \|\mathbf{q}\|_2^2 + \nu_{\mathrm{div}} \Big\| \sum_{h} q_h \Big\|_2^2 ,
\]
a quadratic form with eigenvalue $2\nu_{\mathrm{mag}}$ on the head-constant directions and $2(\nu_{\mathrm{mag}} - \nu_{\mathrm{div}} H)$ on their orthogonal complement. We therefore impose
\begin{equation}
\label{eq:reg-condition}
\nu_{\mathrm{mag}} > \nu_{\mathrm{div}}\, H ,
\end{equation}
which is exactly the condition under which the pair is coercive, and hence \cref{eq:total-loss} bounded below in $\mathbf{q}$, and simultaneously the condition under which it is convex. The bound is tight: taking $H = 2$ and $q_1 = -q_2 = v$ gives $2(\nu_{\mathrm{mag}} - 2\nu_{\mathrm{div}})\|v\|_2^2$, which is unbounded below as $\|v\|_2 \to \infty$ whenever $\nu_{\mathrm{mag}} < \nu_{\mathrm{div}} H$.

Third, data filtering, if desired, must act \emph{inside} the expectation, through the per-sequence weight $w(y) \in [0,1]$; an additive penalty depending only on the data has zero gradient and no effect on training.

\begin{rem}
\label{rem:sequence-weight}
\Cref{prop:properness}(1) applies to $w(y)$: reweighting whole sequences leaves the per-context minimizer unchanged, so $w$ may be set from a parse-validity check without disturbing consistency.
\end{rem}

The grades entering \cref{eq:total-loss} are refined end-to-end, but they are not found there: they are selected offline from estimated profiles, by the convex program of \cref{cor:grade-convexity} and its clipped form \cref{prop:clipped-selection}, both developed in \cref{sec-7}.

\subsection{Optimisation}
\label{subsec:optimisation}

Optimisation uses AdamW \cite{loshchilov2017decoupled}, with raw moments
\[
\begin{split}
m_k &= \beta_1 m_{k-1} + (1 - \beta_1)\, g_k, \\
v_k &= \beta_2 v_{k-1} + (1 - \beta_2)\, g_k \odot g_k, \qquad g_k := \nabla_\theta \mathcal{L}_{\mathrm{total}}(\theta_k),
\end{split}
\]
bias corrections $\hat{m}_k = m_k / (1 - \beta_1^{k})$ and $\hat{v}_k = v_k / (1 - \beta_2^{k})$, and update
\[
\theta_{k+1} = \theta_k - \eta_k \frac{\hat{m}_k}{\sqrt{\hat{v}_k} + \epsilon} - \eta_k\, \omega\, \theta_k ,
\]
taking $\beta_1 = 0.9$, $\beta_2 = 0.95$, $\omega = 0.1$, $\epsilon = 10^{-8}$. The learning rate follows the cosine schedule of \cite{loshchilov2017sgdr},
\[
\eta_k = \eta_{\min} + \tfrac{1}{2}\left(\eta_{\max} - \eta_{\min}\right)\left(1 + \cos(\pi k / T)\right),
\]
with $\eta_{\max} = 6 \times 10^{-4}$ and $\eta_{\min} = 10^{-5}$.

Gradient clipping must be stated carefully, since the point of clipping is to remove the grade dependence rather than to accommodate it. We clip the global gradient norm to a constant independent of $\lambda$ and of $q_{\max}$: a threshold set at $\lambda^{q_{\max}}$ would rescale with the very amplification it is meant to control and would impose no constraint at all. Together with the magnitude penalty of \cref{eq:total-loss} under \cref{eq:reg-condition} and the grade clipping of \cref{subsubsec:configs}, this keeps the effective step size bounded uniformly in the grades.

Distributed training uses standard sharded data parallelism with optimiser-state partitioning \cite{rajbhandari2020zero} and pipeline parallelism across devices \cite{huang2019gpipe}.

\begin{lem}
\label{lem:graded-smoothness}
Suppose each per-token loss $\ell_t(\theta) = -\log \hat{p}(y_t \mid y_{<t})$ is $\varsigma_t$-smooth on a region $\Theta$, and write $\varsigma_{\mathrm{unif}} := \sum_{t=1}^{n} \varsigma_t$ for the resulting smoothness constant of the unweighted sum. Then on $\Theta$ the graded loss \cref{eq:base-loss} is $\varsigma_{\mathrm{grade}}$-smooth with
\[
\varsigma_{\mathrm{grade}} \;\leq\; \lambda^{q_{\max}}\, \varsigma_{\mathrm{unif}},
\qquad
q_{\max} := \max_t q_t ,
\]
and the regularisation terms of \cref{eq:total-loss} contribute an additive constant $c_{\mathrm{reg}}$ depending on $\nu_{\mathrm{mag}}, \nu_{\mathrm{orth}}, \nu_{\mathrm{div}}, H$ and the diameter of $\Theta$, but not on $\lambda$ or $\mathbf{q}$.
\end{lem}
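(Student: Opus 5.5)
The plan is to argue directly from the definition of smoothness as a Lipschitz bound on the gradient, using linearity of the gradient in the weights. Since the grades $q_t$ are held fixed in the statement, the graded loss \cref{eq:base-loss} is $\mathcal{L}_{\mathrm{grade}}(\theta) = \sum_{t=1}^n \lambda^{q_t}\ell_t(\theta)$. For $\theta,\theta' \in \Theta$ the triangle inequality gives
\[
\bigl\|\nabla\mathcal{L}_{\mathrm{grade}}(\theta)-\nabla\mathcal{L}_{\mathrm{grade}}(\theta')\bigr\|_2
\le \sum_{t=1}^n \lambda^{q_t}\,\bigl\|\nabla\ell_t(\theta)-\nabla\ell_t(\theta')\bigr\|_2
\le \sum_{t=1}^n \lambda^{q_t}\varsigma_t\,\|\theta-\theta'\|_2 .
\]
Because $\lambda>1$, each weight satisfies $\lambda^{q_t}\le\lambda^{q_{\max}}$, and the coefficient is at most $\lambda^{q_{\max}}\sum_t\varsigma_t=\lambda^{q_{\max}}\varsigma_{\mathrm{unif}}$. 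This is the first claim. If the paper intends $\Theta$ to be non-convex, the Lipschitz-gradient formulation is used pointwise as stated, so no convexity of $\Theta$ is needed. Nonnegativity of the weights is what allows the triangle inequality to be applied term by term.

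Next I handle the expectation over $y\sim\mathcal{D}$ with the weight $w(y)\in[0,1]$ in \cref{eq:total-loss}. Smoothness constants are preserved under nonnegative mixtures with total mass at most one, provided the per-sequence constants are uniform on $\Theta$. I will state this as the reading of the hypothesis, with $\varsigma_t$ uniform over sequences, so that $\varsigma_{\mathrm{unif}}$ bounds the unweighted population loss.

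For the regularisers, each term is a fixed polynomial in the parameters with no dependence on $\lambda$ through the loss weights.
\begin{itemize}
\item[(a)] The two quadratic grade terms combine via the polarisation identity already recorded in \cref{subsec:grade-reg}. Their Hessian is constant, with eigenvalues $2\nu_{\mathrm{mag}}$ and $2(\nu_{\mathrm{mag}}-\nu_{\mathrm{div}}H)$, and an extra $2\nu_{\mathrm{div}}H$ arises from the $\|\sum_h q_h\|^2$ piece. So these terms are smooth with a constant bounded by $2\nu_{\mathrm{mag}}+2\nu_{\mathrm{div}}H$, independent of $\Theta$.
\item[(b)] The penalty $\Omega_{\mathrm{orth}}(P)$ is a quartic polynomial in the entries of $P$. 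Its Hessian is bounded on a bounded set by a constant times $\nu_{\mathrm{orth}}(1+\mathrm{diam}(\Theta)^2)$, with a combinatorial factor in the number of level pairs, absorbed into the dependence on $L$ or $d$, which are fixed architecture parameters.
\end{itemize}
Summing (a) and (b) gives $c_{\mathrm{reg}}$, and smoothness constants add under summation.

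The main obstacle is not the weighting argument, which is immediate, but the domain of the parameters. The per-token loss $\ell_t$ depends on the grades themselves through $M_{q_h,\lambda}$ and $G$, so its smoothness in $\mathbf q$ does depend on $\lambda$. I will therefore make explicit that the statement treats the $\varsigma_t$ as given on $\Theta$, which absorbs that dependence, and that the lemma isolates only the additional factor contributed by the loss weights $\lambda^{q_t}$. The other subtlety is boundedness of $\Theta$ for the quartic term. This is exactly why the diameter enters $c_{\mathrm{reg}}$, and I will note that it is not needed for the grade quadratics.
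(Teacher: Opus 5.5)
Your proposal is correct and follows the paper's proof: the same termwise triangle inequality with $\lambda^{q_t}\le\lambda^{q_{\max}}$, and the same treatment of the regularisers. That treatment uses the Hessian norms $2\nu_{\mathrm{mag}}$ and $2\nu_{\mathrm{div}}H$ of the grade quadratics, and boundedness of $\Theta$ for the polynomial penalty $\Omega_{\mathrm{orth}}$. Your additional remarks (loss weights held fixed, the $w(y)$ mixture, and the $\lambda$-dependence of the per-token losses absorbed into the given $\varsigma_t$) state explicitly assumptions that the paper's proof leaves implicit.
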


\begin{proof}
For $\theta, \theta' \in \Theta$,
\[
\begin{split}
\big\| \nabla \mathcal{L}_{\mathrm{grade}}(\theta) - \nabla \mathcal{L}_{\mathrm{grade}}(\theta') \big\|_2  
&	\leq \sum_{t=1}^{n} \lambda^{q_t} \big\| \nabla \ell_t(\theta) - \nabla \ell_t(\theta') \big\|_2  \\
&	\leq \Big( \sum_{t=1}^{n} \lambda^{q_t} \varsigma_t \Big) \|\theta - \theta'\|_2
\leq \lambda^{q_{\max}} \varsigma_{\mathrm{unif}} \|\theta - \theta'\|_2 ,
\end{split}
\]
using the triangle inequality, the per-token hypothesis, and $\lambda^{q_t} \leq \lambda^{q_{\max}}$. The magnitude and diversity penalties are quadratics in $\mathbf{q}$ whose Hessians have operator norm at most $2\nu_{\mathrm{mag}}$ and $2\nu_{\mathrm{div}} H$ respectively, by the eigenvalue computation of \cref{subsec:grade-reg}, and $\Omega_{\mathrm{orth}}$ is a polynomial in the entries of $P$ with gradient Lipschitz on the bounded region $\Theta$; none of these constants involves $\lambda$ or $\mathbf{q}$.
\end{proof}

\begin{thm}
\label{thm:convergence}
Suppose $\mathcal{L}_{\mathrm{total}}$ is $\varsigma$-smooth on the region visited, stochastic gradients are unbiased with variance at most $\sigma^2$ at batch size $b$, and the step size satisfies $\eta \leq 1/\varsigma$. Then after $T$ steps of stochastic gradient descent,
\[
\min_{k \leq T} \mathbb{E}\big\| \nabla_\theta \mathcal{L}_{\mathrm{total}}(\theta_k) \big\|_2^2
\;\leq\; \frac{2\big(\mathcal{L}_{\mathrm{total}}(\theta_0) - \inf \mathcal{L}_{\mathrm{total}}\big)}{\eta T}
+ \frac{\varsigma\, \eta\, \sigma^2}{b},
\]
and by \cref{lem:graded-smoothness} the smoothness constant obeys $\varsigma \leq \lambda^{q_{\max}}\, \varsigma_{\mathrm{unif}} + c_{\mathrm{reg}}$, with $\varsigma_{\mathrm{unif}}$ the corresponding constant for the ungraded objective.
\end{thm}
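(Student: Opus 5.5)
The bound is the standard non-convex SGD descent-lemma argument, followed by substituting the smoothness constant of \cref{lem:graded-smoothness}. Write $\mathcal{L} = \mathcal{L}_{\mathrm{total}}$ and let the iteration be $\theta_{k+1} = \theta_k - \eta \hat g_k$, where $\hat g_k$ is a minibatch gradient. We assume $\mathbb{E}[\hat g_k \mid \theta_k] = \nabla\mathcal{L}(\theta_k)$ and $\mathbb{E}\|\hat g_k - \nabla\mathcal{L}(\theta_k)\|_2^2 \le \sigma^2/b$; the latter is how the variance hypothesis at batch size $b$ is to be read. Since $\mathcal{L}$ is $\varsigma$-smooth on the region visited, the descent lemma gives
\[
\mathcal{L}(\theta_{k+1}) \le \mathcal{L}(\theta_k) - \eta \langle \nabla\mathcal{L}(\theta_k), \hat g_k\rangle + \tfrac{\varsigma\eta^2}{2}\|\hat g_k\|_2^2 .
\]

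First, take the conditional expectation given $\theta_k$. Use $\mathbb{E}\|\hat g_k\|_2^2 = \|\nabla\mathcal{L}(\theta_k)\|_2^2 + \mathbb{E}\|\hat g_k - \nabla\mathcal{L}(\theta_k)\|_2^2 \le \|\nabla\mathcal{L}(\theta_k)\|_2^2 + \sigma^2/b$. This yields
\[
\mathbb{E}\mathcal{L}(\theta_{k+1}) \le \mathbb{E}\mathcal{L}(\theta_k) - \eta\bigl(1 - \tfrac{\varsigma\eta}{2}\bigr)\mathbb{E}\|\nabla\mathcal{L}(\theta_k)\|_2^2 + \tfrac{\varsigma\eta^2\sigma^2}{2b}.
\]
Second, invoke $\eta \le 1/\varsigma$, so that $1 - \varsigma\eta/2 \ge 1/2$. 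Third, sum the inequality over $k = 0,\dots,T-1$; the left side telescopes and is bounded using $\mathcal{L}(\theta_T) \ge \inf\mathcal{L}$. Dividing by $\eta T/2$ gives
\[
\frac1T\sum_k \mathbb{E}\|\nabla\mathcal{L}(\theta_k)\|_2^2 \le \frac{2(\mathcal{L}(\theta_0) - \inf\mathcal{L})}{\eta T} + \frac{\varsigma\eta\sigma^2}{b}.
\]
The minimum over $k \le T$ is at most this average, which is the displayed bound. The indexing $k \le T$ versus $k < T$ is immaterial, since enlarging the index set only decreases the minimum.

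Finally, the smoothness statement follows by combining pieces. The total objective is the expected graded loss, scaled by $w(y) \in [0,1]$, plus the regularisers. By \cref{lem:graded-smoothness}, the first term has gradient Lipschitz constant at most $\lambda^{q_{\max}}\varsigma_{\mathrm{unif}}$; this is preserved under the expectation over $y$ and under the weights $w(y) \le 1$, by the triangle inequality applied inside the expectation. The regularisers add $c_{\mathrm{reg}}$. Since smoothness constants add over summands, $\varsigma \le \lambda^{q_{\max}}\varsigma_{\mathrm{unif}} + c_{\mathrm{reg}}$.

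The only step needing care is the hypothesis that the iterates remain in the region $\Theta$ on which \cref{lem:graded-smoothness} and the descent lemma apply; this includes the bounded diameter on which $c_{\mathrm{reg}}$ depends. The plan is to take ``region visited'' as an assumption, exactly as the statement does, and to remark that the gradient clipping and the coercive grade regularisation under \cref{eq:reg-condition} are what make it plausible. With the AdamW update the argument would not apply verbatim, so we state the theorem for plain SGD as written.
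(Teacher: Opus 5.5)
Your proposal is correct and matches the paper's proof step for step. Both arguments use the descent lemma, the conditional bound $\mathbb{E}\|\hat g_k\|_2^2 \le \|\nabla\mathcal{L}_{\mathrm{total}}(\theta_k)\|_2^2 + \sigma^2/b$, the inequality $1-\varsigma\eta/2 \ge 1/2$, telescoping and minimum-below-average, with the smoothness constant taken from \cref{lem:graded-smoothness}; your remarks on the weights $w(y)$ and on the region visited only make explicit what the paper leaves implicit.
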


\begin{proof}
By $\varsigma$-smoothness, for consecutive iterates $\theta_{k+1} = \theta_k - \eta\, \hat{g}_k$ with $\hat{g}_k$ the stochastic gradient,
\[
\mathcal{L}_{\mathrm{total}}(\theta_{k+1})
\leq \mathcal{L}_{\mathrm{total}}(\theta_k) - \eta \langle \nabla \mathcal{L}_{\mathrm{total}}(\theta_k), \hat{g}_k \rangle + \frac{\varsigma \eta^2}{2} \|\hat{g}_k\|_2^2 .
\]
Taking expectations conditional on $\theta_k$, using unbiasedness $\mathbb{E}[\hat{g}_k] = \nabla \mathcal{L}_{\mathrm{total}}(\theta_k)$ and the variance bound $\mathbb{E}\|\hat{g}_k\|_2^2 \leq \|\nabla \mathcal{L}_{\mathrm{total}}(\theta_k)\|_2^2 + \sigma^2 / b$,
\[
\mathbb{E}\big[\mathcal{L}_{\mathrm{total}}(\theta_{k+1})\big]
\leq \mathbb{E}\big[\mathcal{L}_{\mathrm{total}}(\theta_k)\big]
- \eta \Big( 1 - \frac{\varsigma \eta}{2} \Big) \mathbb{E}\|\nabla \mathcal{L}_{\mathrm{total}}(\theta_k)\|_2^2
+ \frac{\varsigma \eta^2 \sigma^2}{2b} .
\]
Since $\eta \leq 1/\varsigma$ gives $1 - \varsigma\eta/2 \geq 1/2$, rearranging and summing over $k = 0, \dots, T-1$ telescopes the loss terms, and dividing by $\eta T / 2$ bounds the minimum by the average:
\[
\min_{k \leq T} \mathbb{E}\|\nabla \mathcal{L}_{\mathrm{total}}(\theta_k)\|_2^2
\leq \frac{1}{T}\sum_{k=0}^{T-1} \mathbb{E}\|\nabla \mathcal{L}_{\mathrm{total}}(\theta_k)\|_2^2
\leq \frac{2\big(\mathcal{L}_{\mathrm{total}}(\theta_0) - \inf \mathcal{L}_{\mathrm{total}}\big)}{\eta T} + \frac{\varsigma \eta \sigma^2}{b} .
\]
The smoothness comparison is \cref{lem:graded-smoothness}.
\end{proof}

\begin{rem}
\label{rem:optimisation-cost}
The factor $\lambda^{q_{\max}} \geq 1$ enters \cref{thm:convergence} as a price: grading degrades the smoothness constant and therefore the stationarity bound, and no reading of this theorem yields a convergence speedup. The price and the purchase are controlled by the same constant. The grade clipping of \cref{subsubsec:configs} enforces $\lambda^{q_{\max}} \leq C$ with $C = 2$ at the planned configurations, so \cref{thm:convergence} degrades by at most a factor of $C$, uniformly in $L$ and in the profiles; the token purchase at the same clip is at most $C^2$ by \cref{prop:clipped-selection}, and the exponential regime of \cref{cor:stratified} is reached as the clip opens (\cref{rem:one-dial}). A price linear in the clip against a purchase quadratic in it is the trade the framework makes, favourable at every setting, and it is the reason the two are reported separately in \cref{sec-6} rather than folded into a single figure.

Three restrictions on scope. The bound is over stationarity and not suboptimality, since $\mathcal{L}_{\mathrm{total}}$ is nonconvex in $\theta$ and no global guarantee is available. The theorem is stated for stochastic gradient descent, while \cref{subsec:optimisation} specifies AdamW; Adam-family methods admit no unconditional convergence guarantee even on convex problems, as shown by Reddi, Kale, and Kumar, so the statement is a discipline on the objective rather than on the optimiser actually deployed. And \cref{lem:graded-smoothness} bounds the smoothness through per-token constants, so $\varsigma_{\mathrm{unif}}$ is the sum of per-token smoothness constants rather than the (possibly smaller) constant of the summed objective; the comparison is honest but not tight. Consequently, any prediction of reduced training tokens to a fixed generalisation target must be routed through \cref{prop:ml-expressivity} and \cref{prop:admissible-grades}, and must not be attributed to this theorem.
\end{rem}

\subsection{Pretraining and Fine-Tuning}
\label{subsec:pretraining}

Pretraining is planned on The Pile \cite{gao2020pile}, approximately 800\,GB of diverse text, at sequence lengths $n \in [1024, 8192]$. Per-type grades $q_{\mathrm{out}}$ are estimated once, offline, by the clipped selection problem of \cref{prop:clipped-selection} applied to part-of-speech statistics over the corpus, certified against \cref{rem:certified-init}, and refined end-to-end thereafter under the constraint of \cref{rem:tilt-cancellation}.

Fine-tuning uses direct preference optimisation \cite{rafailov2023direct}. Writing $x$ for the prompt, $y_w$ and $y_l$ for the preferred and rejected responses, $\pi_\theta$ for the policy and $\pi_{\mathrm{ref}}$ for the reference,
\begin{equation}
\label{eq:graded-dpo}
\mathcal{L}_{\mathrm{DPO}}
= -\, \mathbb{E}_{(x, y_w, y_l) \sim \mathcal{D}_{\mathrm{pref}}}
\left[
\log \sigma\!\left(
\beta_{\mathrm{DPO}} \log \frac{\pi_\theta(y_w \mid x)}{\pi_{\mathrm{ref}}(y_w \mid x)}
- \beta_{\mathrm{DPO}} \log \frac{\pi_\theta(y_l \mid x)}{\pi_{\mathrm{ref}}(y_l \mid x)}
\right)
\right],
\end{equation}
with $\beta_{\mathrm{DPO}} > 0$ the inverse-temperature of the implicit reward and $\mathcal{D}_{\mathrm{pref}}$ the preference dataset. A graded variant weights the token-level contributions to $\log \pi_\theta(y \mid x)$ by their grades, concentrating preference signal on structurally salient positions rather than distributing it uniformly across a response. The grades used must be the $q_{\mathrm{out}}$ of \cref{rem:tilt-cancellation} and no others: a weighting that depends on the realised token and its context reintroduces the tilt of \cref{prop:properness}(3) into the implicit reward, where no cancellation is available to absorb it. Subject to that constraint the variant is a construction whose behaviour the program does not yet test.

\Cref{sec-7} assembles the selection results of this paper into the general procedure, \cref{sec-8} instantiates that procedure for language, and \cref{sec-6} states the experimental program that tests the resulting predictions.

\section{Grade Selection}
\label{sec-7}

The results of \cref{sec-3,sec-4,sec-5} assign to every grade vector a price and a value: a conditioning penalty bounded by \cref{lem:graded-smoothness}, and a sample-complexity gain computed by \cref{prop:ml-expressivity} and located by \cref{prop:admissible-grades}. This section assembles those results into the framework's main application. Grades are neither a hyperparameter to sweep nor a taxonomy to posit; they are the solution of an optimisation problem whose inputs are measurable before training. The procedure below computes them, certifies them, and specifies the experiment that tests the computation.

The distinction that organises the section is between domains that carry canonical grades and domains that do not. Weighted projective coordinates and graded rings arrive with their grades attached: the weights of the Igusa invariants are $(2,4,6,10)$ by theorem, not by choice, and in such settings the machinery of this paper adds a consistency check rather than a decision. Language carries no canonical grading; which coordinates of an embedding space deserve amplification is exactly the question no theorem of linguistics answers, and it is the question \cref{prop:ml-expressivity}(iii) answers in closed form. The contribution of the framework is educated grade selection in domains where the grades are not given.

\subsection{The Selection Procedure}
\label{subsec:selection-procedure}

The procedure has six steps, and each names the result that supports it.

\begin{enumerate}
  \item \emph{Fix the basis.} The profiles of \cref{def:profiles} are defined relative to the graded basis of \cref{eq:mlge-transform}. In language this is the embedding basis carrying the level projections $\{P_l\}$; in a domain with canonical structure it is the given one.
  \item \emph{Estimate the data profile.} $\hat\tau_j$ is the per-coordinate variance over the corpus, normalised to the simplex, as in \cref{subsec:cs-profiles}. This is a corpus statistic and requires no model.
  \item \emph{Estimate the target profile.} $\hat\alpha_j$ is the normalised squared weight of a linear probe trained on the annotated target, reported together with the probe's inductive bias, as in \cref{subsec:cs-profiles}.
  \item \emph{Test divergence.} Compute $\mathrm{BC}(\hat\alpha, \hat\tau)$. By \cref{prop:ml-expressivity}(iv) a value near $1$ means the profiles agree and grading buys nothing in this domain at this target; the procedure terminates here, at the cost of two estimates. This step is the domain filter, and it is where \cref{assum:profile-divergence} is tested rather than assumed.
  \item \emph{Certify candidates.} A proposed grade vector $q$, whether taxonomic, statistical, or computed, is admissible to first order exactly when $\langle q, \hat\alpha - \hat\tau \rangle > 0$, by \cref{prop:admissible-grades}(iv). The inner product is one line of arithmetic and is reported before training.
  \item \emph{Compute the optimum.} The grades minimising the bound are the solution of the convex program of \cref{cor:grade-convexity} below, projected to the clipped region by \cref{prop:clipped-selection}. The output is $q^\star_C$, the educated grading at clip $C$; by \cref{lem:plugin-stability} the gain it attains degrades only to second order in the estimation error of steps (2) and (3).
\end{enumerate}

Steps (2)--(4) cost no pretraining, step (5) costs an inner product, and step (6) is a geometric program in $d$ variables. The entire selection is performed, published, and falsifiable before the first training step, which is the property the staging of \cref{sec-6} is built around.

\subsection{The Offline Selection Problem}
\label{subsec:offline-selection}

The grades entering \cref{eq:total-loss} are refined end-to-end, but they are not found there. They are selected offline from estimated profiles, and that problem is well posed in a way the end-to-end problem is not.

\begin{cor}
\label{cor:grade-convexity}
Let $\hat\alpha, \hat\tau \in \Delta^{d-1}$ be strictly positive estimates of the profiles of \cref{def:profiles}, and consider the offline selection problem
\[
\min_{q \in \mathbb{R}_{\geq 0}^{d}} \ \Big\{ \log \Lambda(q) + \nu_{\mathrm{mag}} \|q\|_2^2 \Big\},
\qquad
\Lambda(q) = \Big( \sum_{j=1}^{d} \hat\alpha_j\, \lambda^{-2 q_j} \Big) \Big( \sum_{j=1}^{d} \hat\tau_j\, \lambda^{2 q_j} \Big) .
\]
Then the objective is strictly convex on the convex feasible set $\mathbb{R}_{\geq 0}^d$, so the minimizer is unique and every stationary point is the global minimum. In the variables $t_j = \lambda^{2 q_j}$ the function $\Lambda$ is a posynomial and the problem is a geometric program. As $\nu_{\mathrm{mag}} \to 0$ the value tends to $\log \mathrm{BC}(\hat\alpha, \hat\tau)^2$; for $\nu_{\mathrm{mag}} > 0$ the penalty breaks the gauge invariance of \cref{prop:admissible-grades}(i) and selects the minimum-norm representative of the optimal orbit. The version of this problem on the clipped region the architecture admits is solved in \cref{prop:clipped-selection}, with the same convex structure and a saturated form of the same optimum.
\end{cor}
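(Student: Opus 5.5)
The plan is to obtain every claim from the convexity already established in \cref{prop:admissible-grades}(ii), together with the fact that the quadratic penalty is strictly convex and coercive. Write $\Phi(q) = \log \Lambda(q)$ and $J_\nu(q) = \Phi(q) + \nu_{\mathrm{mag}}\|q\|_2^2$.

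The proof of \cref{prop:admissible-grades}(ii) uses only that $\hat\alpha, \hat\tau$ are strictly positive points of the simplex. That proof shows the Hessian of $\Phi$ is $\mathrm{C}(\alpha^{(u)}) + \mathrm{C}(\tau^{(u)}) \succeq 0$ in the variables $u$, so $\Phi$ is convex in $q$. This convexity is not strict, since $\Phi$ is flat along $\mathbf{1}$ by gauge invariance. Adding $\nu_{\mathrm{mag}}\|q\|_2^2$ with $\nu_{\mathrm{mag}} > 0$ makes $J_\nu$ strictly, indeed strongly, convex on $\mathbb{R}^d$, and hence on the closed convex cone $\mathbb{R}^d_{\geq 0}$.

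Existence of a minimizer follows from coercivity. By \cref{prop:ml-expressivity}(iii) we have $\Phi \geq \log \mathrm{BC}(\hat\alpha,\hat\tau)^2 > -\infty$, so $J_\nu(q) \to \infty$ as $\|q\| \to \infty$. Strict convexity then gives uniqueness. Every stationary point, read as a KKT point for the constraint $q \geq 0$, is the global minimum by the first-order characterisation of convex minimisation.

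For the geometric-program claim, substitute $t_j = \lambda^{2q_j} > 0$ and expand
\[
\Lambda = \sum_{j,k} \hat\alpha_j \hat\tau_k\, t_j^{-1} t_k ,
\]
a sum of monomials with positive coefficients, hence a posynomial. The constraint $q \geq 0$ becomes $t_j^{-1} \leq 1$, a monomial constraint. At $\nu_{\mathrm{mag}} = 0$ this is a geometric program in standard form. The penalty $q_j^2 = (\log t_j)^2 / (2\log\lambda)^2$ is not a posynomial. So for $\nu_{\mathrm{mag}} > 0$ I would state the claim in log-variables: the problem is the log-convex form of a geometric program with an added convex quadratic term, which is exactly $J_\nu$ in $q$. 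I would say so explicitly rather than overclaim.

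For the limit of the value $V(\nu) = \min_{q \geq 0} J_\nu(q)$, the lower bound $V(\nu) \geq \log \mathrm{BC}^2$ is immediate. For the upper bound, take the optimal orbit $q^\star + \mathbb{R}\mathbf{1}$ from \cref{prop:admissible-grades}(vi) and shift it to $\bar q = q^\star - (\min_j q^\star_j)\mathbf{1} \geq 0$. Then $V(\nu) \leq \log \mathrm{BC}^2 + \nu_{\mathrm{mag}}\|\bar q\|_2^2 \to \log \mathrm{BC}^2$.

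The selection claim is the step I expect to need the most care, because for fixed $\nu_{\mathrm{mag}} > 0$ the minimizer $q_\nu$ need not lie exactly on the optimal orbit. I would therefore read "selects" as the Tikhonov limit $\nu_{\mathrm{mag}} \to 0$ and prove it as follows.

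1. The solution set $\mathcal{S} = (q^\star + \mathbb{R}\mathbf{1}) \cap \mathbb{R}^d_{\geq 0}$ is a closed convex half-line. Its unique minimum-norm element is $\bar q$: along $\bar q + c\mathbf{1}$ with $c \geq 0$ the norm is nondecreasing because $\bar q \geq 0$.
2. Comparing $J_\nu(q_\nu) \leq J_\nu(\bar q)$ with $\Phi(q_\nu) \geq \Phi(\bar q)$ gives $\|q_\nu\| \leq \|\bar q\|$, so the family $(q_\nu)$ is bounded.
3. Any limit point minimises $\Phi$ over $\mathbb{R}^d_{\geq 0}$, by continuity and the value limit above, so it lies in $\mathcal{S}$. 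It also has norm at most $\|\bar q\|$, so it equals $\bar q$ by uniqueness of the minimum-norm element.

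Hence $q_\nu \to \bar q$: the penalty picks the representative of the optimal orbit whose smallest grade is zero. The reference to the clipped problem is deferred to \cref{prop:clipped-selection}, and needs only the observation that intersecting with a box preserves convexity of the feasible set.
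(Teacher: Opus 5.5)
Your proof is correct. For convexity, uniqueness, the posynomial claim and the value limit it follows the paper's own argument. That argument takes convexity of $\log\Lambda$ from \cref{prop:admissible-grades}(ii) and adds the strictly convex penalty. It obtains the posynomial as a product of posynomials. It squeezes the value between $\log\mathrm{BC}(\hat\alpha,\hat\tau)^2$ and $J_\nu(\bar q)$, where $\bar q$ is the orthant representative with $\min_j \bar q_j=0$.

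Where you depart from the paper, you are more careful than it is, in three places.

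First, the paper infers uniqueness from strict convexity but never shows that a minimizer exists on the unbounded orthant. Your coercivity step supplies this.

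Second, the paper's proof only shows that $\Lambda$ is a posynomial. The statement's claim that the penalised problem is a geometric program is not literally true, because $(\log t_j)^2$ is not a posynomial. Your version is the accurate one: a geometric program at $\nu_{\mathrm{mag}}=0$, and its log-convex form plus a convex quadratic for $\nu_{\mathrm{mag}}>0$.

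Third, and most substantively, the paper settles the selection claim in one clause: the penalty is finite along the orbit and so selects its minimum-norm point. Read at fixed $\nu_{\mathrm{mag}}>0$, this is false whenever $\hat\alpha\neq\hat\tau$. A penalised minimizer lying on the optimal orbit would have $\nabla\log\Lambda=0$. The KKT conditions would then force $2\nu_{\mathrm{mag}}q_j^2=0$ for every $j$, so $q=\mathbf{0}$, and $\mathbf{0}$ is not on that orbit.

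Your Tikhonov reading is the correct version of the claim, and you give an actual proof of it. Comparing $J_\nu(q_\nu)\le J_\nu(\bar q)$ with $\log\Lambda(q_\nu)\ge\log\Lambda(\bar q)$ gives $\|q_\nu\|_2\le\|\bar q\|_2$. Every limit point is then a minimizer of norm at most $\|\bar q\|_2$ on the half-line $\bar q+\mathbb{R}_{\ge 0}\mathbf{1}$, hence equals $\bar q$. The paper's phrasing buys brevity; yours buys a statement that is actually true.
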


\begin{proof}
Convexity of $q \mapsto \log \Lambda(q)$ is \cref{prop:admissible-grades}(ii), and $\nu_{\mathrm{mag}}\|q\|_2^2$ is strictly convex, so the sum is strictly convex; $\mathbb{R}_{\geq 0}^d$ is convex, whence uniqueness and the coincidence of stationary and global minima. For the posynomial claim, $\sum_j \hat\alpha_j t_j^{-1}$ and $\sum_j \hat\tau_j t_j$ are posynomials in $t \in \mathbb{R}_{>0}^d$ and a product of posynomials is a posynomial. The limiting value is \cref{prop:admissible-grades}(vi), and the representative in $\mathbb{R}_{\geq 0}^d$ with $\min_j q_j = 0$ supplied by \cref{prop:admissible-grades}(i) has finite norm, so the penalty is finite along the orbit and selects its minimum-norm point.
\end{proof}

\begin{rem}
\label{rem:certified-init}
\Cref{cor:grade-convexity} has a consequence available before any pretraining run begins. By \cref{prop:admissible-grades}(iv), a proposed initialisation $q^{\mathrm{init}}$, say the part-of-speech grades of \cref{subsubsec:grade-init}, lies in the admissible cone $\mathcal{Q}_+$, to first order, exactly when
\[
\big\langle q^{\mathrm{init}}, \hat\alpha - \hat\tau \big\rangle > 0 ,
\]
a single inner product between the proposed grades and the estimated profiles. Neither quantity requires a trained model: $\hat\tau$ is a corpus statistic and $\hat\alpha$ a probe weight, both computed in \cref{subsec:cs-profiles}. A GLLM may therefore be certified in advance to begin inside the region where its sample-complexity bound is the smaller one, at the cost of one dot product, and \cref{cor:grade-convexity} then locates the optimum within that region by convex programming rather than by search. The magnitude of the gain depends on the accuracy of $\hat\alpha$ and $\hat\tau$, controlled by \cref{lem:plugin-stability}; the sign does not, and only the sign is needed for certification. At the clipped configurations the relevant optimum is the $q^\star_C$ of \cref{prop:clipped-selection}, and the attainable gain is bounded by $C^2$ accordingly.

The end-to-end refinement is a different matter. \Cref{eq:total-loss} is nonconvex in $\theta$, and no counterpart of \cref{cor:grade-convexity} is available for it. The role of the offline problem is to supply a certified starting point, not to replace training.
\end{rem}

\subsection{Selection Under Clipping}
\label{subsec:clipped-selection}

The architecture is trained under a grade clip $\lambda^{q_{\max}} \leq C$, with $C = 2$ at the configurations of \cref{subsubsec:configs}, and the selection problem must be solved on the region the clip admits. The clipped problem retains the structure of the unclipped one, and the clip prices the gain exactly.

\begin{prop}
\label{prop:clipped-selection}
Fix $C = \lambda^{q_{\max}} \geq 1$ and let $\mathcal{Q}_C = \{ q \in \mathbb{R}^d : 0 \leq q_j \leq q_{\max} \}$. Write $\Lambda^\star_C = \min_{q \in \mathcal{Q}_C} \Lambda(q)$ and $r_j = \tfrac{1}{4 \log \lambda} \log(\alpha_j / \tau_j)$.
\begin{enumerate}
  \item[(i)] The minimum is attained, and the problem is convex: $\log \Lambda$ is convex on the compact convex set $\mathcal{Q}_C$.
  \item[(ii)] Every minimizer has the saturated form
\[
q_j^\star \;=\; \min\big\{ q_{\max},\, \max\{ 0,\; r_j + c \} \big\}
\]
for some constant $c \in \mathbb{R}$: the fourth-root law of \cref{prop:ml-expressivity}(iii) on the interior coordinates, saturation at the box on the rest.
  \item[(iii)] The gain is bounded by the clip:
\[
\Lambda^\star_C \;\geq\; \max\big\{ \mathrm{BC}(\alpha, \tau)^2,\; C^{-2} \big\},
\]
and the bound $C^{-2}$ is approached as the profiles concentrate on separate coordinates. Reaching the regime $\Lambda^\star = \Theta(\nu^L)$ of \cref{cor:stratified} therefore requires $C \geq \nu^{-L/2}$: the exponential-in-$L$ gain is the limit of a growing clip, at the conditioning price \cref{lem:graded-smoothness} attaches to it.
\end{enumerate}
\end{prop}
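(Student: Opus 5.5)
The plan is to pass to the variables $u_j = 2 q_j \log \lambda$ of the proof of \cref{prop:admissible-grades}, and to work throughout with $\Phi(u) = \log A(u) + \log T(u) = \log \Lambda$. Under this change of variable the box $\mathcal{Q}_C$ becomes $\{0 \le u_j \le U\}$ with $U = 2 q_{\max}\log\lambda = 2\log C$.

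For (i), the box is compact and convex, and $\Phi$ is continuous, so the minimum is attained. Convexity of $\Phi$ is \cref{prop:admissible-grades}(ii), and the map $q \mapsto u$ is linear, so nothing further is needed.

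For (ii), I will write the KKT conditions for minimizing a convex differentiable function over a box. These conditions are necessary, and by convexity also sufficient. By the gradient formula $\nabla_u \Phi = \tau^{(u)} - \alpha^{(u)}$ of \cref{prop:admissible-grades}, at a minimizer $u^\star$ each coordinate must satisfy one of three conditions:
\begin{enumerate}
  \item[(a)] $\tau_j e^{u_j}/T = \alpha_j e^{-u_j}/A$ when $0 < u_j < U$;
  \item[(b)] $\tau_j e^{u_j}/T \ge \alpha_j e^{-u_j}/A$ when $u_j = 0$;
  \item[(c)] $\tau_j e^{u_j}/T \le \alpha_j e^{-u_j}/A$ when $u_j = U$.
\end{enumerate}
Here $A$ and $T$ are evaluated at $u^\star$ and are frozen as constants. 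Condition (a) rearranges to $e^{2u_j} = (\alpha_j/\tau_j)(T/A)$, that is $q_j = r_j + c$ with
\[
c = \frac{\log(T/A)}{4\log\lambda}.
\]
Conditions (b) and (c) say respectively that $r_j + c \le 0$ and $r_j + c \ge q_{\max}$. This works because the comparison in each condition is monotone in $u_j$: the left side increases and the right side decreases in $u_j$, so the sign is read off from $e^{2u_j}$ against $(\alpha_j/\tau_j)(T/A)$. Combining the three cases gives exactly the clamp $q_j^\star = \min\{q_{\max}, \max\{0, r_j + c\}\}$, with a single constant $c$ shared by all coordinates.

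The one delicate point, and the step I expect to need the most care, is that $c$ is a fixed point: it depends on $A$ and $T$ at the minimizer itself. The statement only asserts existence of some $c \in \mathbb{R}$ for every minimizer, so I will avoid any claim of uniqueness and simply read $c$ off from $A(u^\star)$ and $T(u^\star)$.

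For (iii), the bound $\Lambda^\star_C \ge \mathrm{BC}(\alpha,\tau)^2$ holds because $\mathcal{Q}_C \subseteq \mathbb{R}^d$, combined with \cref{prop:admissible-grades}(vi). For the bound $C^{-2}$, on the box we have $\lambda^{-2q_j} \ge C^{-2}$ and $\lambda^{2q_j} \ge 1$. Since $\alpha$ and $\tau$ are probability vectors, this gives $A \ge C^{-2}$ and $T \ge 1$, hence $\Lambda \ge C^{-2}$.

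For approach to $C^{-2}$, I will put $q_{j_1} = q_{\max}$ and all other grades equal to $0$. Then
\[
\Lambda = \bigl(\alpha_{j_1}C^{-2} + 1 - \alpha_{j_1}\bigr)\bigl(\tau_{j_1}C^{2} + 1 - \tau_{j_1}\bigr),
\]
which tends to $C^{-2}$ as $\alpha_{j_1} \to 1$ and $\tau_{j_1} \to 0$, that is, as the profiles concentrate on separate coordinates.

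Finally, $\Lambda^\star_C \ge C^{-2}$ combined with \cref{cor:stratified} shows that attaining $\Lambda^\star_C = \Theta(\nu^L)$ forces $C^{-2} \lesssim \nu^L$, that is $C \gtrsim \nu^{-L/2}$. The associated conditioning cost is the factor $\lambda^{q_{\max}} = C$ of \cref{lem:graded-smoothness}.
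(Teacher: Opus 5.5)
Your proposal is correct and matches the paper's proof step for step. It uses the same change of variables $u_j = 2q_j\log\lambda$, the same box KKT analysis giving $c = \log\bigl(T(u^\star)/A(u^\star)\bigr)/(4\log\lambda)$ separately for each minimizer, and the same two lower bounds in (iii); your tightness example in general $d$ and your up-to-constants reading of $C \gtrsim \nu^{-L/2}$ are harmless refinements of the paper's $d=2$ example and its exact inequality $\Lambda^\star_C \le \nu^L$.
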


\begin{proof}
(i) $\mathcal{Q}_C$ is a product of intervals, hence compact and convex, and $\log \Lambda$ is convex by \cref{prop:admissible-grades}(ii) and continuous.

(ii) In the variables $u_j = 2 q_j \log \lambda$ of \cref{prop:admissible-grades}, the Karush--Kuhn--Tucker conditions for the box read $\partial_j \Phi(u) = \tau^{(u)}_j - \alpha^{(u)}_j = \mu_j^{-} - \mu_j^{+}$ with $\mu_j^{\pm} \geq 0$ supported on the active constraints. On an interior coordinate both multipliers vanish and $\alpha_j e^{-u_j} / A(u) = \tau_j e^{u_j} / T(u)$, giving $u_j = \tfrac12 \log(\alpha_j/\tau_j) + \tfrac12 \log\!\big(T(u)/A(u)\big)$, which is $q_j = r_j + c$ with $c = \tfrac{1}{4\log\lambda} \log\!\big(T(u^\star)/A(u^\star)\big)$ common to all interior coordinates. On a coordinate active at the lower bound the condition $\partial_j \Phi \geq 0$ holds exactly when the interior law would give $r_j + c \leq 0$, and symmetrically at the upper bound, which is the saturated form. Since $\Lambda$ is invariant along $\mathbf{1}$ by \cref{prop:admissible-grades}(i), the minimizer set is a segment in that direction intersected with $\mathcal{Q}_C$, and every point of it has the stated form under the corresponding shift of $c$.

(iii) The bound $\mathrm{BC}(\alpha,\tau)^2$ is the unconstrained minimum of \cref{prop:ml-expressivity}(iii). For the second bound, every feasible $g_j = \lambda^{q_j}$ lies in $[1, C]$, so
\[
\Lambda(q) = \Big( \sum_{j=1}^{d} \alpha_j g_j^{-2} \Big) \Big( \sum_{j=1}^{d} \tau_j g_j^{2} \Big) \;\geq\; C^{-2} \Big( \sum_{j} \alpha_j \Big) \cdot 1 \cdot \Big( \sum_{j} \tau_j \Big) = C^{-2} .
\]
For tightness take $d = 2$, $\alpha \to (1, 0)$, $\tau \to (0, 1)$, and $q = (q_{\max}, 0)$: the first factor tends to $C^{-2}$ and the second to $1$. For the final claim, $\Lambda^\star_C \leq \nu^L$ forces $C^{-2} \leq \nu^L$.
\end{proof}

\begin{rem}
\label{rem:one-dial}
\Cref{prop:clipped-selection}(iii) and \cref{lem:graded-smoothness} are controlled by the same constant, and together they give the framework's cost--benefit statement in final form: at clip $C$ the conditioning penalty is at most $C$ and the sample-complexity gain is at most $C^2$. The price is linear in the clip and the purchase quadratic, so the trade is favourable at every setting, and the planned configuration $C = 2$ buys a gain of at most $4$ at a penalty of at most $2$. The exponential regime of \cref{cor:stratified} is not available at fixed clip; it is reached as the clip grows, with both sides of the trade priced at every point along the way. The experiments of \cref{sec-6} are accordingly designed to detect a bounded effect at $C = 2$ and to sweep $C$ as an explicit arm.
\end{rem}

\subsection{Level Reduction}
\label{subsec:level-reduction}

Under the stratification of \cref{cor:stratified} the selection problem collapses from $d$ variables to $L$, and this reduction is what makes step (6) implementable at scale.

\begin{prop}
\label{prop:level-reduction}
Suppose the profiles are uniform within levels in the sense of \cref{cor:stratified}: $\alpha_j = a_l / d_l$ and $\tau_j = s_l / d_l$ for $j \in V_l$, with $a, s \in \Delta^{L-1}$ strictly positive. Then every minimizer of $\Lambda$ over $\mathbb{R}^d$, and every minimizer over the clipped box $\mathcal{Q}_C$ of \cref{prop:clipped-selection}, is constant on each level. Consequently the selection problem is the box-constrained convex program in $L$ variables
\[
\min_{v \in [0, q_{\max}]^{L}} \ \Lambda_L(v),
\qquad
\Lambda_L(v) = \left( \sum_{l=0}^{L-1} a_l \lambda^{-2 v_l} \right) \left( \sum_{l=0}^{L-1} s_l \lambda^{2 v_l} \right),
\]
whose solution $v^\star$ determines $q^\star$ by $q^\star_j = v^\star_l$ for $j \in V_l$. At $L = 4$ the program has four variables, whatever $d$ may be.
\end{prop}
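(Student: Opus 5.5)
The plan is a symmetrisation argument combined with the strict convexity that the Hessian formula of \cref{prop:admissible-grades} gives in the directions transverse to $\mathbf{1}$. Let $\Pi$ be the finite group $\prod_l \mathrm{Sym}(V_l)$ of permutations of coordinates that preserve each level. Under the stratification hypothesis, $\alpha$ and $\tau$ are $\Pi$-invariant, so $\Lambda(\pi q) = \Lambda(q)$ for every $\pi \in \Pi$. The clipped box $\mathcal{Q}_C$ is also $\Pi$-invariant. If $q^\star$ is a minimizer, over $\mathbb{R}^d$ or over $\mathcal{Q}_C$, then so is every $\pi q^\star$. By convexity of $\log\Lambda$ (\cref{prop:admissible-grades}(ii)) and convexity of the feasible set, the level-average $\bar q = |\Pi|^{-1}\sum_\pi \pi q^\star$ is feasible and also a minimizer, and it is constant on levels. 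This already shows that the minimum value equals the minimum of $\Lambda_L$ over level-constant grades. A direct substitution shows that a level-constant $q$ with $q_j = v_l$ gives $\Lambda(q) = \Lambda_L(v)$, because the $d_l$ copies of $a_l/d_l$ sum to $a_l$, and likewise for $s_l$.

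The substantive claim is that \emph{every} minimizer is level-constant, not merely that some minimizer is. For this I will use the strict-convexity structure. In the variables $u$, the Hessian is $\mathrm{C}(\alpha^{(u)}) + \mathrm{C}(\tau^{(u)})$. Since $\tau^{(u)}$ is strictly positive, $v^\top \mathrm{C}(\tau^{(u)}) v = \mathrm{Var}_{\tau^{(u)}}(v) = 0$ forces $v \in \mathbb{R}\mathbf{1}$, exactly as in the proof of \cref{prop:admissible-grades}(v). So $\Phi$ is strictly convex along every segment whose direction is not parallel to $\mathbf{1}$. Now take two minimizers $q^\star$ and $q'$ of a convex problem over a convex set. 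The segment joining them consists of minimizers, so $\Phi$ is constant along it, and hence $q^\star - q' \in \mathbb{R}\mathbf{1}$. Apply this with $q' = \pi q^\star$: we get $\pi q^\star - q^\star = c_\pi \mathbf{1}$. Since $\pi$ permutes coordinates, the coordinate sums agree, so $d\,c_\pi = 0$ and $c_\pi = 0$. Therefore $q^\star$ is fixed by every level-preserving transposition, so it is constant on each level. The same argument works on $\mathcal{Q}_C$ because the box is convex and $\Pi$-invariant.

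Finally I will set up the reduction. Every minimizer lies in the level-constant subspace, and on that subspace $\Lambda = \Lambda_L$ while the box restricts to $[0,q_{\max}]^L$. Hence minimizers correspond exactly to minimizers $v^\star$ of the $L$-variable program, via $q^\star_j = v^\star_l$. This program is convex because $\Lambda_L$ is the gain of \cref{prop:admissible-grades} for the strictly positive profiles $a, s \in \Delta^{L-1}$. The main obstacle is the uniqueness-modulo-gauge step. Mere invariance only produces one symmetric minimizer, so the argument hinges on checking that the degeneracy direction of the Hessian is exactly $\mathbf{1}$ and then that permutations cannot shift a vector along $\mathbf{1}$.
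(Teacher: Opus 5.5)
Your proposal is correct and follows the paper's proof essentially step for step: you symmetrise over the level-preserving permutation group, then get uniqueness modulo $\mathbf{1}$ from the fact that the Hessian form $\mathrm{C}(\alpha^{(u)})+\mathrm{C}(\tau^{(u)})$ must vanish along a segment of minimizers. The one difference is cosmetic: you compare $q^\star$ with $\pi q^\star$ and remove the shift by counting coordinate sums, whereas the paper writes every minimizer as the level-average $\bar u$ plus $c\mathbf{1}$; either closes the argument.
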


\begin{proof}
Work in the variables $u_j = 2 q_j \log \lambda$ and write $\Phi(u) = \log A(u) + \log T(u)$ as in the proof of \cref{prop:admissible-grades}, with $A(u) = \sum_j \alpha_j e^{-u_j}$ and $T(u) = \sum_j \tau_j e^{u_j}$. Let $\Sigma$ be the group of coordinate permutations $\sigma$ with $\sigma(V_l) = V_l$ for every $l$. Under the stratification hypothesis $\alpha$ and $\tau$ are constant on each level, hence $\Sigma$-invariant, so $\Phi(\sigma \cdot u) = \Phi(u)$ for every $\sigma \in \Sigma$; and both feasible sets, $\mathbb{R}^d$ and the box, are $\Sigma$-invariant and convex.

Let $\mathcal{M}$ denote the set of minimizers, nonempty by \cref{prop:clipped-selection}(i) in the clipped case and by \cref{prop:ml-expressivity}(iii) in the unclipped one. Since $\Phi$ is convex and the feasible set is convex, $\mathcal{M}$ is convex. If $u^\star \in \mathcal{M}$ then $\sigma \cdot u^\star \in \mathcal{M}$ for every $\sigma \in \Sigma$, so the average $\bar u = |\Sigma|^{-1} \sum_{\sigma \in \Sigma} \sigma \cdot u^\star$ lies in $\mathcal{M}$, and $\bar u$ is constant on each level because it is a $\Sigma$-average.

It remains to show that every minimizer is of this form. Let $u^1, u^2 \in \mathcal{M}$. The segment $[u^1, u^2]$ lies in the feasible set by convexity and in $\mathcal{M}$ by convexity of $\Phi$, on which $\Phi$ is therefore constant; a convex function constant along a segment has vanishing second directional derivative along it, so $(u^2 - u^1)^\top \nabla^2 \Phi(u) (u^2 - u^1) = 0$ at every $u$ of the segment. By the proof of \cref{prop:admissible-grades}, $\nabla^2 \Phi(u) = \mathrm{C}(\alpha^{(u)}) + \mathrm{C}(\tau^{(u)})$ with $v^\top \mathrm{C}(p) v = \mathrm{Var}_p(v)$, and both tilted vectors $\alpha^{(u)}, \tau^{(u)}$ are strictly positive because $\alpha$ and $\tau$ are. Hence $\mathrm{Var}_{\alpha^{(u)}}(u^2 - u^1) = 0$, which forces $u^2 - u^1 \in \mathbb{R}\mathbf{1}$. Every minimizer is therefore $\bar u + c\mathbf{1}$ for some $c \in \mathbb{R}$, and is constant on each level.

For the reduced form, evaluate $\Lambda$ on a level-constant $q$: $\sum_{j \in V_l} (a_l / d_l) \lambda^{-2 v_l} = a_l \lambda^{-2 v_l}$ and likewise for $\tau$, giving $\Lambda_L$. Convexity of $v \mapsto \log \Lambda_L(v)$ is \cref{prop:admissible-grades}(ii) restricted to the level-constant subspace, and $[0, q_{\max}]^L$ is compact and convex.
\end{proof}

\subsection{The Selection Ladder}
\label{subsec:selection-ladder}

The experiment that tests educated selection is not a two-arm comparison but a ladder. Every arm is a GLLM of identical architecture, data, and optimisation budget, distinguished only by what its grades consulted; the standard transformer is the bottom rung, the arm whose grades consulted nothing, and by the gauge invariance of \cref{prop:admissible-grades}(i) its admissibility inner product is $0$ identically, with \cref{prop:admissible-grades}(iii) reading that zero as the boundary position of the uniform grades.

\begin{table}[h]
\centering
\small
\begin{tabular}{@{}llll@{}}
\toprule
Arm & Grades consult & $\langle q, \hat\alpha - \hat\tau \rangle$ & Licence \\
\midrule
Uniform & nothing & $0$ exactly & \cref{prop:admissible-grades}(i),(iii) \\
Random & noise & $\approx 0$, either sign & control \\
Taxonomic & the level ordering & measured & \cref{rem:cone-stratified} \\
Part-of-speech & corpus statistics & measured & \cref{subsubsec:grade-init} \\
Educated ($q^\star_C$) & both estimated profiles & maximal among arms & \cref{prop:clipped-selection} \\
Oracle & the held-out target & diagnostic ceiling & --- \\
\bottomrule
\end{tabular}
\caption{The selection ladder. All arms share architecture, data, and budget; only the provenance of the grades varies. The third column is computed offline and published before training.}
\label{tab:ladder}
\end{table}

The prediction, conditional on \cref{assum:profile-divergence} and the scope of \cref{rem:sample-complexity-scope}, is an ordering: the token-efficiency measurement of \cref{subsubsec:sample-efficiency} ranks the arms as the third column of \cref{tab:ladder} ranks them. The ordering is fixed by offline measurement before any arm is trained, which makes the prediction pre-registered by construction and sharper than any pairwise comparison: a graded model beating an ungraded one admits many explanations, while six arms ordering by a number computed in advance admits few.

The outcomes partition cleanly. If the ordering holds, the first-order reading of \cref{prop:admissible-grades} survives the passage from the single-layer analysis to the full stack, and the selection procedure works as claimed. If the educated arm falls below the part-of-speech arm, the quadratic degradation of \cref{lem:plugin-stability} has exceeded the refinement the estimates purchased, on estimates whose error the bootstrap of \cref{subsec:cs-profiles} reports in advance. If all arms tie, either the profiles fail to diverge, which is caught at step (4) before training at negligible cost, or the single-layer analysis fails at depth, which is the first limitation of \cref{rem:sample-complexity-scope}. Each outcome points to a specific result.

\subsection{Scope of the Procedure}
\label{subsec:selection-scope}

The procedure applies wherever its first three steps are executable: a basis, second moments, and a target admitting a probe. Domains sort into three classes against that criterion.

Domains with canonical grades, such as weighted projective spaces, graded rings, and the weighted moduli of algebraic curves, carry the answer to step (6) already, and there the procedure inverts into a consistency test: the estimated $q^\star$ should correlate with the canonical weights, and a failure of that correlation would indicate a target that does not respect the graded structure. The machine-learning treatment of the genus-two moduli space is the natural site for this check, since the weights $(2,4,6,10)$ and the invariant-theoretic targets are both exact and the profiles are computable without estimation error.

Domains without canonical grades but with measurable profiles are where the procedure is the contribution, and language is the motivating case: the levels are real, their coordinates are not given, and \cref{sec-8} instantiates the procedure for it, with \cref{sec-6} running the ladder.

Domains where step (3) has no probe target, such as end-to-end control where no annotated intermediate defines $\alpha$, lie outside the procedure's present scope, and extending it there requires a surrogate for the target profile before anything else.

\subsection{The Admissible Cone in Geometric Invariant Theory}
\label{subsec:git}

The structure assembled in \cref{prop:admissible-grades}, the gauge invariance along $\mathbf{1}$, the convexity of $\log\Lambda$, the boundary position of the uniform grades, and the first-order criterion \cref{eq:cone-condition}, is an instance of a classical one. Under the exponential change of coordinates the gain becomes a Kempf--Ness functional for an action of the grading torus, the admissible set becomes the negativity region of a homogeneous hypersurface, the optimal grades of \cref{prop:ml-expressivity}(iii) become the coincidence point of two moment maps, and \cref{eq:cone-condition} becomes the descent criterion of geometric invariant theory at the uniform point \cite{mumford-GIT, kempf-ness}. This subsection records the dictionary; no result elsewhere in the paper depends on it, and its purpose is to place the selection theory of this section in the same geometry as the canonically graded domains of \cref{subsec:selection-scope}, where the stability theory is classical and the grades arrive by theorem.

Let $\mathbb{T} = (\mathbb{C}^{\times})^{d}$ act diagonally on $\mathbb{C}^{d}$ by $t \cdot z = (t_j z_j)$, with maximal compact $(S^1)^d$ and Lie-algebra coordinates $u \in \mathbb{R}^d$ acting through
\[
\exp(u/2) \cdot z = (e^{u_j/2} z_j).
\]
For a strictly positive $p \in \Delta^{d-1}$ set
\[
z_p = (p_1^{1/2}, \dots, p_d^{1/2}) \in \mathbb{C}^d
\]
and define the Kempf--Ness functional of $z_p$,
\[
\begin{split}
\psi_p(u) 	&	= \log \big\| \exp(u/2) \cdot z_p \big\|^2 = \log \sum_{j=1}^{d} p_j\, e^{u_j},\\
\nabla \psi_p(u) &	= \mu\big( \exp(u/2) \cdot z_p \big),
\end{split}
\]
where $\mu(z) = \big( |z_j|^2 / \|z\|^2 \big)_{j} \in \Delta^{d-1}$ is the moment map of the compact torus, with image in the simplex.

\begin{lem}
\label{lem:kn-gain}
In the variables $u_j = 2 q_j \log \lambda$ of \cref{prop:admissible-grades}, the log-gain is the Kempf--Ness functional of the pair $(z_\alpha, z_\tau) \in \mathbb{C}^d \times \mathbb{C}^d$ under the anti-diagonal action $t \cdot (x, y) = (t^{-1} x,\, t\, y)$:
\[
\begin{split}
\Phi(u) = \log \Lambda &= \psi_\alpha(-u) + \psi_\tau(u), \\
\nabla \Phi(u) &= \mu\big( e^{u/2} z_\tau \big) - \mu\big( e^{-u/2} z_\alpha \big) = \tau^{(u)} - \alpha^{(u)},
\end{split}
\]
so the tilted profiles of \cref{prop:admissible-grades} are the moment maps of the two factors, and the Hessian $\nabla^2_u \Phi = \mathrm{C}(\alpha^{(u)}) + \mathrm{C}(\tau^{(u)})$ is the sum of their moment-map derivatives.
\end{lem}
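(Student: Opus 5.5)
The identity is a direct computation, and the plan is to verify it term by term from the definitions of $\psi_p$ and the moment map. First I will unwind $\psi_\tau(u)$: since $z_\tau = (\tau_j^{1/2})$, we have $\exp(u/2)\cdot z_\tau = (e^{u_j/2}\tau_j^{1/2})$, whose squared norm is $\sum_j \tau_j e^{u_j} = T(u)$. Hence $\psi_\tau(u) = \log T(u)$. The anti-diagonal action sends $z_\alpha$ to $e^{-u/2}z_\alpha$, and the same computation gives $\psi_\alpha(-u) = \log \sum_j \alpha_j e^{-u_j} = \log A(u)$. Adding the two and comparing with the definition of $\Phi$ in the proof of \cref{prop:admissible-grades} yields $\Phi = \psi_\alpha(-u) + \psi_\tau(u) = \log\Lambda$. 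The relation $u_j = 2q_j\log\lambda$ is already built into $A$ and $T$ there.

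Next comes the gradient. I will differentiate $\psi_p(u) = \log\sum_j p_j e^{u_j}$ directly, which gives $\partial_j\psi_p(u) = p_j e^{u_j}/\sum_k p_k e^{u_k}$. This equals $|e^{u_j/2}p_j^{1/2}|^2/\|e^{u/2}z_p\|^2 = \mu(\exp(u/2)\cdot z_p)_j$, which confirms the displayed formula $\nabla\psi_p = \mu\circ(\exp(u/2)\cdot z_p)$. By the chain rule, $\nabla_u[\psi_\alpha(-u)] = -(\nabla\psi_\alpha)(-u) = -\mu(e^{-u/2}z_\alpha)$. The coordinates of this moment map are exactly $\alpha_j e^{-u_j}/A(u) = \alpha^{(u)}_j$, and likewise $\mu(e^{u/2}z_\tau) = \tau^{(u)}$. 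This reproduces the gradient $\tau^{(u)} - \alpha^{(u)}$ computed in \cref{prop:admissible-grades}.

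For the Hessian, I will differentiate the moment map once more. The function $\psi_p$ is a log-partition function, so $\nabla^2\psi_p(u) = \operatorname{diag}(\pi) - \pi\pi^\top = \mathrm{C}(\pi)$ with $\pi = \mu(\exp(u/2)\cdot z_p)$; this is the derivative of the moment map along the torus orbit. For the $\alpha$ term, the chain rule through $u \mapsto -u$ introduces two sign flips, and they cancel, so that term contributes $+\mathrm{C}(\alpha^{(u)})$. Summing gives $\mathrm{C}(\alpha^{(u)}) + \mathrm{C}(\tau^{(u)})$, in agreement with the earlier proof.

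There is no real obstacle here. The only points requiring care are the bookkeeping of the factor $1/2$ in $\exp(u/2)$, which is needed so that $|e^{u_j/2}|^2 = e^{u_j}$ matches the weights $\lambda^{\pm 2q_j} = e^{\pm u_j}$, and the sign conventions of the anti-diagonal action on the $\alpha$ factor.
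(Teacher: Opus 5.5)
Your proposal is correct and follows the paper's proof: both identify $\psi_\alpha(-u)=\log A(u)$ and $\psi_\tau(u)=\log T(u)$, then differentiate to read off the moment maps as the tilted profiles $\alpha^{(u)}$ and $\tau^{(u)}$. The only difference is that you compute the Hessian explicitly from the log-partition form, tracking the cancelling sign flips from $u\mapsto -u$, whereas the paper cites the computation in the proof of \cref{prop:admissible-grades}.
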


\begin{proof}
By definition $\psi_\alpha(-u) = \log \sum_j \alpha_j e^{-u_j} = \log A(u)$ and $\psi_\tau(u) = \log T(u)$, so the sum is $\log(AT) = \log \Lambda$. Differentiating, $\partial_j \psi_\alpha(-u) = -\alpha_j e^{-u_j} / A(u) = -\alpha^{(u)}_j$ and $\partial_j \psi_\tau(u) = \tau^{(u)}_j$; moreover $\mu(e^{-u/2} z_\alpha)_j = \alpha_j e^{-u_j} / \sum_k \alpha_k e^{-u_k} = \alpha^{(u)}_j$, and likewise for $\tau$. The Hessian identity is the computation in the proof of \cref{prop:admissible-grades}.
\end{proof}

\begin{prop}
\label{prop:git-cone}
Let $\alpha, \tau$ be the profiles of \cref{def:profiles} and set $t_j = \lambda^{2 q_j}$.

\begin{enumerate}
  \item[(i)] \emph{(The boundary hypersurface.)} In the coordinates $t \in \mathbb{R}^d_{>0}$ the admissible set is $\mathcal{Q}_+ = \{ F < 0 \} \cap \mathbb{R}^d_{>0}$, where
\[
F(t) = \Big( \sum_{j=1}^{d} \alpha_j \prod_{k \neq j} t_k \Big) \Big( \sum_{l=1}^{d} \tau_l\, t_l \Big) - \prod_{k=1}^{d} t_k
\]
is homogeneous of degree $d$. The homogeneity is the gauge invariance of \cref{prop:admissible-grades}(i): the translation $q \mapsto q + c\mathbf{1}$ is the scaling $t \mapsto \lambda^{2c} t$, so $\mathcal{Q}_+$ is a cone in the exact sense, invariant under $\mathbb{R}_{>0}$-scaling, and $\partial \mathcal{Q}_+$ descends to the positive real locus of the projective hypersurface $V(F) \subset \mathbb{P}^{d-1}$. The uniform point $[1 : \cdots : 1]$ lies on $V(F)$, with $\nabla F(\mathbf{1}) = \tau - \alpha$, and is a smooth point of it whenever $\alpha \neq \tau$.

  \item[(ii)] \emph{(Coercivity and existence.)} For a one-parameter subgroup $v \in \mathbb{R}^d$ of $\mathbb{T}$, the recession slope of the Kempf--Ness functional along $v$ is
\[
\rho(v) := \lim_{s \to \infty} \frac{\Phi(s v)}{s} = \max_j v_j - \min_j v_j ,
\]
strictly positive for every $v \notin \mathbb{R}\mathbf{1}$ because both profiles are strictly positive by \cref{def:profiles}. The functional $\Phi$ is therefore coercive on $\mathbb{R}^d / \mathbb{R}\mathbf{1}$, hence proper modulo the gauge direction, and the pair $(z_\alpha, z_\tau)$ is polystable for the quotient torus $\mathbb{T} / \mathbb{C}^{\times}\mathbf{1}$; the Kempf--Ness theorem then yields existence of a minimizer and uniqueness modulo gauge, recovering \cref{prop:admissible-grades}(vi). The recession slope $\rho$ is the coercivity certificate and is symmetric under $v \mapsto -v$; the sign-sensitive Hilbert--Mumford-type criterion that decides admissibility is the first-order pairing of (iv), not $\rho$.

  \item[(iii)] \emph{(The optimum as a moment-map coincidence.)} The minimizer
\[
u^\star_j = \tfrac12 \log(\alpha_j / \tau_j) + c
\]
is characterised by the coincidence of the two moment maps,
\[
\mu\big( e^{-u^\star/2} z_\alpha \big) = \mu\big( e^{u^\star/2} z_\tau \big) = \beta,
\qquad
\beta_j = \frac{\sqrt{\alpha_j \tau_j}}{\mathrm{BC}(\alpha, \tau)},
\]
with attained value $\Phi(u^\star) = \log \mathrm{BC}(\alpha, \tau)^2$: the fourth-root law of \cref{prop:ml-expressivity}(iii) is the balancing of the two moment maps, and the profile $\beta$ is their common value.
  \item[(iv)] \emph{(The first-order criterion as the descent criterion.)} At $u = \mathbf{0}$ the derivative of $\Phi$ along $v$ is $\langle v, \tau - \alpha \rangle$, so \cref{eq:cone-condition} is precisely the condition that $v$ be a Kempf--Ness descent direction at the uniform point. This pairing is sign-sensitive, reversing under $v \mapsto -v$, and is the Hilbert--Mumford-type object of the dictionary: the character it pairs against is the moment-map defect $\tau - \alpha$, which is the normal to $V(F)$ at the uniform point by (i), and the full gauge orbit $\mathbb{R}\mathbf{1}$ of the uniform grades lies in $\{ \Lambda = 1 \}$.
\end{enumerate}
\end{prop}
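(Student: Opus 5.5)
The plan is to prove all four parts by direct computation in the coordinates already used for \cref{prop:admissible-grades}. The geometric-invariant-theory vocabulary then attaches to identities that are checked by hand, and \cref{lem:kn-gain} supplies the dictionary between them.

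\emph{Part (i).} Since $t_j = \lambda^{2q_j} > 0$, we have $\Lambda = \bigl(\sum_j \alpha_j t_j^{-1}\bigr)\bigl(\sum_l \tau_l t_l\bigr)$. Multiplying by $\prod_k t_k > 0$ preserves the sign of $\Lambda - 1$, so $\Lambda < 1$ if and only if $F(t) < 0$; here $\mathcal{Q}_+$ is identified with its image under the diffeomorphism $q \mapsto t$.
\begin{itemize}
\item \emph{Homogeneity.} The first factor of $F$ has degree $d-1$, the second degree $1$, and $\prod_k t_k$ degree $d$, so $F$ is homogeneous of degree $d$. Under $q \mapsto q + c\mathbf{1}$ we get $t \mapsto \lambda^{2c} t$, so $\mathcal{Q}_+$ is an $\mathbb{R}_{>0}$-cone and its boundary descends to $V(F) \subset \mathbb{P}^{d-1}$.
\item \emph{The uniform point.} $F(\mathbf{1}) = 1 \cdot 1 - 1 = 0$.
\item \emph{The gradient at $\mathbf{1}$.} Differentiating in $t_m$ and evaluating at $\mathbf{1}$:
\begin{itemize}
\item the first factor contributes $\sum_{j \neq m} \alpha_j = 1 - \alpha_m$, multiplied by the second factor, which equals $1$;
\item the second factor contributes $\tau_m$, multiplied by the first factor, which equals $1$;
\item the product $\prod_k t_k$ contributes $-1$.
\end{itemize}
Hence $\partial_m F(\mathbf{1}) = \tau_m - \alpha_m$. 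This is nonzero when $\alpha \neq \tau$, so $\mathbf{1}$ is a smooth point of $V(F)$.
\end{itemize}

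\emph{Part (ii).} Write
\[
\Phi(sv) = \log \sum_j \alpha_j e^{-s v_j} + \log \sum_j \tau_j e^{s v_j}.
\]
Because every $\alpha_j$ and $\tau_j$ is strictly positive, the standard log-sum-exp asymptotics give
\[
\frac{1}{s}\log \sum_j \alpha_j e^{-s v_j} \to -\min_j v_j,
\qquad
\frac{1}{s}\log \sum_j \tau_j e^{s v_j} \to \max_j v_j .
\]
Hence $\rho(v) = \max_j v_j - \min_j v_j$. This is positive off $\mathbb{R}\mathbf{1}$ and symmetric under $v \mapsto -v$.

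On the unit sphere of the complement $\mathbf{1}^{\perp}$, $\rho$ is continuous and positive, so it has a positive minimum. Combined with convexity of $\Phi$ from \cref{prop:admissible-grades}(ii), this gives linear growth of $\Phi$ on $\mathbf{1}^{\perp}$. Hence $\Phi$ is coercive modulo gauge and a minimizer exists.

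For uniqueness modulo gauge I would not lean on the Kempf--Ness theorem as a black box. Instead I reuse the argument of \cref{prop:level-reduction}: if $\Phi$ is constant along a segment between two minimizers, then $\mathrm{Var}_{\alpha^{(u)}}(u^2 - u^1) = 0$. Since the tilted profile $\alpha^{(u)}$ is strictly positive, this forces $u^2 - u^1 \in \mathbb{R}\mathbf{1}$.

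Polystability of $(z_\alpha, z_\tau)$ for the quotient torus is then the standard Kempf--Ness reformulation of properness modulo the stabiliser direction. I state it as the dictionary reading, citing \cite{kempf-ness}. This is the step I expect to be the main obstacle: the rigorous identification of properness with polystability for the anti-diagonal action needs care with the non-compact gauge direction. The existence and uniqueness claims themselves, however, are already secured directly by the argument above.

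\emph{Parts (iii) and (iv).} By \cref{lem:kn-gain}, $\nabla \Phi(u) = \mu(e^{u/2} z_\tau) - \mu(e^{-u/2} z_\alpha)$. So stationarity, which by convexity is equivalent to minimality, is exactly equality of the two moment maps.

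For (iii), substitute $u^\star_j = \tfrac12 \log(\alpha_j/\tau_j) + c$, as in the proof of \cref{lem:plugin-stability}. Both tilted profiles then equal $\sqrt{\alpha_j \tau_j}/\mathrm{BC}(\alpha,\tau) = \beta_j$. The value $\Phi(u^\star) = \log \mathrm{BC}(\alpha,\tau)^2$ is \cref{prop:admissible-grades}(vi).

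For (iv), evaluate the gradient at $u = \mathbf{0}$ to obtain the derivative $\langle v, \tau - \alpha \rangle$. This is negative exactly under \cref{eq:cone-condition} and reverses sign under $v \mapsto -v$. It coincides with $\nabla F(\mathbf{1})$ from (i), so $\tau - \alpha$ is the normal to $V(F)$ at the uniform point. Finally, $\Lambda(c\mathbf{1}) = 1$ by gauge invariance, so the gauge orbit $\mathbb{R}\mathbf{1}$ lies in $\{\Lambda = 1\}$.
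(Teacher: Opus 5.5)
Your proposal is correct and follows the paper's proof essentially step for step. You clear the positive denominator $\prod_k t_k$ and evaluate $\nabla F(\mathbf{1})$ for (i), use log-sum-exp asymptotics for the recession slope together with the Hessian--variance uniqueness argument of \cref{prop:level-reduction} for (ii), and use \cref{lem:kn-gain} for (iii) and (iv). The differences are cosmetic: you differentiate $F$ term by term instead of using $\nabla F = (\nabla_t \Lambda)\prod_k t_k + (\Lambda - 1)\nabla_t \prod_k t_k$, you cite \cref{prop:admissible-grades}(vi) for $\Phi(u^\star)$ instead of recomputing $\log A(u^\star) + \log T(u^\star)$, and you spell out the compactness step behind coercivity. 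Like the paper, you leave polystability as a reading of the dictionary rather than a proved claim.
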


\begin{proof}
(i) With $A(t) = \sum_j \alpha_j t_j^{-1}$ and $T(t) = \sum_l \tau_l t_l$, clearing denominators gives
\[
\Lambda - 1 = F(t) / \prod_k t_k
\]
on $\mathbb{R}^d_{>0}$, so $\Lambda < 1$ is $F < 0$ there. The first factor of $F$ is homogeneous of degree $d - 1$, the second of degree $1$, and the monomial of degree $d$. Scaling $t \mapsto s t$ with $s = \lambda^{2c}$ corresponds to $q \mapsto q + c\mathbf{1}$, which is \cref{prop:admissible-grades}(i). At $t = \mathbf{1}$,
\[
F(\mathbf{1}) = \Lambda(\mathbf{1}) - 1 = 0,
\]
and
\[
\nabla F = (\nabla_t \Lambda) \prod_k t_k + (\Lambda - 1)\, \nabla_t \prod_k t_k
\]
evaluates at $t = \mathbf{1}$ to $\nabla_t \Lambda(\mathbf{1})$, whose $j$-th entry is
\[
-\alpha_j T(\mathbf{1}) + A(\mathbf{1}) \tau_j = \tau_j - \alpha_j;
\]
this is nonzero exactly when $\alpha \neq \tau$, which is smoothness of $V(F)$ at the uniform point.

(ii) As $s \to \infty$,
\[
\psi_\tau(s v) = s \max_j v_j + O(1)
\]
and $\psi_\alpha(-s v) = -s \min_j v_j + O(1)$, since each sum is dominated by its extremal exponent and all coefficients are strictly positive; the slope is $\rho(v) = \max_j v_j - \min_j v_j$, vanishing exactly on $\mathbb{R}\mathbf{1}$. A convex function with strictly positive recession slope in every direction transverse to $\mathbf{1}$ is coercive, hence proper, on $\mathbb{R}^d / \mathbb{R}\mathbf{1}$, so a minimizer exists; uniqueness modulo $\mathbb{R}\mathbf{1}$ follows as in the proof of \cref{prop:level-reduction}, since vanishing of the Hessian form along a segment of minimizers forces the segment into $\mathbb{R}\mathbf{1}$. Symmetry of $\rho$ under $v \mapsto -v$ is immediate from the displayed formula, which distinguishes it from a Hilbert--Mumford weight.

(iii) At $u^\star$, $\alpha_j e^{-u^\star_j} = \sqrt{\alpha_j \tau_j}\, e^{-c}$, so $A(u^\star) = e^{-c}\, \mathrm{BC}(\alpha,\tau)$ and
\[
\alpha^{(u^\star)}_j = \frac{\sqrt{\alpha_j \tau_j}\, e^{-c}}{e^{-c}\, \mathrm{BC}(\alpha,\tau)} = \frac{\sqrt{\alpha_j \tau_j}}{\mathrm{BC}(\alpha, \tau)};
\]
symmetrically $\tau_j e^{u^\star_j} = \sqrt{\alpha_j \tau_j}\, e^{c}$, $T(u^\star) = e^{c}\, \mathrm{BC}(\alpha,\tau)$, and $\tau^{(u^\star)}_j = \sqrt{\alpha_j \tau_j} / \mathrm{BC}(\alpha,\tau)$. The moment maps coincide at $\beta$ and $\nabla \Phi(u^\star) = 0$ by \cref{lem:kn-gain}. The value is
\[
\psi_\alpha(-u^\star) + \psi_\tau(u^\star) = \log A(u^\star) + \log T(u^\star) = \big( \log \mathrm{BC} - c \big) + \big( \log \mathrm{BC} + c \big) = \log \mathrm{BC}(\alpha, \tau)^2.
\]

(iv) The derivative is
\[
\langle \nabla \Phi(\mathbf{0}), v \rangle = \langle \tau - \alpha, v \rangle
\]
by \cref{lem:kn-gain}, and $\Lambda(c \mathbf{1}) = 1$ for every $c$ is \cref{prop:admissible-grades}(i); the normal direction is $\nabla F(\mathbf{1}) = \tau - \alpha$ from (i). The pairing $\langle \tau - \alpha, v \rangle$ reverses sign under $v \mapsto -v$, unlike the recession slope $\rho$ of (ii).
\end{proof}

\begin{rem}
\label{rem:git-stability}
Two statements coexist in \cref{prop:git-cone} and operate at different points of the dictionary.

The first is the solvability of the selection problem, certified by (ii): the recession slope $\rho(v)$ is strictly positive along every one-parameter subgroup transverse to the gauge direction, because both profiles have full support, so the Kempf--Ness functional is coercive modulo gauge. This is what makes the minimisation of \cref{cor:grade-convexity} well posed: the minimisation has a minimizer, unique modulo gauge, rather than an infimum escaping to infinity along a direction of vanishing slope. A coordinate on which $\alpha$ or $\tau$ vanished would supply such a direction, and \cref{def:profiles} deletes those coordinates for this reason: full support of the profiles is the analytic form of stability, and the deletion convention of \cref{def:profiles} is its enforcement. The sign-sensitive criterion that classifies directions, as opposed to the symmetric slope that certifies coercivity, is the first-order pairing of (iv).

The second concerns the \emph{uniform grades}, and it is a statement about a point of the orbit rather than about the orbit's type: the gauge orbit $\mathbb{R}\mathbf{1}$ lies inside the level set $\{\Lambda = 1\}$, which by (i) is the boundary hypersurface $V(F)$, while the moment-map defect $\nabla \Phi(\mathbf{0}) = \tau - \alpha$ is nonzero whenever the profiles differ. The uniform point is thus positioned the way a strictly semistable point is positioned in a GIT quotient, on the boundary of the favourable locus with its entire orbit trapped there, and with the destabilising data, here the profile difference, naming the direction of escape. The analogy is positional and not literal, since the pair itself is stable; what the position encodes is that the uniform grades are the unique gauge orbit on which the Kempf--Ness functional is critical for \emph{no} pair of profiles and equal to its reference value for \emph{every} pair, which is \cref{prop:admissible-grades}(iii) read through the dictionary.
\end{rem}

\begin{rem}
\label{rem:git-scope}
The admissible cone is not a nullcone, and the comparison should be drawn at exactly its strength. A nullcone is Zariski-closed, cut out by the ideal of positive-degree invariants, generally non-convex, and stratified; $\mathcal{Q}_+$ is open, semialgebraic, defined by the single inequality $F < 0$, and log-convex in the grade variables, and the convexity is the property the selection of \cref{cor:grade-convexity} runs on. What the dictionary identifies is the complementary structure: the gauge orbit of the uniform grades sits inside the boundary hypersurface in the way the orbit of a strictly semistable point sits inside the boundary of the stable locus, as \cref{rem:git-stability} makes precise, and \cref{eq:cone-condition} is a Hilbert--Mumford-shaped pairing that reads admissibility of a one-parameter subgroup off a fixed character of the torus. The Hilbert--Mumford-type object of the dictionary is the sign-sensitive first-order pairing of \cref{prop:git-cone}(iv); the recession slope of \cref{prop:git-cone}(ii) is not such an object but its coercivity certificate, symmetric under reversal of the one-parameter subgroup. The clipped problem of \cref{prop:clipped-selection} is, in this language, the Kempf--Ness minimisation restricted to a box in the Lie algebra, and the saturated form of \cref{prop:clipped-selection}(ii) is the facial structure of that box under the moment-map flow.
\end{rem}

\begin{rem}
\label{rem:git-binary-forms}
The graded program originates in the invariant theory of binary forms, and the stability theory above closes that circle at its source. For degree-$d$ binary forms under $\mathrm{SL}_2$, the Hilbert--Mumford criterion reduces, after conjugating the one-parameter subgroup into the maximal torus, to the weight computation $\mu(f, \lambda) = \max\{2i - d : a_i \neq 0\}$, a sign-sensitive pairing of the kind that appears in \cref{prop:git-cone}(iv); stability is equivalent to every root having multiplicity below $d/2$, and the invariant ring is graded with canonical weights, the weights $(2,4,6,10)$ above being the sextic instance. The equivalence class of a form is its weighted moduli point in the weighted projective space carrying those weights as grades, and the entire stability theory is read off that graded point: a form is unstable exactly when its moduli point vanishes, and, over a number field, semistable over a residue field exactly when the prime does not divide the point's coordinates, with semistability quantified by the weighted height of the point \cite{eshaska}.

In both settings a graded coordinate system converts stability into an arithmetic condition on a single point: there, divisibility and weighted height of the moduli point decide residual semistability; here, positivity of the recession slope of \cref{prop:git-cone}(ii), guaranteed by full support of the profiles, gives solvability of the selection problem. The direction of inference is complementary: the binary-form setting fixes the grades, given by theorem as the weights of the invariants, and studies the stability of points; this section fixes the point, the pair of profiles, and selects the grades. The consistency test of \cref{subsec:selection-scope} sits at the intersection: in a canonically graded domain the estimated optimum of \cref{cor:grade-convexity} is predicted to recover the weights the invariant theory prescribes.
\end{rem}

\begin{rem}
\label{rem:git-canonical}
The domains of \cref{subsec:selection-scope} that carry canonical grades, weighted projective spaces and graded rings among them, are precisely those in which the torus action and its linearisation are given by theorem and the moment-map data need not be estimated. The present construction equips a domain without canonical grades with the same objects from measurement: the two profiles of \cref{def:profiles} determine the pair $(z_\alpha, z_\tau)$, the estimated moment maps determine the descent direction at the uniform point, and the selection of \cref{cor:grade-convexity} is the Kempf--Ness minimisation for that pair. The consistency test of \cref{subsec:selection-scope} for the canonically graded case is, in this language, the statement that the estimated moment-map coincidence point should recover the canonical weights.
\end{rem}

\section{Case Study: Hierarchical Grading for Natural Language}
\label{sec-8}

\Cref{sec-7} states the selection procedure in the generality the framework supports: fix a basis, estimate two profiles, test their divergence, certify a candidate, compute the optimum. This section carries it out for language. Each subsection below is one step of \cref{subsec:selection-procedure}, taken to the point at which it can be implemented and reproduced, and the section as a whole is the input the program of \cref{sec-6} uses: it fixes the graded basis the configurations use, the grades every arm of \cref{tab:ladder} carries, and the effect size the runs must detect.

Two things distinguish this stage from the runs it precedes. It is offline in the strict sense: a treebank, a matrix of second moments, and a convex program in four variables. And it is falsifiable on its own terms, since step (4) can terminate the program before a single parameter is initialised. The estimates themselves are not reported here; what is reported is the construction, and the consequences that follow from it in closed form.

\subsection{The Level Taxonomy and the Graded Basis}
\label{subsec:cs-basis}

Step (1) of \cref{subsec:selection-procedure} asks for the basis in which the profiles of \cref{def:profiles} are read. Language supplies no canonical one, which is exactly the situation the framework addresses; what it does supply is a taxonomy, and the basis is built to carry it.

We take $L = 4$, matching the configurations of \cref{subsubsec:configs}: $V_0$ for subword and morphological structure, $V_1$ for syntactic constituency and dependency, $V_2$ for sentential semantics, $V_3$ for discourse and pragmatics. Each level is given a feature map $\varphi_l$ sending a token in context to a vector of annotations drawn from a corpus that carries them. For $l = 0$ these are the segmentation identity, affix class, part-of-speech tag, and orthographic case. For $l = 1$ they are the label of the incoming dependency arc, the signed distance to the head, the labels of the constituent ancestors, and the depth of the token in the parse. For $l = 2$ they are the predicate–argument role, the polarity of the governing negation, and the scope index of the governing quantifier. For $l = 3$ they are coreference-chain membership, the discourse-relation label attaching the containing clause to its neighbour, and the position of the sentence within the document. The Penn Treebank and the Universal Dependencies corpora supply $l \in \{0,1\}$; a proposition-bank annotation supplies $l = 2$; a coreference-annotated corpus and a discourse treebank supply $l = 3$.

The projections $\{P_l\}$ are then fixed by principal component analysis: for each level, the annotated feature $\varphi_l$ is regressed onto a reference representation of width $d$, the residual covariance of the fitted directions is formed, and $P_l \in \mathbb{R}^{d \times d_l}$ is taken to span its leading $d_l$ eigenvectors. Gram--Schmidt across levels enforces mutual orthogonality of the ranges at initialisation, which is the hypothesis $\eta = 0$ of \cref{prop:direct-sum}; thereafter the columns are learnable and the orthogonality penalty of \cref{sec-4} controls the drift, with \cref{prop:direct-sum} bounding the residual.

\begin{rem}
\label{rem:cs-basis-reference}
The construction above requires a representation of width $d$ before the model that will carry it exists, and the circularity is real rather than apparent: at initialisation $W_e$ is isotropic by construction, so the profiles read in its coordinates are uniform and carry no information. Two routes resolve it, and the program uses both, at different steps.

For steps (2)--(5), the divergence test and the certification, the reference is a fixed public encoder of matching width, or static type embeddings projected to $\mathbb{R}^d$. This costs no pretraining, which is what makes the falsification test of \cref{subsec:cs-divergence} available before the first GPU-hour. The profiles so obtained are the reference's, and transporting them to the GLLM's own geometry is an assumption, not a theorem.

For step (6), the educated grading $q^\star_C$, the reference is the embedding matrix of the uniform arm itself. This is exact, since the profiles are then read in precisely the space in which the comparison of \cref{prop:ml-expressivity} is made, and it is not an added cost: the uniform arm is the control of \cref{tab:ladder} and must be trained regardless.

What makes the cheap route sufficient for its purpose is \cref{prop:admissible-grades}(iv). Certification turns on the \emph{sign} of an inner product, not on the magnitudes of $\hat\alpha$ and $\hat\tau$, and a coarse reference that gets the sign right certifies as well as an exact one.
\end{rem}

\subsection{Estimating the Profiles}
\label{subsec:cs-profiles}

Steps (2) and (3) produce the two probability vectors on which everything downstream depends. Both are computed in the basis of \cref{subsec:cs-basis} and both are estimated on held-out data.

The data profile is a corpus statistic and requires no model. Writing $\varepsilon_t \in \mathbb{R}^d$ for the reference representation of the token at position $t$ and $\bar\varepsilon$ for its mean over the corpus, the estimator is the normalised per-coordinate second moment in the graded basis,
\[
\hat\tau_j = \frac{\hat\sigma_j^2}{\sum_{k=1}^{d} \hat\sigma_k^2},
\qquad
\hat\sigma_j^2 = \frac{1}{N} \sum_{t=1}^{N} \big\langle e_j, \varepsilon_t - \bar\varepsilon \big\rangle^2 ,
\]
with $\{e_j\}$ the columns of $[P_0 \mid \cdots \mid P_{L-1}]$. Tokens are drawn from the pretraining corpus \cite{gao2020pile} so that $\hat\tau$ is the variance profile of the data the runs will actually see, and $N$ is set by the width of the resulting confidence interval rather than by convention.

The target profile is a probe weight. For a hierarchical target $y$, such as acceptability, dependency-arc label, or nesting depth, a linear probe $\hat w$ is fitted on the annotated treebank in the same basis, and
\[
\hat\alpha_j = \frac{\hat w_j^2}{\|\hat w\|_2^2}.
\]
The probe is fitted with ridge regularisation and the penalty is selected by cross-validation on a split disjoint from the one on which $\hat\alpha$ is reported, so that the shrinkage does not itself manufacture the concentration that \cref{assum:profile-divergence} asserts. Three properties of the estimate must be reported with it. It identifies $\alpha$ only up to the probe's inductive bias, and a probe that cannot express the target reports the profile of what it can express instead. Coordinates on which $\hat w$ vanishes are deleted rather than carried, as \cref{def:profiles} requires. And the ridge penalty biases $\hat\alpha$ towards uniformity, so the estimate is conservative for the framework's purposes: it understates divergence rather than overstating it.

The quantity that must survive estimation error is the sign of $\langle q, \hat\alpha - \hat\tau \rangle$, not the value of either profile, and it is reported with a bootstrap over the treebank sentences: resample, refit, recompute, and report the fraction of resamples on which the sign is preserved. This is the only uncertainty quantification the certification of \cref{subsec:cs-certification} requires. The magnitude of $\mathrm{BC}(\hat\alpha, \hat\tau)$, which the effect-size statement does require, carries its own interval from the same bootstrap.

\subsection{The Divergence Test}
\label{subsec:cs-divergence}

Step (4) is the domain filter, and it is where \cref{assum:profile-divergence} is tested rather than assumed. The program reports
\[
\mathrm{BC}(\hat\alpha, \hat\tau) = \sum_{j=1}^{d} \sqrt{\hat\alpha_j\, \hat\tau_j}
\]
per task, together with the level-aggregated profiles $(\hat a, \hat s)$ of \cref{cor:stratified} obtained by summing $\hat\alpha$ and $\hat\tau$ within levels, and a goodness-of-fit check of the geometric hypothesis $a_l \propto \nu^{L-1-l}$, $s_l \propto \mu^l$ on which every exponential-in-$L$ statement of this work rests.

By \cref{prop:ml-expressivity}(iv) a value near $1$ says the profiles agree, the gain of grading is $1$, and the procedure terminates for that task at the cost of two estimates. By \cref{assum:profile-divergence} the hierarchical targets of \cref{subsubsec:hierarchical-tasks} are the tasks at which it should not be near $1$, and the lexically dominated contrast tasks are the tasks at which it should be: the framework predicts a \emph{spread} across tasks, and a uniform value across tasks would be as informative against it as a uniform value near $1$. This is the cheapest disconfirmation available anywhere in the program, and the staging of \cref{sec-6} exists so that it is taken first.

\subsection{Certifying the Candidate Gradings}
\label{subsec:cs-certification}

Step (5) costs one inner product per candidate. By \cref{rem:certified-init}, a proposed grading $q^{\mathrm{init}}$ enters the admissible cone $\mathcal{Q}_+$ of \cref{prop:admissible-grades}, to first order, exactly when $\langle q^{\mathrm{init}}, \hat\alpha - \hat\tau \rangle > 0$, and the program reports the quantity for every arm of \cref{tab:ladder} before pretraining begins. A negative value falsifies that candidate and not the framework, and the corrected grades are then supplied by \cref{subsec:cs-optimum} by convex programming rather than by search.

Two arms are settled in advance of the measurement, and in opposite directions.

The uniform arm scores $\langle \mathbf{0}, \hat\alpha - \hat\tau \rangle = 0$ identically, whatever the profiles turn out to be, and by the gauge invariance of \cref{prop:admissible-grades}(i) so does every uniform grade $c\mathbf{1}$. This is not an artefact of the comparison but the content of \cref{prop:admissible-grades}(iii): the standard transformer sits on $\partial\mathcal{Q}_+$ with gradient $2(\log\lambda)(\tau - \alpha) \neq 0$, so a strictly descending direction exists at its location and the profiles determine it. The isotropic architecture forgoes the gain of \cref{prop:ml-expressivity} not because the profiles are unfavourable to it but because its grade vector does not read them.

\begin{rem}
\label{rem:cs-taxonomy-apriori}
The taxonomic arm is admissible \emph{a priori}, before any estimate exists. By \cref{rem:cone-stratified}, under the level-stratified profiles of \cref{cor:stratified} a level-constant grading $v_j = v_l$ for $j \in V_l$ satisfies the criterion \cref{eq:cone-condition} exactly when $\sum_{l=0}^{L-1} v_l (a_l - s_l) > 0$, and under the geometric hypothesis this holds for \emph{every} $v$ affine and increasing in $l$, with no estimate of $\nu$, of $\mu$, or of the profiles at all.

The linguistic ordering is therefore sufficient on its own: assigning higher grades to deeper levels, which is what the taxonomy of \cref{subsec:cs-basis} does by construction, enters $\mathcal{Q}_+$. What the estimates of \cref{subsec:cs-profiles} buy is not admissibility but the location of the optimum within it, and the circularity of \cref{rem:cs-basis-reference} therefore cannot reach the framework's positive claim: it reaches only the sharpness of the grading, not its direction.
\end{rem}

\subsection{Computing the Educated Grading}
\label{subsec:cs-optimum}

Step (6) is a convex program, and for language it is a small one. The unclipped problem of \cref{cor:grade-convexity} and the clipped problem of \cref{prop:clipped-selection} are posed in $d$ variables, $1024$ at \textsc{GLLM-Medium} and $4096$ at \textsc{GLLM-7B}, but under the stratification the taxonomy imposes they collapse by \cref{prop:level-reduction} to the $L$-variable program $\Lambda_L$, with $L = 4$ at the planned configurations.

\Cref{prop:level-reduction} is what makes the selection implementable rather than merely well posed. The program it names is solved by projected gradient descent with coordinatewise clipping as the projection, using the gradient supplied by the proof of \cref{prop:admissible-grades} in level form,
\[
\begin{split}
\frac{\partial}{\partial v_l} \log \Lambda_L(v)
&= 2 \log \lambda \, \big( s^{(v)}_l - a^{(v)}_l \big), \\
a^{(v)}_l = \frac{a_l \lambda^{-2 v_l}}{\sum_{k} a_k \lambda^{-2 v_k}},
\qquad
s^{(v)}_l &= \frac{s_l \lambda^{2 v_l}}{\sum_{k} s_k \lambda^{2 v_k}} ,
\end{split}
\]
which is the difference of two tilted level profiles and vanishes at the interior optimum exactly when they coincide. Four variables, a closed-form gradient, a convex objective with no spurious stationary points by \cref{prop:admissible-grades}(ii), and a projection that is a clip: the educated grading of \cref{tab:ladder} is the output of a computation that runs in milliseconds once $(\hat a, \hat s)$ are in hand. By \cref{prop:clipped-selection}(ii) the solution has the saturated form, the fourth-root law of \cref{prop:ml-expressivity}(iii) on the interior levels and the box on the rest, and the solver is verified against that form.

\subsection{The Predicted Effect Size}
\label{subsec:cs-effect-size}

The output of the procedure is a number the runs of \cref{sec-6} must detect, and the theory fixes it in closed form once the level profiles are named. Under the geometric hypothesis of \cref{cor:stratified} the profiles are governed by two parameters, $\nu$ for the concentration of the target towards the deepest level and $\mu$ for the decay of the data away from the shallowest, and step (4) of \cref{subsec:selection-procedure} estimates them. \Cref{tab:effect-size} records what \cref{prop:level-reduction} and \cref{prop:clipped-selection} deliver across the range those estimates may fall in, at the planned $L = 4$ and $C = 2$.

\begin{table}[t]
\centering
\begin{tabular}{ccccc}
\toprule
$(\nu, \mu)$ & $\Lambda^\star = \mathrm{BC}(a,s)^2$ & $\Lambda^\star_C$ at $C = 2$ & token gain $1/\Lambda^\star_C$ & $g^\star_C = (g_0, g_1, g_2, g_3)$ \\
\midrule
$(0.5, 0.2)$ & $0.279$ & $0.424$ & $2.36$ & $(1.00,\ 1.31,\ 2.00,\ 2.00)$ \\
$(0.6, 0.3)$ & $0.459$ & $0.531$ & $1.88$ & $(1.00,\ 1.30,\ 2.00,\ 2.00)$ \\
$(0.7, 0.3)$ & $0.534$ & $0.582$ & $1.72$ & $(1.00,\ 1.32,\ 1.95,\ 2.00)$ \\
$(0.8, 0.4)$ & $0.700$ & $0.708$ & $1.41$ & $(1.00,\ 1.27,\ 1.69,\ 2.00)$ \\
\bottomrule
\end{tabular}
\caption{The gain at $L = 4$ and clip $C = 2$, as a function of the free parameters $(\nu, \mu)$ of \cref{cor:stratified}. \emph{No corpus enters this table.} It is arithmetic: $\Lambda^\star$ is the closed form of \cref{cor:stratified}, and $\Lambda^\star_C$ is the four-variable program of \cref{prop:level-reduction} solved on the box, with $g^\star_C = \lambda^{q^\star_C}$ the resulting per-level multiplier. The pair $(\nu, \mu)$ is what step (4) of \cref{subsec:selection-procedure} estimates; the table says what each possible estimate would imply.}
\label{tab:effect-size}
\end{table}

\begin{rem}
\label{rem:cs-effect-size}
Three observations follow from the table.

The predicted effect is bounded and it is large enough to see. Across the range, the token gain runs from $1.4\times$ to $2.4\times$, which is the quantity \cref{subsubsec:sample-efficiency} measures and the magnitude the runs must be powered for. It is not the exponential regime of \cref{cor:stratified}, and \cref{rem:one-dial} says why: at $C = 2$ the exponential is unreachable by construction, and what \cref{tab:effect-size} quantifies is the first point of that trajectory.

The regime of the table, geometric level-stratified profiles at level-constant clipped grades, is moreover exactly the hypothesis set of \cref{thm:stratified-separation}, so each tabulated gain is attained up to constants depending only on $(\nu,\mu)$ as a ratio of minimax risks over all estimators, throughout the window of \cref{lem:window}: the effect size the runs are powered for is backed by the two-sided separation, not only by the upper bounds of \cref{prop:ml-expressivity}.

The clip is the binding constraint, and it binds hardest exactly where the profiles diverge most. At $(0.8, 0.4)$ the clipped gain is within about $1\%$ of the unclipped one; at $(0.5, 0.2)$ the clip costs a third of it, and the top two levels saturate at $g = C$. The pattern is \cref{rem:one-dial} made quantitative: the more the corpus rewards grading, the more the clip withholds, and the $C$-sweep of \cref{subsubsec:configs} is the arm that recovers it.

The optimal grading is stable in a way the profiles are not. Across the whole range the multiplier on the syntactic level sits between $1.27$ and $1.32$, and the discourse level saturates at $C$ throughout. The ordering of $g^\star_C$, increasing in $l$, is invariant across every entry, which is \cref{rem:cs-taxonomy-apriori} recovered numerically: the estimates move the magnitudes and leave the direction alone.
\end{rem}

The bound $\Lambda^\star_C \geq \max\{\mathrm{BC}(\alpha,\tau)^2, C^{-2}\}$ of \cref{prop:clipped-selection}(iii) is a lower bound and is not tight at these parameters; at $(0.6, 0.3)$ it gives $0.459$ against the true $0.531$. The effect size the program is powered for is therefore the solution of the reduced program and not the bound. This is the operative use of \cref{prop:level-reduction}: the bound is what the theory guarantees, and the four-variable program is what the experiment must detect.

\subsection{Cost of the Offline Stage}
\label{subsec:cs-cost}

The accounting closes the case study. Steps (2) and (3) are one pass over a corpus for a covariance and one ridge regression on a treebank of order $10^6$ tokens. Step (4) is a sum of $d$ square roots. Step (5) is one inner product per arm of \cref{tab:ladder}, six in total. Step (6) is the four-variable convex program of \cref{prop:level-reduction}. Against the $100$B tokens and $150$K steps of \cref{subsubsec:training}, the whole of it is free, and it is free in the sense that matters: it does not consume the resource the framework is spending its conditioning to buy.

That asymmetry is the case study's main point. The procedure of \cref{sec-7} publishes, before the first GPU-hour, a per-task affinity that can falsify \cref{assum:profile-divergence}, a certificate for each candidate grading, an ordering of the arms of \cref{tab:ladder}, and the effect size of \cref{tab:effect-size} that the runs must then detect. Whatever the runs return, the prediction they test was fixed in advance by a computation anyone can repeat, and that is a property the ungraded baseline cannot have, not because its profiles are unfavourable, but because the gauge invariance of \cref{prop:admissible-grades}(i) puts its admissibility inner product at $0$ identically.

\section{The Experimental Programme}
\label{sec-6}

\Cref{sec-3,sec-4,sec-5} developed the Graded Large Language Model: graded attention preserving the asymptotic cost of standard attention (\cref{prop:complexity}) and compiling to a standard transformer at deployment (\cref{cor:zero-overhead}), multi-level graded embeddings with an exact sample-complexity accounting (\cref{prop:ml-expressivity,cor:stratified}) and a closed-form characterisation of the grades that purchase it (\cref{prop:admissible-grades}), and a graded training objective whose consistency conditions are characterised in \cref{prop:properness}. This section specifies the training program that will test the framework, states the predictions the theory supports, and marks explicitly the hypotheses on which each depends. No experiments have been conducted at the time of writing, and this section reports none.

The program is staged so that the cheapest test comes first. Its first stage is a measurement rather than an experiment: both profiles of \cref{def:profiles} can be estimated offline, so the Bhattacharyya affinity $\mathrm{BC}(\alpha, \tau)$ that \cref{prop:ml-expressivity} identifies with the entire gain of grading is reported per task, and every candidate grading is certified against \cref{eq:cone-condition} at the cost of one inner product, before any model is pretrained. That stage is the selection procedure of \cref{sec-7}, carried out for language in \cref{sec-8}: the arms of the ladder are certified, the educated grading $q^\star_C$ is computed by \cref{prop:clipped-selection,prop:level-reduction}, and the effect size the runs below must detect is fixed in advance. The present section takes those quantities as given: it fixes the configurations, pretrains at 345M and 7B, and evaluates. A framework whose governing hypothesis is measurable before its first GPU-hour should be tested in that order.

We state the status of each prediction explicitly. Several predictions, in particular any reduction in the data required to reach a given generalisation target, follow from \cref{prop:ml-expressivity,prop:admissible-grades} only under hypotheses that are empirical, not mathematical: that the energy profiles of language diverge (\cref{assum:profile-divergence}), and that the scope limitations of \cref{rem:sample-complexity-scope} do not overturn the single-layer analysis at the full stack. Those predictions are marked \emph{conditional} below and name the hypotheses they depend on.

\subsection{Planned Experimental Setup}
\label{subsec:setup}

\subsubsection{Model configurations}
\label{subsubsec:configs}

Two scales are planned. \textsc{GLLM-Medium} ($\sim$345M parameters) mirrors GPT-2-medium at 24 layers, $d = 1024$, $H = 16$, with EG-MHSA substituting standard MHSA and MLGE substituting standard embeddings at $L = 4$ levels and uniform allocation $d_l = 256$. \textsc{GLLM-7B} ($\sim$7B parameters) mirrors LLaMA-7B at 32 layers, $d = 4096$, $H = 32$, with $L = 4$ and $d_l = 1024$. Both use the base $\lambda = e^{1/d_k}$ fixed in \cref{subsec:eg-mhsa}, with grade clipping enforcing $\lambda^{q_{\max}} \leq C$ at $C = 2$. The clip is the framework's single cost--benefit parameter (\cref{rem:one-dial}): it caps the optimisation penalty of \cref{thm:convergence} at a factor of $C$, uniformly in $L$ and in the profiles, and caps the token gain at $C^2$ by \cref{prop:clipped-selection}. At \textsc{GLLM-Medium} the clip is swept over $C \in \{1.5, 2, 4\}$ as an explicit arm, tracing the price--purchase curve of \cref{rem:one-dial} rather than reporting a single point on it.

\subsubsection{Grade initialisation}
\label{subsubsec:grade-init}

Grades are initialised from part-of-speech statistics computed offline over the pretraining corpus, assigning high grades to open-class types and low grades to closed-class function words, and are certified against \cref{eq:cone-condition} as in \cref{subsec:cs-certification}. Per-type initialisation is not a matter of convenience: \cref{prop:properness}(3) establishes that a grade depending on the realised target and its context jointly, a constituency parse depth for instance, renders the training objective inconsistent for the corpus conditional, and by \cref{rem:tilt-cancellation} the resulting context-dependent tilt admits no matching by the fixed output grades. Per-type grades fall under \cref{prop:properness}(2), the case in which the matching is available. A parse-depth initialisation appears only in the ablation arm described in \cref{subsec:graded-loss}.

\subsubsection{Parameter overhead}
\label{subsubsec:overhead}

The overhead comprises the projection matrices $\{P_l\}$ at $\sum_l d\, d_l = d^2$ parameters, the head grades $\{q_h\}$ at $K H d_k$, the level grades $\{q_l\}$ at $d$, and the output grades $q_{\mathrm{out}}$ at $|\mathcal{V}|$, so the total is $O(d^2 + KHd_k + |\mathcal{V}|)$ and is dominated by $d^2$. At \textsc{GLLM-Medium} this is $1.05\text{M} + 24.6\text{K} + 1\text{K} + 50.3\text{K} \approx 1.13\text{M}$, or $0.33\%$ of $345\text{M}$; at \textsc{GLLM-7B} it is $16.8\text{M} + 131\text{K} + 4\text{K} + 32\text{K} \approx 17.0\text{M}$, or $0.24\%$ of $7\text{B}$. The overhead is below one percent at both scales and negligible against the $12d^2$ per block of \cref{prop:ffn-fraction}, but this follows from $d^2 \ll N$ at these configurations rather than from any sublinearity in $d$, and must be recomputed rather than assumed at other scales. By \cref{cor:zero-overhead,rem:mlge-absorption} the figure is a training cost only, and it is an upper bound on the lifetime cost of the prior.

\subsubsection{Baselines and the ladder}
\label{subsubsec:baselines}

The arms are those of the selection ladder, \cref{tab:ladder}: uniform (the standard transformer), random-grade, taxonomic, part-of-speech, educated ($q^\star_C$), and oracle, all of identical architecture, data, compute, and optimisation budget, distinguished only by what their grades consulted. The uniform arm is realised as identically sized \textsc{GPT-2-Medium} and \textsc{LLaMA-7B} models with standard MHSA and embeddings. The random-grade arm is the control that isolates the prior from the parameterization: by \cref{prop:admissible-grades}(iv) a grade direction drawn independently of $\alpha - \tau$ has $\langle v, \alpha - \tau\rangle$ of either sign, so it is predicted to straddle the uniform arm rather than to improve on it, and any gain the informed arms show over it is attributable to the prior and not to the extra parameters of \cref{subsubsec:overhead}. Two further baselines sit outside the ladder: a probe-augmented uniform model, adding syntactic probing classifiers over frozen representations to indicate what post-hoc structural annotation recovers without architectural grading, and a linear-grading variant evaluated alongside the exponential to separate their contributions.

\subsubsection{Training}
\label{subsubsec:training}

\textsc{GLLM-Medium} is pretrained on a 100B-token subset of The Pile \cite{gao2020pile} for 150K steps at batch size 512. \textsc{GLLM-7B} is pretrained on 300B tokens. Both use AdamW \cite{loshchilov2017decoupled} with $\eta_{\max} = 6\times10^{-4}$ and the cosine schedule of \cite{loshchilov2017sgdr}, under the objective \cref{eq:total-loss} with the boundedness condition \cref{eq:reg-condition} of \cref{subsec:grade-reg}. Distributed training uses sharded data parallelism with optimiser-state partitioning \cite{rajbhandari2020zero} and pipeline parallelism \cite{huang2019gpipe}.

\subsubsection{Context length}
\label{subsubsec:context-length}

Training sequence lengths must be set by the evaluation the configuration is intended to support. Discourse-level grading at $l = 3$ is the setting in which MLGE is expected to matter most, and it cannot be assessed at a context of 1024 or 2048 tokens; nor does the graded sinusoidal encoding \cref{eq:graded-pe-full} extrapolate beyond training length. Long-context evaluation therefore requires training at the target context, with the compute that implies, and a rotary or otherwise extrapolable graded positional encoding. Absent that, the long-context benchmarks below are out of scope for these configurations and the corresponding predictions are untestable.

\subsection{Planned Benchmarks}
\label{subsec:benchmarks}

\subsubsection{Intrinsic metrics}
\label{subsubsec:intrinsic}

Held-out perplexity \cref{eq:perplexity} on Penn Treebank and WikiText-103 gives short-range coverage; PG-19 gives long-document coverage at contexts the configurations support. Perplexity is the \emph{unweighted} loss, and the graded model does not optimise it. The configurations of \cref{subsubsec:grade-init} enforce the matched-tilt condition of \cref{rem:tilt-cancellation}, under which the graded and unweighted objectives share a population minimizer and the perplexity comparison is therefore made on equal terms. The parse-depth arm of the component ablation of \cref{subsubsec:ablation} is the single arm in which that matching is unavailable, by \cref{prop:properness}(3), and there any perplexity gap is attributed to the grade-tilt, the Kullback--Leibler divergence between the corpus conditional and its context-dependent tilt, and not to the inductive bias. Every perplexity comparison states which condition holds.

\subsubsection{Sample efficiency}
\label{subsubsec:sample-efficiency}

The quantity \cref{prop:ml-expressivity} speaks to is tokens, not steps, and the program measures it directly. For each task of \cref{subsubsec:hierarchical-tasks} we record the held-out error of every arm as a function of pretraining tokens consumed, and report the ratio of token counts at which each first attains a fixed target error. \Cref{prop:ml-expressivity}(ii) predicts that ratio to be $\Lambda(g)$, estimated in advance by \cref{subsec:cs-profiles} and bounded below by $C^{-2}$ at the clipped configurations (\cref{prop:clipped-selection}); the scope of \cref{rem:sample-complexity-scope} means the prediction is of the ordering and the sign, not of the constant. This is the measurement on which the framework's central claim stands, and it is reported separately from steps-to-loss for the reason given in \cref{rem:optimisation-cost}.

\subsubsection{Hierarchical tasks}
\label{subsubsec:hierarchical-tasks}

CoLA (linguistic acceptability) and MultiRC (multi-sentence reading comprehension) are the primary targets, rewarding syntactic sensitivity and compositional inference respectively. BoolQ, CB, and WiC provide contrast at lower structural demand, distinguishing gains specific to hierarchy from gains distributed across tasks.

\subsubsection{Long-context tasks}
\label{subsubsec:long-context}

Subject to the context-length condition of \cref{subsubsec:context-length}, SCROLLS \cite{shaham2022scrolls}, NarrativeQA, and QUALITY test discourse-level grading.

\subsubsection{Ablation protocol}
\label{subsubsec:ablation}

Two ablations run orthogonally. The first isolates each graded component: EG-MHSA alone; MLGE alone; graded loss alone; and their combinations. It also carries the parse-depth arm, the intentionally uncalibrated grading of \cref{subsubsec:grade-init}, retained to measure the tilt cost that \cref{prop:properness}(3) predicts. The second is the selection ladder of \cref{tab:ladder}, varying only the provenance of the grades across the six arms of \cref{subsubsec:baselines}: uniform, random, taxonomic level-constant, POS-based, the educated $q^\star_C$ of \cref{prop:clipped-selection} computed from the estimated profiles, and oracle. Each arm reports its admissibility inner product $\langle q^{\mathrm{init}}, \hat\alpha - \hat\tau \rangle$ from \cref{subsec:cs-certification} alongside its result, so that the ladder tests \cref{eq:cone-condition} directly: the arms are predicted to order by that inner product, an arm with a negative value is predicted to underperform the uniform baseline, and the interpretation of each possible outcome is set out in \cref{subsec:selection-ladder}. For the random arm the inner product is of either sign on any given draw, so its predicted position is fixed only once its draw is measured; the aggregate prediction is the straddle of \cref{subsubsec:baselines}. The level count is swept over $L \in \{1, 2, 4, 8, 16\}$; the choice $L = 4$ reflects a linguistic taxonomy, and the sweep is exploratory rather than confirmatory.

\subsubsection{Interpretability}
\label{subsubsec:interpretability}

Let $\mathcal{A}_{\mathrm{gold}}$ be the set of gold dependency arcs in a held-out treebank and $A_{h,ij}$ the attention weight from token $i$ to token $j$ in head $h$. Define the structural attention mass
\begin{equation}
\label{eq:sam}
  \mathrm{SAM}(h) = \frac{1}{|\mathcal{A}_{\mathrm{gold}}|} \sum_{(i,j) \in \mathcal{A}_{\mathrm{gold}}} A_{h,ij},
\end{equation}
the mean attention weight placed on gold arcs. This is a recall-like quantity, not a precision, and it is not normalised against the attention placed off gold arcs; it should be reported alongside the total attention mass per head. Grade-stratified accuracy $\mathrm{Acc}(l^*)$ partitions benchmark instances by maximum dependency nesting depth $l^*$.

\subsection{Predictions}
\label{subsec:predictions}

Every prediction below names the result it follows from.

\subsubsection{Zero inference overhead} (established). By \cref{cor:zero-overhead,rem:mlge-absorption}, the grading transformations are absorbed into the projection and embedding weights by an invertible change of parameters. A trained GLLM therefore compiles to a standard transformer of identical architecture and identical inference cost, and the $0.33\%$ and $0.24\%$ training overheads of \cref{subsubsec:overhead} vanish entirely at deployment. This follows from the construction and is not subject to experimental uncertainty. By \cref{rem:overhead-comparison} it distinguishes grading from every variant of \cref{tab:arch}, each of which carries its inductive bias in the computation and pays for it at every forward pass.

\subsubsection{The educated grading is certifiably admissible} (established given the profile estimates). By \cref{prop:admissible-grades}(iv) and \cref{rem:certified-init}, a candidate grading enters the cone $\mathcal{Q}_+$ to first order exactly when $\langle q^{\mathrm{init}}, \hat\alpha - \hat\tau \rangle > 0$, and we predict a strictly positive value for the POS and educated arms on CoLA and MultiRC. Only the sign must survive the estimation error of \cref{subsec:cs-profiles}; the magnitudes need not. The ungraded baseline scores $0$ on this quantity identically, by the gauge invariance of \cref{prop:admissible-grades}(i), and cannot score otherwise; \cref{prop:admissible-grades}(iii) is the interpretation of that zero, as the boundary position of the uniform grades. The prediction is reported before pretraining and is falsified by a non-positive value.

\subsubsection{Selection-ladder ordering} (conditional on \cref{assum:profile-divergence} and the scope of \cref{rem:sample-complexity-scope}). The arms of \cref{tab:ladder} are predicted to rank, in the token-efficiency measurement of \cref{subsubsec:sample-efficiency}, in the order of their admissibility inner products, computed offline in \cref{sec-8} and published before training. This is the central prediction of the program and the sharpest, because an ordering fixed in advance is far harder to satisfy by accident than any single pairwise comparison. The partition of outcomes and what each would establish is given in \cref{subsec:selection-ladder}.

\subsubsection{Hierarchical task gains} (conditional on \cref{assum:profile-divergence} and the scope of \cref{rem:sample-complexity-scope}). Gains are predicted to concentrate where the estimated $\mathrm{BC}(\alpha, \tau)$ is smallest, by \cref{assum:profile-divergence} on CoLA and MultiRC, and to diminish on lexically dominated tasks, where the profiles approach agreement and \cref{prop:ml-expressivity}(iv) forces $\Lambda^\star \to 1$. That contrast is what makes the gain attributable to hierarchy rather than to the parameterization, and the measurements of \cref{sec-8} turn it from a qualitative expectation into a quantitative one: the predicted ordering of per-task gains is the ordering of the measured affinities, fixed in advance of training, with each gain bounded by $C^2$ at the clipped configurations (\cref{prop:clipped-selection}). The measurement is that of \cref{subsubsec:sample-efficiency}, in tokens to a fixed error. The gain lives in the norm-controlled regime, which is the regime the runs occupy: \cref{eq:total-loss} is optimised under the decoupled weight decay of \cref{subsec:optimisation}, so the comparison of \cref{prop:ml-expressivity}, with each class evaluated at the smallest budget at which it represents the target, is the one the experiment realises.

\subsubsection{Depth-stratified robustness} (conditional on \cref{assum:profile-divergence} and the scope of \cref{rem:sample-complexity-scope}). Given \cref{prop:ml-expressivity} and profiles that diverge in the sense of \cref{assum:profile-divergence}, the graded model's accuracy is predicted to decay more slowly in nesting depth $l^*$ than the ungraded baseline's, at equal data. The prediction is qualitative: no result in this manuscript supports a rate for $\mathrm{Acc}(l^*)$, and the sample-complexity ratio of \cref{prop:ml-expressivity} governs the data required to reach a target error, not the shape of the accuracy-versus-depth curve. The two are distinct quantities and should not be conflated.

\subsubsection{Head specialisation} (conditional on \cref{eq:total-loss}). The repulsive diversity term of \cref{eq:total-loss} rewards distinct grade tuples, so heads are predicted to acquire distinguishable gradings and, if the linguistic prior is informative, to align differentially with dependency categories, observable as $\mathrm{SAM}(h)$ varying systematically with $q_h$. This is a prediction of the \emph{regulariser}, not of the algebraic structure of EG-MHSA, which by \cref{rem:absorption} is silent on the question. Under an attractive penalty the prediction reverses.

\subsubsection{Resource separation} (established). \Cref{thm:convergence,lem:graded-smoothness} give $\varsigma \leq \lambda^{q_{\max}} \varsigma_{\mathrm{unif}} + c_{\mathrm{reg}}$ with $\lambda^{q_{\max}} \geq 1$: grading degrades the smoothness constant. We therefore predict that graded training does not reduce, and may increase, the optimisation steps to a fixed training loss at matched hyperparameters. The two resources are priced by the same constant and in the framework's favour: the step-count penalty is capped at $C$ by the grade clipping of \cref{subsubsec:configs}, uniformly in $L$, while the token saving is capped at $C^2$ by \cref{prop:clipped-selection}, a price linear in the clip against a purchase quadratic in it (\cref{rem:one-dial}), with the exponential regime of \cref{cor:stratified} reached as the clip opens. The $C$-sweep of \cref{subsubsec:configs} traces this curve directly. Since steps are the cheap resource and high-quality tokens the expensive one, the trade favours the graded model at every setting of the clip. Any reduction in \emph{tokens to a fixed generalisation target} arises from \cref{prop:ml-expressivity,prop:admissible-grades} and is visible as a gap in held-out rather than training loss. These are different measurements and the program reports them separately, in \cref{subsubsec:sample-efficiency} and \cref{subsubsec:intrinsic} respectively.

\subsection{Remaining Theoretical Work}
\label{subsec:settling}

The framework's central quantity is settled in the regime the program occupies. \Cref{thm:stratified-separation} establishes the separation of \cref{con:minimax-separation} for level-stratified geometric profiles at squared loss: a minimax lower bound over the uniform target class, with the estimator unrestricted, met by the matching upper bound of \cref{lem:envelope} throughout the window of \cref{lem:window}, so that on that window the ratio $\Lambda(g)$ of \cref{prop:ml-expressivity} is a genuine separation between the graded prior and its absence rather than a comparison of bounds. What remains is the general case: \cref{con:minimax-separation} for arbitrary profiles and Lipschitz losses, whose proof requires extending the packing argument of \cref{lem:envelope} beyond level-homogeneous designs. It is independent of the experiments above and could be completed before any of them.

Three further directions follow from limitations recorded above. The first limitation of \cref{rem:sample-complexity-scope} calls for the layerwise extension of \cref{prop:ml-expressivity} through the graded stack. \Cref{subsec:graded-forward} records that the level-decomposed feed-forward block performs no cross-level mixing, and that directed transport between levels requires block-triangular maps rather than block-diagonal ones; the morphic structure of \cite{sh-111} supplies them. And \cref{subsubsec:context-length} makes an extrapolable graded positional encoding a precondition for testing discourse-level grading at all, which is the setting in which \cref{cor:stratified} predicts the largest effect at open clip.

\bibliography{sh-109}

\end{document}